\documentclass[usenames,dvipsnames,final,12pt]{colt2018} % Anonymized submission
% \documentclass{colt2017} % Include author names

% The following packages will be automatically loaded:
% amsmath, amssymb, natbib, graphicx, url, algorithm2e

\usepackage{times}

\usepackage{graphicx}

\usepackage{floatrow}

\usepackage{upgreek}

\usepackage{xcolor}
\usepackage{calc}

\usepackage[normalem]{ulem} % for using strikethrough - i.e, for crossing out text

\newcommand{\vop}{VoP}
\newcommand{\bE}{\mathbb{E}}
\newcommand{\cF}{\mathcal{F}}
\newcommand{\dReal}{\mathbb{R}^d}
\newcommand{\Real}{\mathbb{R}}

\newcommand{\hth}{h_1}

% Terms associated with theta
\newcommand{\bt}{b_1}
\newcommand{\vt}{v_1}
\newcommand{\Xt}{X_1}
\newcommand{\Wt}{W_1}
\newcommand{\Tt}{\Upgamma_1}
\newcommand{\Mt}{M^{(1)}}
\newcommand{\st}{\alpha}
\newcommand{\mt}{m_1}
\newcommand{\lt}{q_1}
\newcommand{\et}{\epsilon_1}
\newcommand{\etg}{R_1^{\textrm{gap}}}
\newcommand{\Rto}{R_1^\textrm{out}}
\newcommand{\Rti}{R_1^{\textrm{in}}}
\newcommand{\thS}{\theta^*}
\newcommand{\rt}{\rho}
\newcommand{\rtS}{\rho^{*}}
\newcommand{\bart}{\bar{\theta}}
\newcommand{\Et}{E_1}
\newcommand{\Ei}{E_i}
\newcommand{\Kt}{K_1}
\newcommand{\tSol}[1]{\theta(#1, \tI{n_0}, \theta_{n_0})}
\newcommand{\zetD}{\zeta^{\dt}}
\newcommand{\zetM}{\zeta^{\md}}
\newcommand{\zetT}{\zeta^{\te}}
\newcommand{\EtD}{\Et^{\dt}}
\newcommand{\EtM}{\Et^{\md}}
\newcommand{\EtT}{\Et^{\te}}
\newcommand{\EiD}{\Ei^{\dt}}
\newcommand{\EiM}{\Ei^{\md}}
\newcommand{\LtD}{L_1^{\dt}}
\newcommand{\LtM}{L_1^{\md}}
\newcommand{\LtT}[1]{L_{1 #1}^{\te}}
\newcommand{\Lt}[1]{L^{\theta}_{#1}}

% Terms associated with w or z
\newcommand{\vw}{v_2}
\newcommand{\Tw}{\Upgamma_2}
\newcommand{\Ww}{W_2}
\newcommand{\Mw}{M^{(2)}}
\newcommand{\sw}{\beta}
\newcommand{\mw}{m_2}
\newcommand{\lz}{q_2}
\newcommand{\lzp}{q_2^\prime}
\newcommand{\ez}{\epsilon_2}
\newcommand{\ezg}{R_2^{\textrm{gap}}}
\newcommand{\Rwo}{R_2^w}
\newcommand{\Rzo}{R_2^{\textrm{out}}}
\newcommand{\Rzi}{R_2^{\textrm{in}}}
\newcommand{\rz}{\nu}
\newcommand{\rzS}{\nu^{*}}
\newcommand{\chizD}{\chi^{\dt}}
\newcommand{\chizM}{\chi^{\md}}
\newcommand{\chizS}{\chi^{\sd}}
\newcommand{\barz}{\bar{z}}

\newcommand{\Ez}{E_2}
\newcommand{\Lz}{L^z}
\newcommand{\LzD}{L_2^{\dt}}
\newcommand{\LzM}{L_2^{\md}}
\newcommand{\LzS}{L_2^{\sd}}
\newcommand{\Kz}{K_2}
\newcommand{\EzD}{\Ez^{\dt}}
\newcommand{\EzM}{\Ez^{\md}}
\newcommand{\EzS}{\Ez^{\sd}}
\newcommand{\zSol}[1]{z(#1, \sI{n_0}, z_{n_0})}

% Other definitions

\newcommand{\N}[1]{N_{0,#1}}
\newcommand{\n}[1]{N_{1,#1}}
\newcommand{\aftE}{\text{after}}
\newcommand{\midE}{\text{mid}}
\newcommand{\lmin}{q_{\min}}
\newcommand{\lm}{q}

\newcommand{\dt}{\text{de}}
\newcommand{\md}{\text{md}}
\newcommand{\sd}{\text{sd}}
\newcommand{\te}{\text{te}}
\newcommand{\df}{\mathrm{d}}

\newcommand{\Id}{\mathbb{I}}

\newcommand{\tI}[1]{t_{#1}}
\newcommand{\sI}[1]{s_{#1}}

\newcommand{\cE}{\mathcal{E}}
\newcommand{\cA}{\mathcal{A}}
\newcommand{\norm}[1]{\left\lVert#1\right\rVert}

\newcommand{\zS}{z^*}

\newcommand{\Rw}{R_2^w}
\newcommand{\Rs}{R^*}
\newcommand{\rl}{\textrm{real}}
\newcommand{\Jt}{J^\theta}
\newcommand{\Jz}{J^z}
\newcommand{\It}{I^\theta}
\newcommand{\Iz}{I^z}

\newcommand{\gal}[1]{#1}
\newcommand{\gugan}[1]{#1}

\title[Two-Timescale Stochastic Approximation]{Finite Sample Analysis of Two-Timescale Stochastic Approximation with Applications to Reinforcement Learning}

%\title[Two-Timescale Stochastic Approximation]{Two-Timescale Stochastic Approximation Convergence Rates with Applications to Reinforcement Learning}

\usepackage{times}
% Use \Name{Author Name} to specify the name.
% If the surname contains spaces, enclose the surname
% in braces, e.g. \Name{John {Smith Jones}} similarly
% if the name has a "von" part, e.g \Name{Jane {de Winter}}.
% If the first letter in the forenames is a diacritic
% enclose the diacritic in braces, e.g. \Name{{\'E}louise Smith}

% Two authors with the same address
% \coltauthor{\Name{Author Name1} \Email{abc@sample.com}\and
%  \Name{Author Name2} \Email{xyz@sample.com}\\
%  \addr Address}

% Three or more authors with the same address:
% \coltauthor{\Name{Author Name1} \Email{an1@sample.com}\\
%  \Name{Author Name2} \Email{an2@sample.com}\\
%  \Name{Author Name3} \Email{an3@sample.com}\\
%  \addr Address}

% Authors with different addresses:
\coltauthor{\Name{Gal Dalal}\thanks{Equal contribution.} \Email{gald@campus.technion.ac.il}
\addr Technion, Israel
\\
\Name{Bal\'azs Sz\"or\'enyi}\footnotemark[1] \Email{szorenyi.balazs@gmail.com}
	\addr  Yahoo Research, NYC
\\
\Name{Gugan Thoppe}\footnotemark[1]
	\Email{gugan.thoppe@gmail.com}
	\addr Duke University, USA
\\
\Name{Shie Mannor}
	\Email{shie@ee.technion.ac.il}
	\addr Technion, Israel
}

\begin{document}

%\author{Gal Dalal$^*$,
%Bal\'azs Sz\"or\'enyi$^*$,
%Gugan Thoppe$^*$,
%Shie Mannor% <-this % stops a space
%\thanks{M. Shell was with the Department
%of Electrical and Computer Engineering, Georgia Institute of Technology, Atlanta,
%GA, 30332 USA e-mail: (see http://www.michaelshell.org/contact.html).}% <-this % stops a space
%\thanks{J. Doe and J. Doe are with Anonymous University.}% <-this % stops a space
%\thanks{Manuscript received April 19, 2005; revised August 26, 2015.}}

\maketitle

% As a general rule, do not put math, special symbols or citations
% in the abstract or keywords.

\begin{abstract}
Two-timescale Stochastic Approximation (SA) algorithms are widely used in Reinforcement Learning (RL). Their iterates have two parts that are updated using distinct stepsizes. In this work, we develop a novel recipe for their finite sample analysis. Using this, we provide a concentration bound, which is the first such result for a two-timescale SA. The type of bound we obtain is known as ``lock-in probability''. We also introduce a new projection scheme, in which the time between successive projections increases exponentially. This scheme allows one to elegantly transform a lock-in probability into a convergence rate result for projected two-timescale SA. From this latter result, we then extract key insights on stepsize selection.  As an application, we finally obtain convergence rates for the projected two-timescale RL algorithms GTD(0), GTD2, and TDC.
\end{abstract}

\section{Introduction}

Stochastic Approximation (SA) is the subject of a vast literature, both theoretical and applied \citep{kushner1997stochatic}. It is used for finding optimal points or zeros of a function for which only noisy access is available. Consequently, SA lies at the core of  machine learning; \gugan{in particular, it is widely used in Reinforcement Learning (RL) and, more so, when function approximation is used.}

\gal{A powerful, commonly used} analysis tool for SA algorithms \gal{is} the Ordinary Differential Equation (ODE) method \citep{borkar2000ode}. \gal{Its underlying idea} is that, under the right conditions, \gugan{the} noise effects \gugan{eventually} average out and the SA iterates \gugan{then} closely track the trajectory of the so-called ``limiting ODE''. \gal{The ODE method is classically used as} a convenient recipe for showing \gal{asymptotic SA} convergence. \gugan{The RL literature, therefore, has several results of such type,} especially when the state-space is large and function approximation is used \citep{sutton2009convergent,sutton2009fast,sutton2015emphatic,bhatnagar2009natural}. \gal{Contrarily, f}inite sample analyses for SA are scarce; in fact, they are nonexistent in the case of two-timescale SA. This \gal{provides} the motivation for our work.
%	\red{Do we? Not just for RL? - yes, but that's what is written}

\subsection{Related Work}\label{sec:related_rl_work}
%Two-timescale SA methods are prominent in RL \citep{peters2008natural,bhatnagar2009natural,sutton2009fast}. Nonetheless, as mentioned before, there are no finite sample analyses for these types of algorithms.

%\gugan{Below we briefly survey related results for single-timescale SA and asymptotic convergence results for two-timescale RL algorithms.}

A broad, rigorous study of SA is given in \citep{borkar2008stochastic}; in particular, it contains concentration bounds for single-timescale methods. A more recent work \citep{thoppe2015concentration} obtains tighter concentration bounds under weaker assumptions for single-timescale SA using a variational methodology called Alekseev's Formula. In the context of \gal{single-timescale RL}, \cite{kondathesis, korda2015td, dalal2018finite} discuss convergence rate\gal{s} for TD(0).

%Next, we relate to the relevant RL literature on two time-scale methods. We partition them into two principal classes:
\gal{Convergence rate results for two-timescale SA are, on the other hand, relatively scarce. Asymptotic convergence rates appear in \citep{spall1992multivariate,gerencser1997rate,konda2004convergence,mokkadem2006convergence}; these are of different nature than the finite-time analysis conducted in our work}. \gal{In the case of two-timescale RL methods, relevant literature} can be partitioned into two principal classes: actor-critic and gradient Temporal Difference (TD). In an actor-critic setting, a policy is being evaluated by the critic in the fast timescale, and improved by the actor in the slow time-scale; \gal{two asymptotic convergence guarantees appear in} \citep{peters2008natural,bhatnagar2009natural}. The second class, gradient TD methods, was introduced in \citep{sutton2009convergent}. This work presented the GTD(0) algorithm, which is \gal{a} gradient descent \gal{variant} of TD(0); being applicable to the so-called off-policy setting, it has a clear advantage over TD(0). Later variants, GTD2 \ and TDC, were reported to be faster than GTD(0) while enjoying its benefits. \gugan{These three methods} were shown to \gugan{asymptotically} converge in the case of linear and non-linear function approximation \citep{sutton2009convergent, sutton2009fast, bhatnagar2009convergent}.
\gugan{Separately,} there also exists a convergence rate result for altered versions of the GTD family \citep{liu2015finite}. There\gal{,} projections are used
 %for keeping the iterates in a convex set
and the learning rates are set to a fixed ratio. The latter makes the altered algorithms single-timescale variants of the original ones.

\subsection{Our Contributions}

Our main contributions are the following:
\begin{itemize}

\item \gugan{Inspired by \citep{borkar2008stochastic}, we develop a novel recipe for finite sample analysis of linear two-timescale SA. An initial key step here is a transformation of the iterates (see Remark~\ref{rem:AnalysisApproach}), which we believe can be elevated to general (non-linear) two-timescale settings. Then, by employing the Variation of Parameters method, we obtain a tighter bound on the distance between the SA trajectories and suitable limiting ODE solutions than the one handled in \citep{borkar2008stochastic}.  %do a Variation of Parameters, tight analysis, employ a martingale concentration bound.
    }

\item \gugan{Using the above recipe, we obtain a concentration bound for linear two-timescale SA (see Theorem~\ref{thm:condMain}); this is the first such result for two-timescale SA of any kind. In literature, such concentration bounds are also known as ``lock-in probability".}

\item \gugan{Additionally, we introduce a novel projection scheme, in which the time between successive projections progressively doubles\gal{; w}e refer to \gal{it} as ``sparse projection". This scheme enables one to elegantly transform a concentration bound, of the type we obtain, into a \gal{convergence} rate for projected two-timescale SA (see Theorem~\ref{thm:SparseProj}). We stress the strength of this tool in bridging the gap between two research communities\gal{:} those who are interested in lock-in probabilities/concentration bounds\gal{,} and those who care about convergence rates.}

\item \gugan{As an application, we obtain convergence rates for the sparsely projected variants of two-timescale RL algorithms: GTD(0), GTD2, and TDC. This is the first finite time result} for the above algorithms in their true two-timescale form (see Remark~\ref{rem: true 2TS}).

\item Finally, we do away with the usual square summability assumption on stepsizes (see Remark~\ref{remark: squre summability}). Therefore, our tool is relevant for a broader family of stepsizes. An example of its usefulness is Polyak-Ruppert-averaging with constant stepsizes \citep{defossez2014constant,lakshminarayanan2018linear}, whose behavior, we believe, is similar to two-timescale algorithms with slowly-decaying non-square-summable stepsizes (e.g., $n^{-\alpha}$ with $\alpha$ close to 0).

%	\item  Analysis of single-timescale follows automatically from our approach, and is applicable to all linear single-timescale SA.

\end{itemize}

%\gal{TODO: Do we want to add our $n^{-1/3}$ educated guess as a contribution? If not here, we should mention it in the intro and/or abstract}.

\section{Preliminaries}
\label{sec:2TSSetup}
\gugan{Here} we present the \gugan{linear} two-timescale SA \gugan{paradigm}, state our goal, and list our assumptions.

A generic \gal{linear} two-timescale SA is
\begin{eqnarray}
\theta_{n + 1} &  = & \theta_n + \st_n [h_1(\theta_n, w_n) + \Mt_{n + 1}] \enspace, \label{eq:theta_iter}\\
w_{n + 1} & = & w_n + \sw_n [h_2(\theta_n, w_n) + \Mw_{n + 1}] \enspace,  \label{eq:w_iter}
\end{eqnarray}
where $\st_n, \sw_n \in \Real$ are stepsizes, $M^{(i)}_n \in \dReal$ denotes noise, and \gugan{$h_i: \dReal \times \dReal \to \dReal$ has the form}
\begin{equation}
\label{eqn:lht}
h_i(\theta, w) = v_i - \Upgamma_i \theta - W_i w
\end{equation}
for a vector $v_i \in \dReal$ and matrices $\Upgamma_i, W_i \in \mathbb{R}^{d \times d}.$
\begin{remark} \label{rem: true 2TS}
In this work, we are interested in the analysis of a ``true two-timescale process''. By this, we mean that $\Upgamma_2$ ought to be invertible \gugan{and that $\alpha_n/\beta_n \to 0.$ The first condition couples the two iterates together; nevertheless, all the results in this work hold even without this restriction. The second condition is indeed assumed throughout (see \ref{assum:stepSize} below); we do not allow $\alpha_n/\beta_n$ to converge to a positive constant, as that would then turn \eqref{eq:theta_iter} and \eqref{eq:w_iter} into a single-timescale SA.}
\end{remark}

Our aim is to \gugan{finite time behaviour of} \eqref{eq:theta_iter} and \eqref{eq:w_iter} under the following assumptions.

{\renewcommand*\theenumi{$\pmb{\cA_\arabic{enumi}}$}
\begin{enumerate}
\item \label{assum:posDef} $\Ww$ and $\Xt := \Tt - \Wt \Ww^{-1} \Tw$ are positive definite (not necessarily symmetric).

\item \label{assum:stepSize} Stepsize sequences $\{\st_n\},~\{\sw_n\},$ and $\{\eta_n := \st_n/ \sw_n\}$ satisfy
\gugan{
\begin{equation}
\label{eqn:stepSizeBound}
\sum_{n = 0}^{\infty} \st_n = \sum_{n = 0}^{\infty} \sw_n =  \infty, \; \; \st_n,\sw_n,\eta_n   \leq 1,\mbox{ and } \lim_{n \to \infty} \st_n = \lim_{n \to \infty} \sw_n = \lim_{n \to \infty} \eta_n = 0.
\end{equation}
}
%
%		\begin{equation}
%		\label{eqn:2TSCond}
%		
%		\end{equation}
%
%where $\eta_n := \st_n/ \sw_n.$

\item \label{assum:Noise} $\{\Mt_n\}, \{\Mw_n\}$   are martingale difference sequences w.r.t. the \gugan{family} of $\sigma-$fields $\{\cF_n\},$ where
$
\cF_n = \sigma(\theta_0, w_0, \Mt_1, \Mw_1, \ldots, \Mt_n, \Mw_n).
$
\gugan{There exist} constants $\mt, \mw > 0$ so that
$
\norm{\Mt_{n + 1}} \leq \mt(1 + \norm{\theta_n} + \norm{w_n})$ and $ \norm{\Mw_{n + 1}} \leq \mw(1 + \norm{\theta_n} + \norm{w_n})
$
for all $n \geq 0.$
\end{enumerate}
}

\begin{remark}\label{remark: squre summability}
Notice that, unlike most works, $\sum_{n \geq 0} \st_n^2$ or $\sum_{n \geq 0} \sw_n^2$ need not be finite. Thus\gal{,} our analysis is applicable for a wider class of stepsizes; e.g., $1/n^\kappa$ with $\kappa \in (0,  1/2].$ \gugan{In \citep{borkar2008stochastic}, on which much of the existing RL literature is based on, the square summability assumption is due to the Gronwall inequality based approach. In contrast, for the specific setting here, we do a tighter analysis using the  variation of parameters formula \citep{lakshmikantham1998method}.}
\end{remark}

\gugan{We now briefly outline the ODE method from \cite[pp. 64-65]{borkar2008stochastic} for the analysis of \eqref{eq:theta_iter} and \eqref{eq:w_iter}, and also describe how our approach builds upon it. Since $\eta_n \to 0,$ $\{w_n\}$ is the fast transient and $\{\theta_n\}$ is the slow component. Therefore, the ODE that \eqref{eq:w_iter} might be expected to track is
\begin{equation}
\label{eqn:wLimODE}
\dot{w}(t) = \vw - \Tw \theta - \Ww w(t)
\end{equation}
for some fixed $\theta,$  and the ODE that \eqref{eq:theta_iter} might be expected to track is
\begin{equation}
\label{eqn:tLimODE}
\dot{\theta}(t) = \hth(\theta(t), \lambda(\theta(t))) = \bt - \Xt \theta(t),
\end{equation}
where $\bt := \vt - \Wt \Ww^{-1} \vw$ and $\lambda(\theta) := \Ww^{-1}[\vw - \Tw\theta].$
Due to \ref{assum:posDef}, the function $\lambda(\cdot)$ and $\bt$ are well defined.} Moreover, $\lambda(\theta)$ and $\thS := \Xt^{-1}\bt$ are unique globally asymptotically stable equilibrium points of \eqref{eqn:wLimODE} and \eqref{eqn:tLimODE}, respectively.

Lemma 1, \cite[p. 66]{borkar2008stochastic}, applied to \eqref{eq:theta_iter} and \eqref{eq:w_iter} gives $\lim_{n \to \infty} \norm{w_n - \lambda(\theta_n)} = 0$ under suitable assumptions. \gugan{Inspired by this,} we work with $\{z_n\}$ here instead of $\{w_n\}$ directly, where
{
\begin{equation}
\label{eq: z_n def}
z_n := w_n - \lambda(\theta_n) \enspace.
\end{equation}
}
Due to \eqref{eq:w_iter}, $\{z_n\}$ satisfies the \gugan{update} rule
\begin{equation}
\label{eq:z_iter}
z_{n + 1} = z_n - \sw_n \Ww z_n + \sw_n \Mw_{n + 1} + \lambda(\theta_n) - \lambda(\theta_{n + 1}) \enspace.
\end{equation}
\gugan{Hence, and as $\{\theta_n\}$ is the slow component, the limiting ODE that \eqref{eq:z_iter} might be expected to track is}
\begin{equation}
\label{eqn:zLimODE}
\dot{z}(s) = -\Ww z(s) \enspace.
\end{equation}
\gugan{As} $\Ww$ is positive definite (see \ref{assum:posDef}), $\zS = 0$ is the  globally asymptotically stable equilibrium of \eqref{eqn:zLimODE}.

\begin{remark}
\label{rem:AnalysisApproach}
\gugan{Using} $\{z_n\}$ instead of $\{w_n\}$ is the main reason why our approach works. \gugan{Observe that the limiting ODE in \eqref{eqn:wLimODE} varies as $\theta_n$ evolves;} \gugan{in contrast}, \eqref{eqn:zLimODE} remains unchanged. \gugan{Hence,} comparing \eqref{eq:z_iter} with \eqref{eqn:zLimODE} \gugan{is} easier than comparing \eqref{eq:w_iter} with \eqref{eqn:wLimODE}. \gugan{While this idea is indeed inspired by \cite[Lemma 1, p. 66]{borkar2008stochastic}, there \eqref{eq:z_iter} and \eqref{eqn:zLimODE} are not required to be explicitly dealt with.}
\end{remark}

\section{Main Results}

\begin{table}[t]
{\small
\begin{tabular}{l|l||l|l}
\hline
\gal{Constant}& \gal{Source}
&
\gal{Constant}& \gal{Source}
\\
\hline
$\Kt$, $\Kz$ & \eqref{eqn:zMatrixBd}, \eqref{eqn:tMatrixBd}
&
$\LtT{b} = \Kt \norm{\Wt}\norm{\Ww} \Rzi/ \lt$& Lemma~\ref{lem:EtDBd}
\\
$\lt, \lz$ & above \eqref{eq: time definition}
&
$\LtT{c} = \Kt \norm{\Wt} /\lt$& Lemma~\ref{lem:EtDBd}
\\
$\lmin=\min\{\lt,\lz\}$& \eqref{eq:lmdef}
&
$\LtM = \Kt \mt[1 + \Rs + \Rto + \Rw]$& Lemma~\ref{lem:EtDBd}
\\
$\lm \in (0, \lmin)$ & \eqref{eq:lmdef}
&
$\LtD = \frac{\Kt \norm{\Xt} \Jt}{\lt}$& Lemma~\ref{lem:EtDBd}
\\
$\Rto = \Rti + \frac{4\Kt \norm{\Wt} \Kz \Rzi}{(\lmin-\lm)e}$ & {\eqref{eq: defn of Rto}} %\gal{Theorem~\ref{thm:condMain}}
&
$\Lt{a} = \LtT{a}$& Lemma~\ref{lem:rtBd}
\\
$\Rs = \norm{\Xt^{-1}} \norm{\bt}$ & {\eqref{eq:R-star defn}}
&
$\Lt{c} = \LtT{c}$& Lemma~\ref{lem:rtBd}
\\
$\Rwo =\Rzo + \norm{\Ww^{-1}}$ & \eqref{eqn:wBd}
&
$\Lt{b} = \LtD + \LtM +  \norm{\Xt} \Rti + \LtT{b}$& Lemma~\ref{lem:rtBd}
\\
\hfill$\times \big[\norm{\vw} + \norm{\Tw}[\Rs + \Rto] \big]$&
&
$\LzM = \Kz\mw[1 + \Rs + \Rto + \Rw]$& Lemma~\ref{lem:EzDBd}
\\
$\etg = \Rto-\Rti$, {$\ezg = \Rzo-\Rzi$} & {\eqref{eq:epsilon smaller than R}}
&
$\LzS = \Kz\frac{\norm{\Ww^{-1}}\norm{\Tw}\Jt}{\lz}$& Lemma~\ref{lem:EzDBd}
\\
$\Jt = \norm{\Tt}[ \Rs  + \Rto] + \norm{\Wt} \Rw$
& Lemma~\ref{lem:ItkBd}
&
{$\LzD = \Kz\frac{\norm{\Ww}\Jz}{ \lz}$}& {Lemma~\ref{lem:EzDBd}}
\\
~~\hfill$+ \norm{\vt}  + \mt [1 + \Rs + \Rto + \Rw]$ &
&
$\Lz= \norm{\Ww} \Rzi + \LzD + \LzS + \LzM$& Lemma~\ref{lem:rzBd}
\\
$\Jz = \norm{\Ww} \Rzo  +  \norm{\Ww^{-1}} \norm{\Tw}\Jt$
& Lemma~\ref{lem:ItkBd}
&
\gal{$c_1 = (16 \Kt^2 d^3 [\LtM]^2 )^{-1}$} & Theorem~\ref{thm:condMain}
\\
\hfill$+ \mw(1 + \Rs + \Rto + \Rw)$ &
&
\gal{$c_2 = (9 \Kz^2 d^3 [\LzM]^2)^{-1}$} & Theorem~\ref{thm:condMain}
\\
$\LtT{a} = \Kt \norm{\Wt} \Kz \Rzi  \frac{1}{(\lmin-\lm)e}$& Lemma~\ref{lem:EtDBd}
&
\gal{$c_3 = (64 \Kz^2 [\Lt{c}]^2 d^3 [\LzM]^2)^{-1}$} & Theorem~\ref{thm:condMain}\\[0.5ex]
\hline
\end{tabular}
}
\caption{\label{tab: constants} A summary of constants and where they are defined. \gugan{Here $\mt, \mw >0$ are as in \ref{assum:Noise}, and $\Rti>0$ and $\Rzo>\Rzi>0$ are constants chosen as} { in Theorems~\ref{thm:condMain} and \ref{thm:SparseProj}}.
Note that constants in the left column do not depend on constants in the right column. Similarly, no constant depends on constants below it in the same column, or on $\et,$ $\ez$, {  $\{\st_k\}$ or $\{\sw_k\}$}.}
\end{table}

\gal{In this section, we give our two main results on two-timescale stochastic approximation} and \gugan{also introduce our projection scheme}. The first \gugan{result} is a general concentration bound for any stepsizes satisfying \ref{assum:stepSize}. \gugan{This result concerns the behavior of a two-timescale SA from some time index $n_0$ onwards and requires that the iterates be bounded at $n_0$. This is in the spirit of most existing concentration bounds/lock-in probability results for single-timescale methods \citep{borkar2008stochastic, thoppe2015concentration}. By projecting the iterates of a two-timescale SA via our novel projection scheme, we then transform our above concentration bound into a convergence rate result. This latter result applies for all time indices and the boundedness assumption holds here due to projections.}

\gal{
\subsection{A General Concentration Bound}
 }

Let $\lt,\lz>0$ be lower bounds on the real part of the eigenvalues of matrices $\Xt$ and $\Ww$, respectively. \gugan{For} $n \geq 0,$ let $a_n := \sum_{k = 0}^{n - 1} \st_k^{2} e^{ -2 \lt \sum_{i = k+1}^{n - 1}\alpha_i}$ and
$b_n := \sum_{k = 0}^{n - 1} \sw_k^{2} e^{ -2 \lz \sum_{i = k+1}^{n - 1}\beta_i}$. These sums are obtained from the Azuma-Hoeffding concentration bound \gugan{that we use later}. Also, let \begin{equation}
\label{eq: time definition}
s_n := \sum_{k=0}^{n-1} \beta_k, \quad \mbox{and} \quad t_n := \sum_{k = 0}^{n - 1} \alpha_k \enspace.
\end{equation}
\gugan{Theorem~\ref{thm:condMain} gives our concentration bound; the additional terms in it are defined in} Tables~\ref{tab: constants} and \ref{tab: epsilon dependent constants}.

%\subsection{Constants}
%\label{sec: constants}
%Dependencies on system parameters and constants $\Rti,\Rto,\Rzi,\Rzo,\lm$ and $\lmin$ are not stressed in the notations as these parameters are either fix or can be chosen to be fixed.
%We do highlight the dependencies on $\et$ and $\ez$ and the stepsizes though.

%\subsubsection{Constants Independent of $\et, \ez$ and the Stepsizes}

%\subsubsection{Terms Depending on $\et, \ez$ and the Stepsizes}
\label{sec: dependencies on stepsizes}

%\gugan{Suppose $\et$ and $\ez$ are chosen arbitrarily from $\left(0,\;\min\left\{\Rti/4, 4 \Lt{a}\right\}\right)$ and $(0,\Rzi/4)$, respectively.}

\begin{theorem}[Main Technical Result] \label{thm:condMain}
	\gugan{Fix some constants $\Rti,\Rzi > 0$ {and $\Rzo>\Rzi$}.
		Pick $\et \in \left(0,\min\left\{\Rti, 4 \Lt{a}\right\}\right)$ and $\ez \in (0,{\min(\Rzi,\Rzo-\Rzi)})$}.
	 \gal{Fix some $n_0 \geq N_0$ and $n_1 \geq N_1$, where $N_0 \equiv N_0(\et, \ez, \{\alpha_k\}, \{\beta_k\})$  and $N_1 \equiv N_1(n_0,\et,\ez,\{\alpha_k\},\{\beta_k\})$ are as in Table~\ref{tab: epsilon dependent constants}. Consider the process defined by \eqref{eq:theta_iter} and \eqref{eq:w_iter} for $n \geq n_0,$ initialized at  arbitrary $\theta_{n_0},w_{n_0} \in \mathbb{R}^d$ such that
		\begin{equation}
		\label{eq: Gn' definition}
		\norm{\theta_{n_0} - \thS} \leq \Rti \mbox{ and } \norm{z_{n_0}} \leq \Rzi,
		\end{equation}
		where $z_{n_0}$ is as in \eqref{eq: z_n def}.
	}
	Then,
	\begin{multline}
\label{eqn:ProbBound}
	\Pr
	\{
	\norm{\theta_n - \thS} \leq \et, \norm{z_n} \leq \ez, \forall n \geq n_1\}
	\\
	\geq
	1 -2d^2 \! \!
	\sum_{n \geq n_0}\left[\exp\!\left[\tfrac{-c_1 \et^2}{a_n}\right] \!
	+ \exp\!\left[\tfrac{- c_2 \et^2}{b_n}\right]\!
	+ \exp\!\left[ \tfrac{- c_3\ez^2}{b_n}\right] \right]\!.
	\end{multline}%
	where
	{
		$c_1 = (16 \Kt^2 d^3 [\LtM]^2 )^{-1},$
		$c_2 = (9 \Kz^2 d^3 [\LzM]^2)^{-1}$, and
		$c_3 = (64 \Kz^2 [\Lt{c}]^2 d^3 [\LzM]^2)^{-1}$
		%$c_1, c_2, c_3 > 0$ are suitable constants (independent of $\et,\ez,\{\alpha_k\}$ and $\{\beta_k\}$).
		are constants independent of $\et,\ez,\{\alpha_k\}$ and $\{\beta_k\}$.}
	%is the event that the iterates begin bounded in $\Rti$ and $\Rzi$ at time $n_0$ (see \eqref{eq: G_n' definition}).
\end{theorem}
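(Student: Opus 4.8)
The plan is to compare the transformed pair $(\theta_n - \thS, z_n)$ against the two \emph{decoupled} limiting ODEs \eqref{eqn:tLimODE} and \eqref{eqn:zLimODE}, whose equilibria are $\thS$ and $\zS = 0$, and to show that on a single high-probability ``confinement'' event the iterates first remain in the outer balls of radii $\Rto,\Rzo$ for all $n \geq n_0$, and then contract into the $\et,\ez$ balls for all $n \geq n_1$. The workhorse is the Variation of Parameters (Alekseev) representation advertised in Remark~\ref{rem:AnalysisApproach}. Since both limiting ODEs are linear, their fundamental matrices are the matrix exponentials $e^{-\Xt t}$ and $e^{-\Ww s}$, and the eigenvalue lower bounds $\lt,\lz$ on the real parts of $\Xt$ and $\Ww$ yield the decay estimates $\norm{e^{-\Xt t}} \leq \Kt e^{-\lt t}$ and $\norm{e^{-\Ww s}} \leq \Kz e^{-\lz s}$, i.e.\ \eqref{eqn:tMatrixBd} and \eqref{eqn:zMatrixBd}. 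Writing each iterate as an exponentially decaying initial-condition term plus an accumulated perturbation is precisely what lets the lock-in contraction surface.

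For each component I would split the accumulated perturbation into three pieces. For $z_n$, starting from \eqref{eq:z_iter}: a \emph{discretization error} $\EzD$ from replacing the continuous flow by the step-$\sw_n$ Euler step; a \emph{martingale error} $\EzM$ from the noise $\sw_n\Mw_{n+1}$; and a \emph{slow-drift/coupling error} $\EzS$ from $\lambda(\theta_n)-\lambda(\theta_{n+1}) = \Ww^{-1}\Tw(\theta_n-\theta_{n+1})$, which is $O(\st_n)$ and hence $o(\sw_n)$ because $\eta_n=\st_n/\sw_n \to 0$. For $\theta_n$, analogously, a discretization error $\EtD$, a martingale error $\EtM$, and a cross-timescale error $\EtT$ arising because $\theta_n$ is driven by $w_n = z_n + \lambda(\theta_n)$ rather than by $\lambda(\theta_n)$ alone, so the deviation $z_n$ enters as a forcing term weighted by $\norm{\Wt}$. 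The deterministic errors (discretization and coupling) are bounded using only \ref{assum:stepSize}, the a-priori radii $\Rto,\Rzi$, and the increment bounds $\Jt,\Jz$ on $\norm{\theta_{n+1}-\theta_n}$ and $\norm{z_{n+1}-z_n}$; these are exactly Lemmas~\ref{lem:ItkBd}, \ref{lem:EtDBd}, and \ref{lem:EzDBd}, whose outputs are the constants $\LtD,\LtT{a},\LzD,\LzS$ collected in Table~\ref{tab: constants}.

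The martingale errors are the only random pieces, and I would control them with Azuma--Hoeffding applied coordinate-wise to the exponentially weighted sums $\sum_{k} e^{-\Xt(t_n - t_{k+1})}\st_k\Mt_{k+1}$ and its $z$-analog. For each fixed $n$ this is a sum of bounded martingale differences in $k$, and, using \ref{assum:Noise} together with confinement to the outer balls (this is where $\LtM,\LzM$ enter), its conditional quadratic variation is $\leq \Kt^2 a_n[\LtM]^2$, respectively $\leq \Kz^2 b_n[\LzM]^2$. This produces, for a single $n$, tail bounds of the form $\exp[-c_1\et^2/a_n]$, $\exp[-c_2\et^2/b_n]$, $\exp[-c_3\ez^2/b_n]$ with the stated $c_1,c_2,c_3$; the factor $d^2$ and the sum over $n \geq n_0$ in \eqref{eqn:ProbBound} come from union-bounding over the $d$ coordinates of each noise sequence and over all time indices, since we need the bound to hold simultaneously for every $n \geq n_1$.

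The final and most delicate step is the bootstrap. I would define the confinement event as the intersection, over all $n \geq n_0$, of the events that each martingale error stays below a fixed fraction of the gap radii $\etg = \Rto-\Rti$ and $\ezg = \Rzo-\Rzi$. On this event I run an induction on $n$: assuming $\norm{\theta_n-\thS}\leq\Rto$ and $\norm{z_n}\leq\Rzo$ up to step $n$, the VoP representation together with the deterministic and martingale bounds keeps both within the outer balls at step $n+1$, closing the induction (Lemmas~\ref{lem:rtBd} and \ref{lem:rzBd}); the threshold indices $N_0,N_1$ are chosen so the accumulated deterministic error never exhausts the gap, and so the decaying initial-condition term $\Kt e^{-\lt(t_n-t_{n_0})}\Rti$ falls below $\et$ (and its $z$-analog below $\ez$) for all $n \geq n_1$. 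The \textbf{main obstacle} is the two-way coupling: the $z$-bound feeds into the $\theta$-forcing while the $\theta$-increment feeds back into the $z$-drift. The resolution is to exploit the timescale separation $\eta_n\to0$ so that the coupling term in $z$ is genuinely lower order, to confine $z$ first and only then $\theta$, and to fix a common decay rate $\lm\in(0,\lmin)$ via \eqref{eq:lmdef} strictly below both eigenvalue bounds, so that the convolution sums $\sum_k e^{-\lm(\cdot)}(\text{step})\cdot(\text{error})$ telescope into a geometric bound rather than accumulating; the recurring $1/[(\lmin-\lm)e]$ factors in Table~\ref{tab: constants} are exactly the residue of this choice. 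Once confinement holds for all $n \geq n_0$, the stated lock-in probability follows by union-bounding the complement.
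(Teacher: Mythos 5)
Your proposal matches the paper's proof in all essentials: the same $z_n$-transformation and VoP representation against the frozen linear ODEs, the same three-way split into discretization, martingale, and tracking/slow-drift errors (Lemmas~\ref{lem:EtDBd} and \ref{lem:EzDBd}), coordinate-wise Azuma--Hoeffding with the $d^2$ union-bound factor (Lemma~\ref{lem:MtConc and MzConc}), and the same inductive confinement-then-contraction decomposition of the bad event via the $G_n$ events (Lemmas~\ref{lem:IntEv}, \ref{lem:rzEqEv and rtEqEv}, \ref{lem:rzSEqEv and rtSEqEv}), including the roles of $N_0$, $N_1$, the common rate $\lm$, and the order of bounding $z$ before $\theta$. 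No substantive gaps.
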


\gugan{\begin{proof}
See Section~\ref{sec: analysis outline} for the outline of the proof, and Appendix~\ref{sec:SupMat} for \gal{the detailed proof}.
\end{proof}}

\begin{table}[t!]	
{\small
\begin{tabular}{l | l}
\hline
Term  & Definition
\\[0.5ex]
\hline
$\N{a} \equiv \N{a}(\et,\ez,\{\alpha_k\},\{\beta_k\})$   & $ \min\left\{ N : \max \left\{\sup_{k \geq N} \beta_k, \; \sup_{k \geq N} \eta_k \right\} \leq \dfrac{\min\left\{\et/ 8, \ez/3\right\}}{\Lz \max\{\Lt{c}, 1\}}\right\}$ \\[2ex]
$\N{b} \equiv \N{b}(\et,\{\beta_k\})$ & $\min \left\{ N: \sup_{k \geq N} \beta_k \leq \et/(4\Lt{b}) \right\}$\\[0.5ex]
$N_0 \equiv N_0(\et, \ez, \{\alpha_k\}, \{\beta_k\})$ & $\max\{\N{a}, \N{b}\} $ \\[0.5ex]
$\n{a} \equiv \n{a}(n_0,\et, \{\alpha_k\})$ & $ \min\{j \geq n_0:[\Kt \Rti + \Lt{a}] e^{-\lm(\tI{j} - \tI{n_0})} \leq \et/4\}$ \\[0.5ex]
$\n{b} \equiv \n{b}(n_0,\ez, \{\beta_k\})$ & $\min\{j \geq n_0 : \Kz \Rzi e^{-\lz (\sI{j} - \sI{n_0})} \leq \ez/3\}$ \\[0.5ex]
$N_1 \equiv N_1(n_0,\et,\ez,\{\alpha_k\},\{\beta_k\})$ & $\max\{\n{a}, \n{b}\}$\\[0.5ex]
\hline
\end{tabular}
}
\caption{\label{tab: epsilon dependent constants} A summary of terms depending on $\et, \ez$, and the stepsize sequences, as appearing in the main theorems.
{These terms are formally introduced in Lemmas~\ref{lem:rzEqEv and rtEqEv} and \ref{lem:rzSEqEv and rtSEqEv}.
}	
% in Appendix~\ref{sec: conc_bound_appendx}.
}
\end{table}

{
Section~\ref{sec:2TSSetup} already discusses the close relation between the SA iterates \gugan{$\{\theta_n\}$ and $\{z_n\}$} and the corresponding ODE trajectories, which suggests that the analysis of the former should be based on the latter.
However, the sole fact that the ODE trajectories approach their respective solutions does not guarantee the same for the SA trajectories. The latter may drift away due to several factors (e.g., martingale noise), as discussed in Subsection~\ref{subsec:Comparison}.
However, Theorem~\ref{thm:condMain} makes it clear that,
%However, as depicted in Theorem~\ref{thm:condMain},
w.h.p., this does not happen.
These subtleties are discussed in more details in the following remark.
}

\begin{remark} \label{remark: main thm remark}
%The result
{Theorem~\ref{thm:condMain}}
\gugan{involves} two key notions \gugan{introduced} in Table~\ref{tab: epsilon dependent constants}: $N_0$ and $N_1$.
\begin{enumerate}
\item A large $N_0$ ensures the stepsizes are small enough to mitigate the factors that may cause the SA trajectories to drift. In the case of martingale \gugan{difference} noise, this can be directly seen from the terms $\st_n \Mt_{n+1}$ and $\sw_n \Mw_{n+1}$ in \eqref{eq:theta_iter} and \eqref{eq:z_iter}. %This concept is of the nature of previous single timescale finite sample analyses: Corollary 14, Chapter 4, \citep{borkar2008stochastic}, and the more recent \citep{korda2015td}, Theorem~1.

%A large $N_0$ ensures the stepsizes are small enough to mitigate the  martingale noise of the SA trajectories; i.e., the additive SA noise $\{\st_n \Mt_{n+1}\}$ and $\{\sw_n \Mw_{n+1}\}$ $ \forall n \geq N_0$ is small w.h.p.
\item The term $N_1$ is an intrinsic property of the limiting ODEs. It quantifies the number of iterations required by the two ODE trajectories to hit the $\epsilon$-neighborhoods of their respective solutions (and \gugan{stay} there) when started \gugan{in $\Rti$ and $\Rzi$ radii balls}. As shown in \gal{Theorem~\ref{thm:condMain}}, $N_1$ depends on \gugan{$n_0.$} A larger \gugan{$n_0$} means smaller stepsizes, which implies that a longer time is required for the trajectories to hit the $\epsilon$-neighbourhoods, in turn making $N_1$ larger.

    %As shown in \gal{Theorem~\ref{thm:condMain}}, $N_1$ also depends on $N_0$, since larger $N_0$ implies smaller stepsizes and hence more iterations.
\end{enumerate}
% The sole fact that the ODE trajectories reach their respective solutions does not guarantee the same for the SA trajectories. The latter may drift away due to several factors (e.g., martingale noise), as discussed in Subsection~\ref{subsec:zBd}. However, as depicted in Theorem~\ref{thm:condMain}, w.h.p. this does not happen.
\end{remark}

\subsection{\gugan{A Bound for Sub-exponential Series}}

In order to make Theorem~\ref{thm:condMain} more applicable, we derive closed form expressions for the r.h.s. of \eqref{eqn:ProbBound} for the case of inverse polynomial stepsizes; see Appendix~\ref{sec: bound for subexponential series}. In particular, we obtain a bound on the generic expression $\sum_{n = n_0}^{\infty} \exp[- B n^{p}],$ where $B \geq 0$ and $p \in (0,1).$  Such expressions are common in SA analyses. Thus, this result can be useful on its own.

\subsection{Convergence Rate of Sparsely Projected Iterates}
\gugan{Here we first describe our projection scheme, following which we give our convergence rate result in Theorem~\ref{thm:SparseProj}. In this latter result, we work with a specific family of stepsizes to obtain concrete closed-form expressions for the rate of convergence.}

For $n$ that is a power of 2, let $\Pi_{n,R}$ denote the projection into the $R$-ball; for every other $n,$ let $\Pi_{n,R}$ denote the identity{, where $R>0$ is some arbitrary constant}.  {We call this sparse projection as we project only on indices which are powers of $2$.}
\gal{
With $\theta_0', w_0' \in \dReal,$ let
\begin{eqnarray}
\theta_{n + 1}' & = & \Pi_{n + 1,\Rti/2}\Big(\theta_n' + \st_n [h_1(\theta_n', w_n') + M_{n + 1}^{(1')}]\Big) \enspace, \label{eq:thetap_iter} \\
w_{n + 1}' & = & \Pi_{n + 1,\Rzi/2}\Big(w_n' + \sw_n [h_2(\theta_n', w'_n) + M_{n + 1}^{(2')}]\Big) \label{eq:wp_iter}
\end{eqnarray}
denote the sparsely projected variant of \eqref{eq:theta_iter} and \eqref{eq:w_iter}%
}
{, where $\{M^{(1')}_{n}\}$ and $\{M^{(2')}_{n}\}$ are martingale difference sequences satisfying assumption \ref{assum:Noise}, just like $\{M^{(1)}_{n}\}$ and $\{M^{(2)}_{n}\}$.} The \gugan{idea} of projection is indeed common \citep{borkar2008stochastic,kushner1980projected}; \gugan{but, the novelty here is in doing only exponentially infrequent projections.} As seen in the proof of Theorem~\ref{thm:SparseProj}, this significantly simplifies our analysis.

{ \gugan{We now} introduce a carefully chosen instantiation of $N_0$ (where $N_0$ is \gugan{as} in Table~\ref{tab: epsilon dependent constants}) for the stepsize choice in Theorem~\ref{thm:SparseProj} below. This choice also regulates the $N_1$ term in an appropriate way, as we show later in the theorem's analysis.}
%\gal{Additionally, we introduce the following closed-form estimate for $N_0$ (presented in Fig.\ref{tab: epsilon dependent constants}), given the stepsize choice in Theorem~\ref{thm:SparseProj} below.}
For some $1>\alpha>\beta>0$ and $\epsilon \in (0,1)$, \gugan{let
	
	\begin{multline}
	N_0'(\epsilon,\alpha,\beta)
	:=
	\max \Bigg\{\left[ \tfrac{8\Lz}{\epsilon} \max\left\{\Lt{c}, 1\right\}
	\right]^{\frac{1}{\min\{\beta,\alpha-\beta\}}},  \left[\tfrac{4\Lt{b}}{\epsilon}\right]^{1/\beta},  \\
	\left[
	\tfrac{1-\alpha}{((1.5)^{1-\alpha}-1)\lm}
	\ln\tfrac{4[\Kt \Rti + \Lt{a}]}{\epsilon}
	\right]^{\frac{1}{1-\alpha}}, \left[
	\tfrac{1-\beta}{((1.5)^{1-\beta}-1)\lz}
	\ln\tfrac{3\Kz \Rzi}{\epsilon}
	\right]^{\frac{1}{1-\beta}}, 3
	\bigg\} \enspace.
	\label{eq: N0' defn}
	\end{multline}
}

\begin{theorem}[Finite Time Behavior of Sparsely Projected Iterates]
\label{thm:SparseProj}
\gugan{
Fix $\Rti, \Rzi >0.$ Suppose
\begin{equation}
\label{eq: theta assumption}
\|\theta^*\| \leq \Rti/4 \enspace, \mbox{ and }
\end{equation}
\begin{equation}
\label{eq: w assumption}
\{\theta \in \dReal: \|\theta\| \leq \Rti/2\}
\subseteq \{\theta \in \dReal : \|\lambda(\theta)\|\leq \Rzi/4\} \enspace.
\end{equation}
Let $\alpha_n=(n+1)^{-\alpha}$ and $\beta_n=(n+1)^{-\beta}$ with $1>\alpha>\beta>0.$
}
\gugan{Then the following hold.}
\gugan{
\begin{enumerate}
\item \label{st:ProbEst}For any {$\Rzo>\Rzi$,} $\epsilon \in (0,\min\{\Rti/4, \Rzi/4, 4 \Lt{a} {, \Rzo-\Rzi} \}),$ and $n_0' \geq N_0'(\epsilon,\st,\sw),$ such that $n_0'$ is a power of 2 and
$N_0'(\epsilon,\alpha,\beta) = O\left(\epsilon^{-\frac{1}{\min\{\beta,\alpha-\beta\}}}\right)$
is as in \eqref{eq: N0' defn}, we have
\begin{multline}
\Pr\{\norm{\theta_n' - \thS} \leq \epsilon, \norm{z_n'} \leq \epsilon, \forall n \geq 2n_0'\} \\
\geq
1
-
 \frac{2 d^2 c_{7a}}{\epsilon^{2/\alpha}} \; \exp\left[c_{5a} \epsilon^2 - c_{6a} \epsilon^2 (n_0')^\alpha \right] - \frac{4d^2 c_{7b}}{\epsilon^{2/\beta}} \; \exp\left[c_{5b} \epsilon^2 - c_{6b} \; \epsilon^2 (n_0')^\beta \right] \enspace,
\label{eq: bound on projected iterates}
\end{multline}
where $c_4 := \min\{c_2, c_3\}, c_{5a} = c_5(c_1, \kappa, \alpha, q_1), c_{5b} = c_5(c_4, \kappa, \beta, q_2),$ and so on for $c_{6a}, c_{7a},$ $c_{6b},$ and $c_{7b}.$ \gugan{The terms  $c_5, c_6,$ and $c_7$ are as defined in Lemma~\ref{lem: B formula}.}~\footnote{{Consult Table~\ref{tab: constants} for the rest of the constants, such as $c_1$, $c_2$ and $c_3$.}}
%Here $c_1, c_2, c_3$ are as defined in Theorem~\ref{thm:condMain} and $c_5, c_6,$ and $c_7$ are as defined in Lemma~\ref{lem: B formula}.
%
\item \label{st:ConvRate} There is some constant $C>0$ such that, for all $n > 3$ and $\delta \in (0,1),$ it holds that \footnote{An explicit expression for $C$ can be derived from the proof of Theorem~\ref{thm:SparseProj} which, for brevity, we haven't introduced. }
\begin{equation}
\label{eqn:FiniteTimeRate}
\Pr\left\{\max\{\|\theta'_n - \theta^*\|, \|z'_n\|\} \leq C\max\left[{n^{-{\beta}/{2}}} {\sqrt{\ln ({n}/{\delta)}}}\;,\; {n^{-(\st-\sw)}}\right]\right\}
\geq 1 - \delta \enspace.
\end{equation}
\end{enumerate}
}
\end{theorem}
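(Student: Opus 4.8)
The plan is to bootstrap Theorem~\ref{thm:condMain} through the dyadic structure of the sparse projection, using essentially a \emph{single} application of the concentration bound. The governing observation is that between two consecutive powers of $2$ the iterates evolve without any projection, so there the projected recursion \eqref{eq:thetap_iter}--\eqref{eq:wp_iter} coincides with the unprojected one \eqref{eq:theta_iter}--\eqref{eq:w_iter}. For Part~\ref{st:ProbEst} I would fix a power of $2$, $n_0' \geq N_0'(\alpha,\beta,\epsilon)$, and first check that the projection performed at index $n_0'$ realizes exactly the hypotheses \eqref{eq: Gn' definition}: after projecting, $\norm{\theta_{n_0}'} \leq \Rti/2$, so by \eqref{eq: theta assumption} $\norm{\theta_{n_0}' - \thS} \leq 3\Rti/4 \leq \Rti$; and since $\norm{w_{n_0}'} \leq \Rzi/2$ together with \eqref{eq: w assumption} yields $\norm{\lambda(\theta_{n_0}')} \leq \Rzi/4$, we obtain $\norm{z_{n_0}'} \leq 3\Rzi/4 \leq \Rzi$.

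I would then apply Theorem~\ref{thm:condMain} to the \emph{unprojected} process started from $(\theta_{n_0}', w_{n_0}')$ with both tolerances equal to $\epsilon$ and with $n_1 = 2n_0'$. The role of definition \eqref{eq: N0' defn} is precisely to make $n_0' \geq N_0'(\alpha,\beta,\epsilon)$ imply both the stepsize condition $n_0' \geq N_0$ (first two terms) and the transient condition $2n_0' \geq N_1$ (last two terms of Table~\ref{tab: epsilon dependent constants}); the latter follows by lower bounding $t_{2n_0'} - t_{n_0'}$ and $s_{2n_0'} - s_{n_0'}$ for the polynomial stepsizes via Riemann sums against $\int x^{-\alpha}\,\mathrm{d}x$ and $\int x^{-\beta}\,\mathrm{d}x$ (this is the source of the factor $(1.5)^{1-\alpha}-1$). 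On the high-probability event $G$ of Theorem~\ref{thm:condMain} the unprojected trajectory $\hat\theta_n,\hat z_n$ then satisfies $\norm{\hat\theta_n - \thS}\leq\epsilon$ and $\norm{\hat z_n}\leq\epsilon$ for all $n\geq 2n_0'$. The crux is a coupling argument showing that on $G$ the projected and unprojected trajectories coincide for all $n\geq n_0'$: since $\epsilon\leq\min\{\Rti/4,\Rzi/4\}$, lying in the $\epsilon$-ball forces $\norm{\theta}\leq\Rti/2$ and (again via \eqref{eq: w assumption}) $\norm{w}\leq\Rzi/2$, so every projection at a power of $2$ in $[2n_0',\infty)$ acts as the identity, while between consecutive powers of $2$ there is no projection at all. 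An induction over dyadic indices transfers the conclusion of Theorem~\ref{thm:condMain} verbatim to the sparsely projected iterates.

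To reach the closed form \eqref{eq: bound on projected iterates} I would substitute the polynomial stepsizes into $a_n$ and $b_n$, obtaining $a_n = O(n^{-\alpha})$ and $b_n = O(n^{-\beta})$ (the exponentially weighted sums are dominated by their last terms), so that each summand on the right of \eqref{eqn:ProbBound} is bounded by $\exp(-Bn^{p})$ with $p\in\{\alpha,\beta\}$ and $B\propto\epsilon^2$. Applying Lemma~\ref{lem: B formula} to $\sum_{n\geq n_0'}\exp(-Bn^{p})$ produces a bound of order $B^{-1/p}\exp(c_5 B - c_6 B (n_0')^{p})$, which reproduces exactly the prefactors $\epsilon^{-2/\alpha},\epsilon^{-2/\beta}$ and the exponents $c_{5a}\epsilon^2 - c_{6a}\epsilon^2(n_0')^{\alpha}$, $c_{5b}\epsilon^2 - c_{6b}\epsilon^2(n_0')^{\beta}$; the two $b_n$-terms governed by $c_2,c_3$ merge through $c_4=\min\{c_2,c_3\}$, accounting for the factor $4d^2$.

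For the rate \eqref{eqn:FiniteTimeRate}, given $n$ and $\delta$ I would take $n_0'$ to be the largest power of $2$ with $2n_0'\leq n$, so $n_0'\in[n/4,n/2]$ and Part~\ref{st:ProbEst} applies at time $n$; it then remains to choose $\epsilon$ as small as possible under two constraints. Feasibility $n_0'\geq N_0'(\alpha,\beta,\epsilon)$, via $N_0'=O(\epsilon^{-1/\min\{\beta,\alpha-\beta\}})$, forces $\epsilon\gtrsim n^{-\min\{\beta,\alpha-\beta\}}$; smallness of the failure probability, whose dominant slowest-decaying $\beta$-term requires $\epsilon^2(n_0')^{\beta}\gtrsim\ln(n/\delta)$, forces $\epsilon\gtrsim n^{-\beta/2}\sqrt{\ln(n/\delta)}$. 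Setting $\epsilon$ to a constant times the maximum of these lower bounds gives the result; when $\min\{\beta,\alpha-\beta\}=\beta$ the feasibility term $n^{-\beta}$ is absorbed into the statistical term $n^{-\beta/2}$, so only $n^{-(\alpha-\beta)}$ survives alongside it, yielding precisely the $\max\{n^{-\beta/2}\sqrt{\ln(n/\delta)},\,n^{-(\alpha-\beta)}\}$ of \eqref{eqn:FiniteTimeRate}. I expect the \textbf{main obstacle} to be the coupling step of the second paragraph: one must argue without circularity that the sparse projections stay inactive on $G$, which is resolved by comparing against the fixed unprojected trajectory and inducting over dyadic blocks. A secondary difficulty is the inversion in Part~\ref{st:ConvRate}, where the polynomial prefactor $\epsilon^{-2/\beta}$ injects an extra $\ln(1/\epsilon)\sim\ln n$ into the exponent that must be absorbed while still solving for $\epsilon$ in closed form.
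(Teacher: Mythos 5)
Your proposal is correct and follows essentially the same route as the paper: establish \eqref{eq: Gn' definition} from the projection at $n_0'$ together with \eqref{eq: theta assumption}--\eqref{eq: w assumption}, verify $n_0'\geq N_0$ and $2n_0'\geq N_1$ via the Riemann-sum bounds built into \eqref{eq: N0' defn}, apply Theorem~\ref{thm:condMain} to the unprojected process, couple it to the projected one by showing every projection acts as the identity on the good event, and invoke Lemma~\ref{lem: B formula} and the $\epsilon(n)$ inversion for the closed forms. The paper resolves the coupling step you flag as the main obstacle exactly as you suggest, by comparing against the fixed unprojected trajectory (phrased as an explicit coupled process rather than a dyadic induction, but the argument is the same).
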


\gugan{
\begin{proof}
See Appendix~\ref{sec:Proof_Sparse_Proj}.
\end{proof}
}

\begin{remark}
	To the best of our knowledge, the only other work that provides
	a high probability convergence rate for
	%a finite sample analysis to a
	projected SA algorithm is \citep{liu2015finite}, and it also assumes \eqref{eq: theta assumption}. Without this assumption, one would be required \gugan{to} study the convergence to the closest point to $\thS$ within $\Rti/4.$
	%Moreover, Assumption \eqref{eq: w assumption} is not less reasonable.
	%{ Is this really true? We should be more convincing - and check whether this is true at all, or if it is in fact less reasonable, but still acceptable.}
	%$\Rzi$ can be explicitly calculated to satisfy \eqref{eq: w assumption} using $\lambda(\theta) := \Ww^{-1}[\vw - \Tw\theta].$
	Assumption \eqref{eq: w assumption} can be easily seen to hold if $\Rzi$ is set to $4\|\Ww^{-1}\vw\|+2\Rti\|\Tw\|$ or greater.
\end{remark}

\gal{
\begin{remark}
	In continuation of Remark~\ref{remark: squre summability}, notice $\alpha$ and $\beta$ are not constrained to lie in $(1/2,1)$. This is, to the best of our knowledge, in contrast with any other two-timescale analysis in the literature.
\end{remark}
}
\gal{
\begin{remark}
	\label{rem: optimal rate}
Clearly, the tightest possible upper bound in \eqref{eqn:FiniteTimeRate} approaches $O(n^{-1/3})$ as $\alpha$ and $\beta$ simultaneously approach $1$ and $2/3$, respectively. 
We now briefly discuss the origin of the two limiting terms \gugan{there}. The $n^{-\beta/2}$ term stems from the convergence of \eqref{eq:wp_iter}, and corresponds to the known $n^{-1/2}$ rate limit of any single-timescale SA. It is also in line with Theorem~3.1 in \citep{dalal2018finite} for generic $\beta$. We note that a similar $n^{-\alpha/2}$ rate, stemming from the convergence of \eqref{eq:thetap_iter},  originally appears in the proof of Theorem~\ref{thm:SparseProj}; however, we drop it from the statement since $\alpha > \beta$. Separately, \gugan{the} $n^{-(\alpha - \beta)}$ term stems from the interaction between the $\theta$ and $z$ iterates, originating in \gugan{the last two terms in \eqref{eq:z_iter}}.
%our true two-timescale process (see Remark~\ref{rem: true 2TS}).
As \gugan{discussed above} Remark~\ref{rem:AnalysisApproach}, the slow component $\{\theta_n\}$ evolves in the $\st_n$ timescale; yet, it is part of $z_n$ update rule, which evolves in the $\sw_n$ timescale. Hence, the slow drift error (\gugan{see} Subsection~\ref{subsec:Comparison}) is governed by the \gugan{stepsize ratio} $\alpha_n /\beta_n$  (yielding the $\alpha - \beta$ here).
\end{remark}

}
%\begin{remark}
%{The upper bound in Theorem~\ref{thm:SparseProj} gets closer and closer to $n^{1/3}$ as $\alpha$ and $\beta$ simultaneously approach 1 and 2/3, respectively. The rate $n^{1/3}$, however, cannot be beaten. This is in contrast with the single time-scale setting, where the optimal rate is known to be $n^{1/2}$ under various settings. This raises the question, whether the rate $n^{1/3}$ is an inherent theoretical limit for the two time-scale stochastic processes, or if this limitation is simply an artifact of our approach.}
%\end{remark}

%\begin{remark}
	\gal{In \gugan{transforming} Theorem~\ref{thm:condMain} to Theorem~\ref{thm:SparseProj}, $N_0'$ (\gugan{see} \eqref{eq: N0' defn}) inherits \gugan{the} properties of both $N_0$ and $N_1$ from Theorem~\ref{thm:condMain}, whose roles we have portrayed in  Remark~\ref{remark: main thm remark}. Theorem~\ref{thm:SparseProj}, along with \eqref{eq: N0' defn}, relates the above roles to \gugan{the} choice of $\alpha$ and $\beta;$ it suggests several valuable tradeoffs between speeding up convergence of the noiseless ODE, and mitigating the martingale noise \gugan{and other drift factors (see Subsection~\ref{subsec:Comparison})} to aid the SA to follow this process. Namely, $N_0'$ explodes:}
\begin{enumerate}
\item \gal{As $\alpha$ or $\beta$ approach $0$ (stepsizes approach constants); this stems from $N_0$ blowing up. This occurs} since the stepsizes' slow decay rate impairs i) their ability to mitigate the martingale noise \gugan{and other drift factors}; and hence ii) the ability of the SA to \gugan{track} the ODE trajectories.

\item \gal{As $\alpha$ and $\beta$ get close to each other; this is due to $N_0$ blowing up. This occurs} \gugan{as} the \gugan{true} two-timescale nature is \gugan{then} nullified (see Remark~\ref{rem: true 2TS}).  Our analysis suggests that convergence of $z_n$ to $\zS$ \gugan{must} be faster than that of $\theta_n$ to $\thS$, \gugan{and a decaying stepsize ratio $\eta_n$ ensures this.}

\item \gal{As $\alpha$ or $\beta$ approach $1$, the largest value for which \eqref{eqn:stepSizeBound} \gugan{holds}; this stems from $N_1$ blowing up. This occurs} \gugan{as} the stepsizes then decay too fast, impairing the speed of the ODE convergence; more accurately, $N_1$ (see Table~\ref{tab: epsilon dependent constants}) moves away from exponential nature to inverse polynomial.
\end{enumerate}

\section{\gal{Proof Outline of Theorem~\ref{thm:condMain}}}
\label{sec: analysis outline}

\gal{In this section, we bring the essence of the proof of Theorem~\ref{thm:condMain}.} For \gal{intermediate results and the complete proof}, see Appendix~\ref{sec:SupMat}.
\gal{Naturally, t}hroughout \gugan{this} section, we assume \eqref{eq: Gn' definition}. Also, as mentioned in Section~\ref{sec:2TSSetup}, we work \gugan{here} with the iterates $\{z_n\}$ defined using \eqref{eq:z_iter} instead of $\{w_n\}$ directly. As stated in Remark~\ref{rem:AnalysisApproach}, our analysis follows through thanks to this choice.

%We \gal{now give a \gugan{summary}} of our proof for Theorem~\ref{thm:condMain}. All proofs for the results in this section are given in Appendix~\ref{sec:SupMat}.
%\subsection*{\gal{Analysis Outline Summary}}
%
The proof has two key steps. First, in Subsection~\ref{subsec:Comparison}, we use the Variation of Parameters (\vop) formula \citep{lakshmikantham1998method} to quantify the distance
between the \gal{SA} trajectories \gal{generated with \eqref{eq:theta_iter} and \eqref{eq:z_iter} and} suitable solutions of their respective limiting ODEs \eqref{eqn:tLimODE} and \eqref{eqn:zLimODE}.
%of the perturbed trajectories $\bart{(t)}$ and $\barz{(s)}$ from their unperturbed trajectories $\theta(t) \equiv \tSol{t}$ and $z(s) \equiv \zSol{s},$ respectively.
%This is done by splitting the perturbations into three parts per each trajectory, as described in Section~\ref{subsec:Comparison}.
%Using this, we then obtain upper bounds on $\norm{\bart(t) - \theta(t)}$ and $\norm{\barz(s)- z(s)}.$

As for the second step, note that the choice of $N_0$ in Theorem~\ref{thm:condMain}
%and Table~\ref{tab: epsilon dependent constants},
ensures that $\{\beta_k\}_{k \geq N_0}$ and $\{\eta_k\}_{k \geq N_0}$ are sufficiently small---i.e., of order $O(\max(\et,\ez))$.
%From the statement of Theorem~\ref{thm:condMain} and Table~\ref{tab: epsilon dependent constants}, observe that $N_0$ is such that the stepsizes $\{\beta_k\}_{k \geq N_0}$ and $\{\eta_k\}_{k \geq N_0}$ are sufficiently small.
Exploiting this fact and using the Azuma-Hoeffding %martingale concentration
inequality, in Subsection~\ref{subsec:conc_bound_lemmas}
we show that the bounds on the distances obtained in the first step are small with very high probability.
%$t \geq \tI{n_0}$ and $s \geq \sI{n_0}.$}
More explicitly, when the ODE solutions %$\theta(t)$ and $z(s)$
are sufficiently close to $\thS$ and $\zS$ respectively, \gugan{we show that} the same also holds for
the sequences $\{\theta_n\}$ and $\{z_n\}$
%$\bart(t)$ and $\barz(s)$
with high probability.
%\gal{for all $n \geq {n_0}.$}
A visualization of the process is given in Fig. \ref{fig:trajectory}.
%; see Section~\ref{subsection: analysis preliminaries} for the extraneous notations.

Before discussing these two steps, we now introduce some notations and terminology.

\subsection{Analysis Preliminaries}
\label{subsection: analysis preliminaries}

\gugan{To begin with,} we define the linearly interpolated trajectories of the iterates $\{\theta_n\}$ and $\{z_n\}.$ Having a continuous version of the discrete SA algorithm enables our analysis. Keeping \eqref{eq: time definition} in mind, let $\bart(\cdot)$ be the linear interpolation of $\{\theta_n\}$ on $\{t_n\};$ i.e., let $\bart(t_n) = \theta_n$ and, for $\tau \in (t_n, t_{n + 1}),$ let
\begin{equation}
\label{eqn:xLinearInterpolation}
\bart(\tau) = \bart(t_{n}) + \tfrac{(\tau - t_n)}{\st_n} [\bart(t_{n + 1}) - \bart(t_{n})].
\end{equation}
\gugan{Similarly,} let $\barz(\cdot)$ be the linear interpolation of $\{z_n\},$ but on  $\{s_n\}.$ For $\tau \in [\tI{n}, \tI{n + 1}),$ let
\begin{equation}
\label{defn:xi}
\xi(\tau) := s_n + \tfrac{\beta_n}{\alpha_n}(\tau - t_n).
\end{equation}
The mapping $\xi(\cdot)$ linearly interpolates $\{\sI{n}\}$ on $\{\tI{n}\}.$
%Let $\et,~\ez,~\Rti,~\Rzi > 0$ be such that
%\begin{equation}
%\et < \Rti ,~ \ez < \Rzi, ~\norm{\theta_0 - \thS} \leq \Rti,~\norm{z_0} \leq \Rzi. \label{eq: G_n' definition}
%\end{equation}

\gugan{
With the first parameter being time, the second being starting time, and the third being initial point, let $\tSol{t},$ $t \geq \tI{n_0},$ be the solution to \eqref{eqn:tLimODE} satisfying $\tSol{\tI{n_0}} = \theta_{n_0}.$ Similarly, define $\zSol{s}.$
} From \eqref{eqn:tLimODE} and standard ODE results {(see \citep[p. 129]{hirsch2012differential})},
\begin{equation}
\label{eqn:tOdeSol}
\theta(t)\equiv
\tSol{t} = \thS + e^{-\Xt(t - \tI{n_0})} (\theta_{n_0} - \thS), \quad \forall t \geq \tI{n_0}.
\end{equation}
In the same way, it follows from \eqref{eqn:zLimODE} that
\begin{equation}
\label{eqn:zOdeSol}
z(s)
\equiv
\zSol{s} = e^{- \Ww (s - \sI{n_0})} z_{n_0}, \quad \forall s \geq \sI{n_0}.
\end{equation}

\gugan{
\begin{remark}
\label{rem:MonDec}
Since $\Xt$ is positive definite due to \ref{assum:posDef}, \eqref{eqn:tOdeSol} implies that $\lim_{t \to \infty} \tSol{t} = \thS.$ Further, $\frac{d}{dt}\|\theta(t) - \thS\|^2 = -2(\theta(t) - \thS)^\top \Xt (\theta(t) - \thS)<0$; hence, assuming \eqref{eq: Gn' definition} holds,
$
\norm{\tSol{t} - \thS} \leq \Rti, \quad \forall t \geq \tI{n_0}.
$
Likewise, we have $\lim_{s \to \infty} \zSol{s} = \zS$  and $\norm{\zSol{s}} \leq \Rzi$ for all $s \geq \sI{n_0}.$
\end{remark}
}

%\gugan{Our key idea is to suitably compare} the SA trajectories $\bart(t)$ and $\barz(s)$ with the limiting ODE trajectories $\theta(t) \equiv \theta(t, \tI{n_0}, \bar{\theta}(\tI{n_0})) = \theta(t, \tI{n_0}, \theta_{n_0})$ and $z(s) \equiv z(s, \sI{n_0}, \bar{z}(\sI{n_0})) = z(s, \sI{n_0}, z_{n_0}),$ respectively.  \gal{We chose a large enough  $n_0$ for the stepsizes to be small enough to mitigate the noise, for controlling perturbation of the above SA trajectories away from the limiting ODE behavior.}

%Proving \gal{Theorem~\ref{thm:condMain}} is done in two steps.

%	\begin{figure}
%		\floatbox[{\capbeside\thisfloatsetup{capbesideposition={right,top},capbesidewidth=4cm}}]{figure}[\FBwidth]
%		{\caption{A test figure with its caption side by side}\label{fig:test}}
%		{\includegraphics[width=5cm]{name}}
%	\end{figure}
%
\begin{figure}
\floatbox[{\capbeside\thisfloatsetup{capbesideposition={right,center},capbesidewidth = 42ex}}]{figure}[\FBwidth]
{\caption{Visualization of the proof methodology. The red SA trajectories $\{\theta_n\}$ and $\{z_n\}$ are compared  to their blue respective limiting ODE trajectories $\theta(t)$ and $z(s)$. The three balls on each side of the figure (from small to large), are respectively the solution's $\epsilon$-neighborhood; the $R^{\text{in}}$ ball in which the SA trajectory and ODE trajectory are initialized; and the $R^{\text{out}}$ ball in which the SA trajectory is ensured to reside.
%		  The mapping from the $\theta$ time domain $t_n$ to the $z$ time domain $s_n$, as given in \eqref{defn:xi}, is visualized as well.
	  } \label{fig:trajectory} }
	{%\begin{center}
\hspace{-4ex}
		\includegraphics[scale=0.16]{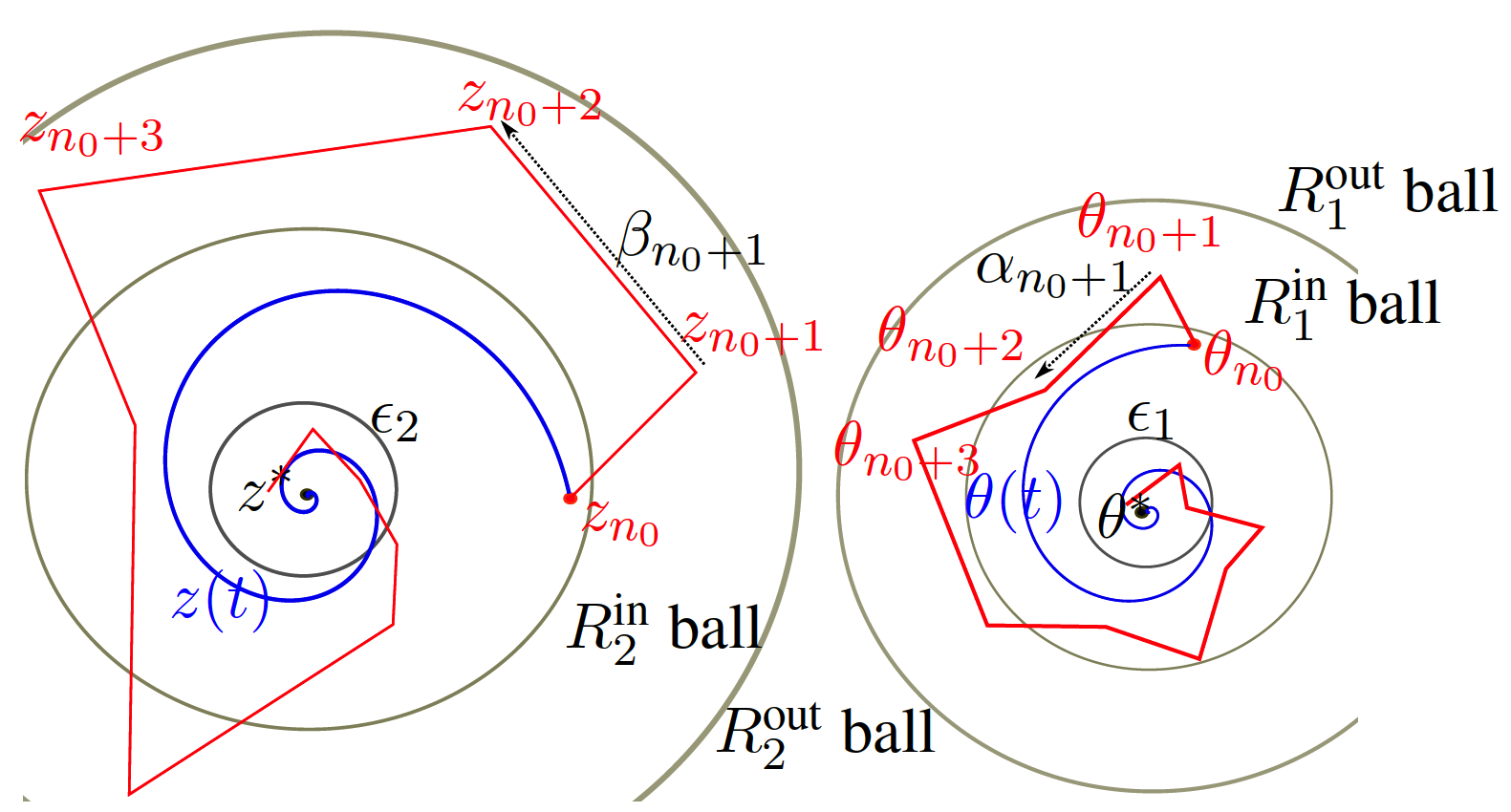} \hspace{-2ex}}
	%\end{center}}
\end{figure}

\subsection{Comparing the SA and corresponding Limiting ODE Trajectories}
\label{subsec:Comparison}

Our aim here is to use the \vop\ formula to bound $\norm{\barz(s) - z(s)}$ and $\norm{\bart(t) - \theta(t)}$. Note that both the SA trajectory $\bart(t)$ and the corresponding limiting ODE trajectory $\theta(t)$ equal $\theta_{n_0}$ at time $t = \tI{n_0}.$ Similarly, $\barz(\sI{n_0}) = z(\sI{n_0}) = z_{n_0}.$

Using $\eqref{eq: z_n def},$  \eqref{eq:theta_iter} translates to
$
\theta_{n + 1} = \theta_n + \alpha_n[\bt - \Xt \theta_n] + \alpha_n [-\Wt z_n] + \alpha_n \Mt_{n + 1} \enspace.
$
Iteratively using the above update rule, we then have
\[
\theta_{n + 1} = \theta_{n_0} + \sum_{k = n_0}^{n} \alpha_k [\bt - \Xt \theta_k - \Wt z_k + \Mt_{k + 1}] \enspace.
\]
From this and the definition of $\bart(\cdot),$ it consequently follows that
\begin{equation}
\label{eqn:tTotUpd}
\bart(t) = \theta_{n_0} + \int_{\tI{n_0}}^{t}[\bt - \Xt \bart(\tau)] \df \tau + \int_{\tI{n_0}}^{t} \zeta(\tau) \df \tau, \enspace \forall t \geq \tI{n_0} \enspace,
\end{equation}
with $\zeta(\tau) := \zetD(\tau) + \zetT(\tau) + \zetM(\tau)$, %denotes the total perturbation at time $\tau.$ For
where, for $\tau \in [\tI{k}, \tI{k + 1}),$
$
\zetD(\tau) := \Xt [\bart(\tau) - \theta_k], \quad   \zetT(\tau) := -\Wt z_k, \quad \text{and} \quad \zetM(\tau) := \Mt_{k + 1} \enspace.
$
Let $\EtD(t) = \int_{\tI{n_0}}^{t}e^{-\Xt(t -\tau)} \zetD(\tau) \df \tau$.
Define
$\EtT(t)$ and $\EtM(t)$ in the same spirit.
%the tracking error as $\EtT(t) = \int_{\tI{n_0}}^{t}e^{-\Xt(t -\tau)} \zetT(\tau) \df \tau$, and the martingale difference noise as $\EtM(t) = \int_{\tI{n_0}}^{t}e^{-\Xt(t -\tau)} \zetM(\tau) \df \tau$.
%
We refer to these three terms as the
%These three terms as the respectively correspond to the perturbations due to
discretization error, tracking error, and martingale difference noise%
, respectively.
The tracking error is called so, because it depends on $z_k = w_k-\lambda(\theta_k)$ which, by \eqref{eq:z_iter}, tells how close $w_k$ is to its ODE solution $\lambda(\theta_k)$.
%from \eqref{eq:z_iter}, is the difference between $w_k$ and $\lambda(\theta_k).$
%These definitions will become clear below in \eqref{eqn:TrajComp}.
%
From \eqref{eqn:tLimODE}, we have $\theta(t) = \theta_{n_0} + \int_{\tI{n_0}}^{t} [\bt - \Xt \theta(\tau)]\df \tau,$ and thus \eqref{eqn:tTotUpd} can be viewed as a perturbation of $\theta(t).$
Defining then $\Et(t) := \EtD(t) + \EtT(t) + \EtM(t)$, and applying the \vop\ formula \gal{(see Appendix~\ref{sec:VoC})}, it follows easily that
\begin{equation}
\bart(t) = \tSol{t} + \Et(t) \enspace. \label{eqn:TrajComp-t}
\end{equation}

Using \eqref{eq:z_iter}, it is easy to see in the same way {as above} that
%In the same way, it is easy to see from \eqref{eq:z_iter} that
%
\begin{equation}
\label{eqn:zTotUpd}
\barz(s) = z_{n_0} + \int_{\sI{n_0}}^{s} [-\Ww] \barz(\mu) \df \mu + \int_{\sI{n_0}}^{s} \chi(\mu) \df \mu, \enspace \forall s \geq \sI{n_0} \enspace,
\end{equation}
with $\chi(\mu) := \chizD(\mu) + \chizS(\mu) + \chizM(\mu)$, where, for $\mu \in [\sI{k}, \sI{k + 1}),$
\begin{equation}
\chizD(\mu) := \Ww[\barz(\mu) - z_k], \quad \chizS(\mu) := \frac{\lambda(\theta_k) - \lambda(\theta_{k + 1})}{\beta_k}, \quad \chizM(\mu) := \Mw_{k+1} \enspace.
\end{equation}
% denotes the total perturbation at time $\mu.$
Let $\EzD(s) = \int_{\sI{n_0}}^{s} e^{-\Ww(s - \mu)} \chizD(\mu) \df \mu$.
Define $\EzS(t)$ and $\EzM(t)$ in the same spirit.
%the tracking error as $\EtT(t) = \int_{\tI{n_0}}^{t}e^{-\Xt(t -\tau)} \zetT(\tau) \df \tau$, and the martingale difference noise as $\EtM(t) = \int_{\tI{n_0}}^{t}e^{-\Xt(t -\tau)} \zetM(\tau) \df \tau$.
%
We refer to these three terms as
discretization error, slow drift in the equilibrium of \eqref{eqn:wLimODE}, and martingale difference noise.
%The three terms $\chizD(\cdot), \chizS(\cdot),$ and $\chizM(\cdot)$ respectively correspond to perturbations due to discretization error, slow drift in the equilibrium of \eqref{eqn:wLimODE}, and martingale difference noise.
%
%To clarify on the role of
We refer to $\chizS(\mu)$ as the slow drift error because as $\{\theta_n\}$ evolve, the ODE solution $\{\lambda(\theta_n)\}$ drift,
% the drift is due to the fact that $\theta_n$ evolves
and it is slow since $\{\theta_n\}$ is updated on the slow time scale $\{\tI{n}\}$ (recall that $\eta_n \to 0$).
Finally, defining $\Ez(t) := \EzD(t) + \EzS(t) + \EzM(t)$, it follows simlarly as above that
\begin{equation}
\barz(s) = \zSol{s} +  \Ez(s) \enspace. \label{eqn:w-TrajComp}
\end{equation}

%From \eqref{eqn:tLimODE}, we have $\theta(t) = \theta_{n_0} + \int_{\tI{n_0}}^{t} [\bt - \Xt \theta(\tau)]\df \tau.$ Therefore, \eqref{eqn:tTotUpd} can be viewed as a perturbation of $\theta(t).$ Similarly,  \eqref{eqn:zTotUpd} can be viewed as a perturbation of $z(s).$ Applying the \vop\ formula, it is then easy to see that
%%
%\begin{equation}
%%
%\bart(t) = \tSol{t} + \Et(t) \enspace, \enspace\text{ and }\enspace \barz(s) = \zSol{s} +  \Ez(s) \enspace, \label{eqn:TrajComp}
%%
%\end{equation}
%%
%where $\Et(t) := \EtD(t) + \EtT(t) + \EtM(t)$
%%with $\EtD(t) = \int_{\tI{n_0}}^{t}e^{-\Xt(t -\tau)} \zetD(\tau) \df \tau$ and so on,
%and $\Ez(s) := \EzD(s) + \EzS(s) + \EzM(s)$.
%%with $\EzD(s) = \int_{\sI{n_0}}^{s} e^{-\Ww(s - \mu)} \chizD(\mu) \df \mu$ and so on.
The below result is now trivial to see.

%\int_{\tI{n_0}}^{t} e^{-\Xt(t - \tau)} \zeta(\tau) \df \tau

% \int_{\sI{n_0}}^{s}e^{-\Ww(s - \mu)} \chi(\mu) \df \mu

\begin{lemma}
\label{lem:PerBd}
The following two statements hold:
\begin{enumerate}
\item For $t \geq \tI{n_0},$  $\norm{\bart(t) - \tSol{t}}\leq \norm{\EtD(t)} + \norm{\EtT(t)} + \norm{\EtM(t)} \enspace.$
\item For $s \geq \sI{n_0},$ $\norm{\barz(s) - \zSol{s}}\leq \norm{\EzD(s)} + \norm{\EzS(s)} + \norm{\EzM(s)} \enspace.$
\end{enumerate}
\end{lemma}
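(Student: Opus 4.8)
The plan is to read the claim off directly from the two Variation of Parameters (\vop) representations established immediately above, namely \eqref{eqn:TrajComp-t} and \eqref{eqn:w-TrajComp}; once these are in hand, the lemma reduces to a single application of the triangle inequality, which is why the text can call it trivial. Concretely, for the first statement I would begin from \eqref{eqn:TrajComp-t}, which reads $\bart(t) = \tSol{t} + \Et(t)$ with $\Et(t) = \EtD(t) + \EtT(t) + \EtM(t)$. Rearranging gives $\bart(t) - \tSol{t} = \EtD(t) + \EtT(t) + \EtM(t)$, and taking norms together with subadditivity yields $\norm{\bart(t) - \tSol{t}} \leq \norm{\EtD(t)} + \norm{\EtT(t)} + \norm{\EtM(t)}$ for every $t \geq \tI{n_0}$.

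The second statement would be handled identically: from \eqref{eqn:w-TrajComp} one has $\barz(s) - \zSol{s} = \Ez(s) = \EzD(s) + \EzS(s) + \EzM(s)$, so the triangle inequality gives $\norm{\barz(s) - \zSol{s}} \leq \norm{\EzD(s)} + \norm{\EzS(s)} + \norm{\EzM(s)}$ for all $s \geq \sI{n_0}$. No quantitative control of the individual error terms is required at this stage; bounding each of $\EtD, \EtT, \EtM$ and $\EzD, \EzS, \EzM$ separately is deferred to the subsequent lemmas.

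The only genuinely substantive work — which the narrative has already carried out before the statement of Lemma~\ref{lem:PerBd} — is the derivation of \eqref{eqn:TrajComp-t} and \eqref{eqn:w-TrajComp} themselves, and this is where I would expect any difficulty to reside. One must first recast the discrete recursions as the integral equations \eqref{eqn:tTotUpd} and \eqref{eqn:zTotUpd} for the interpolated trajectories, correctly identifying the perturbation $\zeta(\tau)$ (respectively $\chi(\mu)$) as the sum of the discretization, tracking/slow-drift, and martingale-difference components, and then invoke the \vop\ formula to express each perturbed trajectory as the unperturbed ODE solution $\tSol{t}$ (respectively $\zSol{s}$) plus the convolution integrals $\Et(t)$ (respectively $\Ez(s)$) against the fundamental matrices $e^{-\Xt(t-\tau)}$ and $e^{-\Ww(s-\mu)}$. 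Given those representations, the present lemma carries no further obstacle and follows, as claimed, immediately.
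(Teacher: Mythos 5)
Your proposal is correct and matches the paper exactly: the paper states Lemma~\ref{lem:PerBd} as ``trivial to see'' precisely because it follows from the decompositions \eqref{eqn:TrajComp-t} and \eqref{eqn:w-TrajComp} by the triangle inequality, which is what you do. You also correctly identify that all the substantive work lives in deriving those \vop\ representations (carried out in Subsection~\ref{subsec:Comparison} and Appendix~\ref{sec:VoC}), not in the lemma itself.
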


%\begin{remark}
To stress the tightness of the above analysis, we compare it with that in \citep[p. 14]{borkar2008stochastic}. There, the distance between the SA and ODE trajectories is bounded by the tail sum of the squared stepsizes; this necessitates the usual square summability assumption. We do not require it here thanks to the additional exponentials, $e^{-\Xt(t - \tau)}$ and $e^{-\Ww(s - \mu)},$ in the error terms $\EtD(t), \EzD(s),$ etc., which is a consequence of the \vop\ formula.

\subsection{Concentration Bounds for Two-Timescale SA} \label{subsec:conc_bound_lemmas}

Next, with Lemma~\ref{lem:PerBd} bounding the distance of $\bart(t)$ and $\barz(s)$ from their respective ODE trajectories for all $t$ and $s$, we consequently bound the distance of $\bart(t)$ and $\barz(s)$ from the solutions $\thS$ and $\zS$. To do so, we break the convergence event into an incremental union using a novel inductive technique (see Appendix~\ref{subsec:SuperSet}, Lemma~\ref{lem:IntEv}). Each event in the union has the following structure: ``good'' up to time $n$ (ensured by an event $G_n$, where the iterates remain bounded in certain regions) and ``bad'' in the subsequent interval ($\bart(t_{n+1})$ and $\barz(s_{n+1})$ leave the bounded regions).  By conditioning on $G_n,$ and using \eqref{eq: Gn' definition} with Lemma~\ref{lem:PerBd}, we bound $\norm{\bart(t) - \thS}$ and $\norm{\barz(s) - \zS}.$  Each of the resulting bounds consists of three kinds of terms (see Appendix~\ref{appendix: perturbation bounds},  Lemmas~\ref{lem:rzBd} and \ref{lem:rtBd}): i) sum of martingale differences (originating in $\EiM$), ii) {stepsize based term} (originating in $\EiD,~\EtT,~\EzS$), and iii) {exponentially decaying term} (originating in the ODE trajectory convergence).
%For large enough $n$, type i) terms are small with high probability due to \ref{assum:Noise} and the Azuma-Hoeffding martingale concentration inequality;  type ii) terms are small for sufficiently large  $n$; type iii) terms are small for \gal{n sufficiently larger than $n_0$.}
{
Type i) terms are small w.h.p. due to
%are small with high probability due to \ref{assum:Noise} and
the Azuma-Hoeffding inequality; these terms give the r.h.s. in \eqref{eqn:ProbBound}. Type ii) terms are small since $N_0$ is chosen sufficiently large (consult Table~\ref{tab: constants} for the definition of $N_0$). Type iii) terms are small for \gal{n sufficiently larger than} $N_0$ (in particular, for $n>N_1$---consult Table~\ref{tab: constants} for the definition of $N_1$).
This summarizes the proof of \gal{Theorem}~\ref{thm:condMain}, which is described in Appendix~\ref{sec: conc_bound_appendx}.

\section{Applications to Two-timescale Reinforcement Learning}
\label{sec:Appl}
Here we show how our Theorem~\ref{thm:SparseProj} implies convergence rates of linear two-timescale methods for policy evaluation in Markov Decision Processes (MDP). An MDP is defined by the 5-tuple $(\mathcal{S}, \mathcal{A},P,R,\gamma)$ \citep{sutton1988learning}, \gal{where these are respectively the state and action spaces, transition kernel, reward function, and discount factor.}
% ${\mathcal S}$ is the set of states, ${\mathcal A}$ is the set actions, $P = P(s'|s,a)$ is the transition kernel, $R(s,a,s')$ is the reward function, and $\gamma\in(0,1)$ is the discount factor.
%In each time-step, the process is in some state $s_n\in {\mathcal S}$, an action $a_n\in {\mathcal A}$ is taken, the system transitions to a next state $s_n'\in {\mathcal S}$ according to a transition kernel $P(s_n,a_n,s_n')$, and an immediate reward $r_n$ is received according to $R(s_n,a_n,s_n')$.
 Let policy $\pi:{\mathcal S} \rightarrow {\mathcal A}$ be a stationary mapping from states to actions and $V^\pi(s) = \mathbb{E}^\pi[\sum_{n=0}^\infty\gamma^nr_n|s_0 = s]$ be the value function at state $s$ w.r.t $\pi$.

We consider the policy evaluation setting. In it, the goal is to estimate the value function $V^\pi(s)$ with respect to a given $\pi$ using linear regression, i.e., $V^\pi(s)\approx\theta^\top\phi(s)$, where $\phi(s) \in \mathbb{R}^d$ is a feature vector at state $s$, and $\theta \in \mathbb{R}^d$ is a parameter vector. For brevity, we omit the notation $\pi$ and denote $\phi(s_n),~\phi(s_n')$  by $\phi_n,~\phi_n'$. Finally, let $\delta_n = r_n +\gamma\theta_n^\top\phi_n' - \theta_n^\top\phi_n~,A = \bE[\phi(\phi-\gamma\phi')^\top]~,C=\bE[\phi\phi^\top]$, and $b=\bE[r\phi]$, where the expectations are w.r.t. the stationary distribution of the induced chain \footnote{The samples $\{(\phi_n,\phi'_n)\}$ are generated iid. This assumption is standard when dealing with convergence bounds in reinforcement learning \citep{liu2015finite,sutton2009convergent,sutton2009fast}.  In the few papers where this assumption is not made, it is replaced with an exponentially-fast mixing time assumption \citep{korda2015td,tsitsiklis1997analysis}.}%\footnote{Indeed, $(\phi_n,\phi_n’)$ is sampled iid. We did not specify it since it is standard when dealing with convergence bounds in RL ((Liu et al.2015;Sutton et al.,2009a,b), etc.). The few papers that obviate this assumption assume some other strong properties to hold such as exponentially-fast mixing time (Korda \& Prashant,2015;Tsitsiklis \& V.Roy,1997). We shall add this remark in Section 5.
}.

We assume all rewards $r(s)$ and feature vectors $\phi(s)$ are bounded:
%\begin{align} \label{eqn:boundRewardsFeatures}
\(
|r(s)| \leq 1 ,	\|\phi(s)\| \leq 1 ~ \forall s \in S.
\)
%\end{align}
Also, it is assumed that the feature matrix $\Phi$ is full rank, so $A$ and $C$ are full rank. This assumption is standard \citep{maei2010toward,sutton2009convergent}. Therefore, due to its structure, $A$ is also positive definite \citep{bertsekas2012dynamic}. Moreover, by construction, $C$ is positive semi-definite; thus, by the full-rank assumption, it is actually positive definite.
%Also, it is known that both $A$ and $C$ are positive definite \citep{maei2010toward,bertsekas2012dynamic}.
%It is known that $A$ is positive definite \citep{}. Also, by construction, $C$ is  positive semidefinite; thus, by the full-rank assumption it is actually positive definite.

\begin{table*}[t]
	{%\small
\begin{center}
\begin{tabular}{|c|c|c|c|c|}
\hline
Method & $\Xt$ & $\Ww$ & $\mt$ & $\mw$ \\
\hline
& & & & \\ [-2ex]
GTD(0) & $A^\top A$ & $\Id$ & $(1+\gamma+\|A\|)$ & $ 1 + \max(\|b\|,\gamma+\|A\|)$ \\ [0.5ex]
GTD2 & $A^\top C^{-1} A$ & $C$ & $ (1+\gamma+\|A\|)$ & $ 1 + \max(\|b\|,\gamma+\|A\|,\|C\|)$ \\ [0.5ex]
TDC & $A^\top C^{-1} A$ & $C$ &  $ (2+\gamma+\|A\|+\|C\|)$ & $ (2+\gamma+\|A\|+\|C\|)$ \\
\hline
\end{tabular}
\end{center}
\caption{\label{table:RL algorithms} Translation of notations for relevant matrices and constants in the case of the GTD family of algorithms. The parameters $\Xt,~\Ww,~\mt,~\mw$ are defined in Section~\ref{sec:2TSSetup}.}
}
\end{table*}

\subsection{\gal{The GTD(0) Algorithm}}
\label{sec: GTD0}
We now present the GTD(0) algorithm \citep{sutton2009convergent}, verify its required assumptions, and obtain the necessary constants to apply Theorem~\ref{thm:SparseProj} for it. \gal{ GTD(0) is designed to minimize the objective function %(recall (\ref{eq: b-Atheta}))
$
J^{\rm NEU}(\theta)
%=
%\tfrac{1}{2}\|\bE[\delta_n(\theta)\phi]\|_2^2
=
\tfrac{1}{2}(b-A\theta)^\top (b-A\theta).
$
Its update rule is}
\begin{equation*}
\label{eq:GTD0 iter}
\theta_{n + 1} = \theta_n + \st_n \left(\phi_n - \gamma \phi_n'\right)\phi_n^\top w_n, \quad\quad\quad
w_{n + 1} = w_n + \sw_n r_n\phi_n + \phi_n[\gamma\phi_n'-\phi_n]^\top\theta_n .
\end{equation*}
\gal{It thus takes the form of} \eqref{eq:theta_iter} and \eqref{eq:w_iter} with
%\begin{align*}
$h_1(\theta,w) = %\bE\left[\left(\phi_n - \gamma \phi_n'\right)\phi_n^\top\right]w =
A^\top w \enspace,
h_2(\theta,w) = %\bE\left[ \delta_n\phi_n \right] -w =
b-A\theta - w ~,
\Mt_{n+1} = \left(\phi_n - \gamma \phi_n'\right)\phi_n^\top w_n - A^\top w_n \enspace, \Mw_{n+1} %=& \left(\delta_n\phi_n - w_n\right) - \left(b-A\theta - w_n\right)
%\\
%  =& r_n\phi_n + \phi_n[\gamma\phi_n'-\phi_n]^\top\theta_n - w_n \\
%  & - \left(b-A\theta - w_n\right) \enspace.
= r_n\phi_n + \phi_n[\gamma\phi_n'-\phi_n]^\top\theta_n  - \left(b-A\theta_n \right) \enspace.
$
That is, in case of GTD(0), the relevant matrices in the update rules take the form
$\Tt = 0$, $\Wt = -A^\top$, $v_1 = 0$, and $\Tw=A$, $\Ww=\Id$, $v_2=b$.
Additionally, $\Xt = \Tt - \Wt\Ww^{-1}\Tw = A^\top A$.
By our assumption above, both $\Ww$ and $\Xt$ are symmetric positive definite matrices, and thus the real parts of their eigenvalues are also positive.
Also,
$
\|\Mt_{n+1}\| \leq (1+\gamma+\|A\|) \|w_n\| ,
$
$
\|\Mw_{n+1}\|
\leq  1+\|b\| + (1+\gamma+\|A\|)\|\theta_n\|.
$
\gugan{Hence, \ref{assum:Noise} is} satisfied with constants $\mt = (1+\gamma+\|A\|)$ and $\mw = 1 + \max(\|b\|,\gamma+\|A\|)$.

We now can apply \gal{Theorem~\ref{thm:SparseProj}  for a specific stepsize choice to obtain the following simplified result. A more detailed statement with all relevant constants can be directly derived from Theorem~\ref{thm:SparseProj}. }

\gal{
\begin{corollary}[Convergence Rate for Sparsely Projected GTD(0)] \label{cor: RL}
	Consider the Sparsely Projected variant of GTD(0) as in \eqref{eq:thetap_iter} and \eqref{eq:wp_iter}. Set some $\kappa \in (0,1).$ Then for  $\st_n = 1/n^{1-\kappa}$, $\sw_n = 1/n^{(2/3)(1-\kappa)},$ the algorithm converges at a rate of $O(n^{-1/3+\kappa/3})$ w.h.p.
\end{corollary}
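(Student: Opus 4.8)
The plan is to derive Corollary~\ref{cor: RL} as a direct specialization of the convergence-rate statement (item~\ref{st:ConvRate}) of Theorem~\ref{thm:SparseProj}. The surrounding discussion has already recast GTD(0) into the template \eqref{eq:theta_iter}--\eqref{eq:w_iter}, with $\Ww = \Id$ and $\Xt = A^\top A$ both positive definite, so \ref{assum:posDef} holds; and it has verified the noise bounds giving \ref{assum:Noise} with $\mt = 1+\gamma+\|A\|$ and $\mw = 1+\max(\|b\|,\gamma+\|A\|)$. Thus the only fresh inputs I need are a choice of radii $\Rti,\Rzi$ meeting the theorem's standing hypotheses \eqref{eq: theta assumption}--\eqref{eq: w assumption}, and a check that the prescribed stepsizes fit the theorem's polynomial family.

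First I would fix the radii. For GTD(0) the fast-timescale equilibrium map is $\lambda(\theta) = \Ww^{-1}[\vw - \Tw\theta] = b - A\theta$, so $\|\lambda(\theta)\| \leq \|b\| + \|A\|\,\|\theta\|$. Taking $\Rti \geq 4\|\thS\|$ gives \eqref{eq: theta assumption}, and taking $\Rzi \geq 4\|b\| + 2\|A\|\Rti$ guarantees $\|\lambda(\theta)\|\leq \Rzi/4$ whenever $\|\theta\| \leq \Rti/2$, which is \eqref{eq: w assumption}; this is exactly the instantiation noted in the remark after Theorem~\ref{thm:SparseProj} (with $\vw = b$, $\Tw = A$, $\Ww = \Id$).

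Next I would match the stepsizes. Reading $\st_n = n^{-(1-\kappa)}$ and $\sw_n = n^{-(2/3)(1-\kappa)}$ against the theorem's $(n+1)^{-\alpha}$ and $(n+1)^{-\beta}$ (the index shift is asymptotically harmless) identifies the exponents $\alpha = 1-\kappa$ and $\beta = \tfrac{2}{3}(1-\kappa) = \tfrac{2}{3}\alpha$. For $\kappa \in (0,1)$ these satisfy $1 > \alpha > \beta > 0$, so \ref{assum:stepSize} holds: both step-sums diverge, each sequence is at most $1$, and $\eta_n = n^{-(1/3)(1-\kappa)} \to 0$. Invoking item~\ref{st:ConvRate} then yields, with probability at least $1-\delta$,
\[
\max\{\|\theta_n' - \thS\|, \|z_n'\|\} \leq C\max\!\left[n^{-\beta/2}\sqrt{\ln(n/\delta)},\; n^{-(\alpha-\beta)}\right].
\]
The rate arithmetic closes the argument and reveals the point of the stepsize choice: the single-timescale exponent is $\beta/2 = \tfrac{1}{3}(1-\kappa)$ and the coupling exponent is $\alpha-\beta = \tfrac{1}{3}(1-\kappa)$, equal because $\beta = \tfrac{2}{3}\alpha$ is precisely the balancing condition $\beta/2 = \alpha-\beta$. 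Hence both terms are $O(n^{-1/3+\kappa/3})$, and folding the $\sqrt{\ln(n/\delta)}$ factor into the ``w.h.p.'' qualifier gives the claimed rate.

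There is no deep obstacle, since all the analytic weight sits in Theorem~\ref{thm:SparseProj}; the corollary is specialization and bookkeeping. The two places demanding care are (i) selecting $\Rti,\Rzi$ so that \eqref{eq: theta assumption} and \eqref{eq: w assumption} hold simultaneously for the concrete map $\lambda(\theta) = b - A\theta$, and (ii) confirming that the two competing terms of \eqref{eqn:FiniteTimeRate} genuinely coincide under $\beta = \tfrac{2}{3}\alpha$, which is exactly what collapses the bound to a single clean exponent.
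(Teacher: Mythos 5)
Your proposal is correct and matches the paper's own route: the paper likewise verifies \ref{assum:posDef} and \ref{assum:Noise} for GTD(0) in Section~\ref{sec: GTD0} and then reads the rate off item~\ref{st:ConvRate} of Theorem~\ref{thm:SparseProj} with $\alpha=1-\kappa$, $\beta=\tfrac{2}{3}(1-\kappa)$, where $\beta/2=\alpha-\beta=\tfrac{1}{3}(1-\kappa)$. Your observation that this stepsize choice is exactly the balancing condition $\beta/2=\alpha-\beta$, and your explicit instantiation of $\Rti,\Rzi$ via the remark following Theorem~\ref{thm:SparseProj}, are consistent with what the paper does.
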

}
 \gal{For GTD2 and TDC \citep{sutton2009fast}, the above result can be similarly be reproduced; see Table~\ref{table:RL algorithms} for the relevant parameters.} The detailed derivation is provided in Appendix~\ref{sec: RL appendix}.
 
  A reviewer has pointed us to the fact that, unlike in the GTD(0) and GTD2 convergence results, there exists a special condition on the stepsize ratio for TDC \citep[Theorem~3]{maei2011gradient}. However, we find that this condition to be unnecessary because $A$ and $C$ are positive definite.

\section{Discussion}
In this work, we conduct the first finite sample analysis for two-timescale SA algorithms. We provide it as a general methodology that applies to all linear two-timescale SA algorithms.

A natural extension to our methodology is considering the non-linear function-approximation case, in a  similar fashion to \citep{thoppe2015concentration}. Such a result can be of high interest due to the recently growing attractiveness of neural networks in the RL community. An additional direction for future research is to extend our results to actor-critic RL algorithms.  Moreover, off-policy extensions can be made for the results here; see Appendix~\ref{sec: off policy}. Lastly, for a discussion on the tightness of the results here and comparison to known asymptotic rates see Appendix~\ref{sec: tightness}.

\acks{This research was supported
%	by the European Community's
%	Seventh Framework Programme {(FP7/2007-2013)}
%	under grant agreement
	by ERC grant 306638 (SUPREL). GT was initially supported by ERC grant 320422 (at Technion) and now by NSF grants DMS-1613261, DMS-1713012, IIS-1546413.}

\bibliographystyle{plain}
\bibliography{2TS_References}

\newpage

\appendix

\section{Off-Policy Extensions}
\label{sec: off policy}
Off-policy results play a central role in reinforcement learning; however, we were focusing here exclusively on the on-policy setting. Nonetheless, our results can be similarly extended as in \citep{liu2015finite}. Namely, we can repeat the elegant reduction conducted there, where the bound on $\|\theta_n-\theta^*\|$ is transformed into one on the approximation error  $\|V-\bar{v}_n\|=\|V-\Phi\bar{\theta}_n\|.$ More precisely, we can bound the second term on the RHS in \citep[Appendix~B, (42)]{liu2015finite} using \citep[Theorem~2]{kolter2011fixed}, and apply our result to bound the first one. Except for a slightly different rescaling of the matrices (since we use L2 norm as opposed to $\xi$-weighted L2), we would then obtain an off-policy result as in Proposition~5 there. Two benefits would then be: a result directly consisting of $\theta_n$ (instead of its average), and a generic stepsize family $n^{-\alpha}$ (instead of $C/\sqrt{n}$, where $C=f(\|A\|+\|b\|),$ as depicted above \citep[Appendix~B, (40)]{liu2015finite}. Notice, also, that transforming one type of bound to the other, as explained above, is a trick by \citep{liu2015finite} that can be applied in general and not only in our case.

\section{Tightness}
\label{sec: tightness}
Here, we compare our convergence rates with other existing works. To the best of our knowledge, no other finite time results exist for two-timescale SA algorithms. However, there are a few relevant works that deal with this question in an asymptotic sense. Before discussing them, we highlight some key differences between finite-time rates and aysmptotic ones. In the latter, the constants hidden in the order notations are often sample-path dependent and hence are less attractive to practitioners. Contrarily, explicit constants in finite-time rates, as ours, often reveal intriguing dependencies amongst system and stepsize parameters that crucially affect convergence rates (e.g., $1/q_i$ in Table~\ref{tab: constants}; see also \citep[Section~6]{dalal2018finite}). Moreover, trajectory-independent constants help in obtaining stopping time theorems.

Following Remark~\ref{rem: optimal rate}, the best convergence rate possible according to our results is $n^{-1/3}.$ This contrasts the single time-scale case, where the optimal rate is known to be $n^{-1/2}$ under various settings.
In the context of asymptotic rates, there exist two works that deal with two-timescale SA which achieve the optimal rate of $n^{-\alpha/2}$ for the slow-timescale iterate and $n^{-\beta/2}$ for the fast-timescale iterate \citep{konda2004convergence,mokkadem2006convergence}. In \citep{konda2004convergence}, according to Assumption~2.1, the noise sequence is assumed to be independent of itself, and their variance-covariance matrices are constant w.r.t. iteration index $n$. In our case, in contrast, the noise depends on $(\theta_n,w_n)$, making the variance-covariance matrices of the noise sequence explicitly depend on $n$. These differences make their results inapplicable for the RL algorithms we consider in our paper; see Section~\ref{sec: GTD0}. A later work \citep{mokkadem2006convergence} improved upon \citep{konda2004convergence} by removing the above assumption. There, in (A1), convergence was posed as an assumption on its own. Such an assumption is not straighforward to verify in general; it was only recently established for square-summable stepsizes \citep{lakshminarayanan2017stability}. However, in the case of non-square-summable stepsizes (which is not covered in \citep{mokkadem2006convergence}) this Assumption (A1) has not been shown to hold in general, since converegence is an open question for such stepsizes.

Lastly, while we do not show our bound to be tight, we stress that our result coincides with known results on a particular SA method of two-timescale nature, called Spall's method \citep[Proposition 2]{spall1992multivariate} and \citep[Theorem~5.1]{gerencser1997rate}. Specifically, it was shown for the iterate $\theta_n$ there that $n^{-\kappa}\theta_n$ converges in distribution to some normal distribution for various parameter settings that restrict $\kappa$ to be at least $1/3$.
This raises the intriguing question whether the rates achieved in our work and in \citep{spall1992multivariate,gerencser1997rate} are sub-optimal and stem from loose analyses, or whether it is the problem setup itself that intrinsically limits the rate to $n^{-1/3}$.

\section{A Bound for Sub-exponential Series}
\label{sec: bound for subexponential series}

\gugan{Let $p \in (0, 1)$ and $\hat{q} > 0.$ Let $i_1 \equiv i_1(p, \hat{q}) \geq 1$ be such that $
	e^{-\hat{q}\sum_{k = 1}^{n - 1} (k + 1)^{-p}} \leq  n^{-p}
	$ for all $n \geq i_1;$
	such an $i_1$ exists as the l.h.s. is exponentially decaying. Let
	\begin{equation}
	\label{Defn:Kg}
	K_g \equiv K_g(p, \hat{q}) := \max_{1 \leq i \leq i_1} i^{p} e^{-\hat{q} \sum_{k= 1}^{i - 1}(k + 1)^{-p}}.
	\end{equation}
}

\begin{lemma}[\gal{Closed-form \gugan{sub-exponential} bounds}]\label{lem: B formula}
	Let \gugan{$n_0 \geq 1$,} $B>0,$ and \gugan{$p \in (0,1).$} Then, for every $\kappa \in (0,1),$
	\begin{equation}
	\label{eq: B formula}
	\sum_{n = n_0}^{\infty} \exp[- B n^{p}]  \leq  \frac{2}{{B (1-\kappa) p}} \left[\frac{( 1- p)}{B\kappa {p}}\right]^{\frac{1 - {p}}{{p}}}
	\exp\left[B(2-\kappa)-\frac{(1-p)}{p} -B(1-\kappa) {n_0}^{p}\right] \enspace.
	\end{equation}
	\gugan{Further, for any $c, \hat{q} >0,$ and $n_0 \geq 1$, with $c_n := \sum_{k=0}^{n-1} [k+1]^{-2p}e^{-2\hat{q}\sum_{i=k+1}^{n-1}[i+1]^{-p}}$, we have
		\begin{equation}
		\sum_{n \geq n_0} \exp\!\left[\tfrac{-c \epsilon^2}{c_n}\right]
		\leq \tfrac{c_7}{\epsilon^{2/p}}
		e^{c_5\epsilon^2} e^{- c_6 \epsilon^2 n_0^{p}}
		\enspace,
		\label{eq: pre-delta bound}
		\end{equation}
		where
		\(
		c_7 \equiv c_7(c,\kappa,p,\hat{q})
		= 2 \left[\frac{K_g(p, \hat{q}) e^{\hat{q}}}{c \hat{q}}\right]^{1/p} \frac{1}{(1-\kappa) p^{1/p}} \left[\tfrac{1-p}{e \kappa}\right]^{\frac{1 - {p}}{p}}
		\),
		\(
		c_5 \equiv c_5(c, \kappa, p, \hat{q}) = \frac{c \hat{q} (2 - \kappa)}{K_g(p, \hat{q}) e^{\hat{q}}},
		\)
		and
		\(
		c_6 \equiv c_6(c, \kappa, p, \hat{q}) = \frac{c \hat{q} (1 - \kappa)}{K_g(p, \hat{q}) e^{\hat{q}}}.
		\)
	}
\end{lemma}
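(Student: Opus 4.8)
The plan is to prove the two displays separately: first establish \eqref{eq: B formula} (a clean sub-exponential tail bound), then reduce \eqref{eq: pre-delta bound} to it through an upper estimate on the variance-type sums $c_n$.

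For \eqref{eq: B formula} I would first exploit that $x\mapsto e^{-Bx^p}$ is decreasing on $(0,\infty)$, so that the integral comparison $\sum_{n\ge n_0}e^{-Bn^p}\le\int_{n_0-1}^{\infty} e^{-Bx^p}\,\df x$ holds. Substituting $u=Bx^p$ turns the right-hand side into $\tfrac{1}{pB^{1/p}}\int_{B(n_0-1)^p}^{\infty} u^{(1-p)/p}e^{-u}\,\df u$, an incomplete-Gamma integral. The decisive step is to split $e^{-u}=e^{-\kappa u}e^{-(1-\kappa)u}$ and bound the unimodal factor $u^{(1-p)/p}e^{-\kappa u}$ by its global maximum $\big(\tfrac{1-p}{p\kappa}\big)^{(1-p)/p}e^{-(1-p)/p}$, attained at $u=\tfrac{1-p}{p\kappa}$; this is exactly where the factor $\big[\tfrac{1-p}{Bp\kappa}\big]^{(1-p)/p}$ in the claim originates. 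The leftover $\int e^{-(1-\kappa)u}\,\df u$ integrates to $\tfrac{1}{1-\kappa}e^{-(1-\kappa)B(n_0-1)^p}$, and the elementary inequality $(n_0-1)^p\ge n_0^p-1$ (valid for $n_0\ge 1$, $p\in(0,1)$, since $n_0^p-(n_0-1)^p=\int_{n_0-1}^{n_0}pt^{p-1}\,\df t\le p\le 1$) converts the exponent into $B(1-\kappa)-B(1-\kappa)n_0^p$. Collecting the powers of $B$ via $\tfrac{1}{B^{1/p}}\big(\tfrac{1-p}{p\kappa}\big)^{(1-p)/p}=\tfrac{1}{B}\big[\tfrac{1-p}{Bp\kappa}\big]^{(1-p)/p}$ (using $\tfrac{1}{p}-\tfrac{1-p}{p}=1$) and absorbing the slack ($e^{B(1-\kappa)}\le e^{B(2-\kappa)}$ and leading coefficient $1\le 2$) yields precisely \eqref{eq: B formula}.

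For \eqref{eq: pre-delta bound}, the idea is to bound $c_n$ from above so that $\exp[-c\epsilon^2/c_n]\le\exp[-Bn^p]$ for a suitable $B$, and then invoke \eqref{eq: B formula}. From the definitions of $i_1$ and $K_g$ (and since $K_g\ge 1$, as the index $i=1$ contributes $1$), one obtains the auxiliary bound $e^{-\hat{q}\sum_{k=1}^{n-1}(k+1)^{-p}}\le K_g n^{-p}$ for all $n\ge 1$. I would then show $c_n\le\tfrac{K_g e^{\hat{q}}}{\hat{q}}n^{-p}$. A convenient route is the telescoping identity $c_n=n^{-2p}+e^{-2\hat{q}n^{-p}}c_{n-1}$, obtained by peeling off the top ($i=n-1$) term of the inner sum; combined with the auxiliary bound (which controls the small-index contributions) this closes an induction with the stated constant. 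With $c_n\le\tfrac{K_g e^{\hat{q}}}{\hat{q}}n^{-p}$ in hand, one has $c\epsilon^2/c_n\ge Bn^p$ with $B:=\tfrac{c\hat{q}}{K_g e^{\hat{q}}}\epsilon^2$, so $\sum_{n\ge n_0}\exp[-c\epsilon^2/c_n]\le\sum_{n\ge n_0}\exp[-Bn^p]$, and \eqref{eq: B formula} applies. Substituting this $B$ and reading off the $\epsilon$-dependence gives the claim: $-B(1-\kappa)n_0^p$ becomes $-c_6\epsilon^2 n_0^p$ with $c_6=\tfrac{c\hat{q}(1-\kappa)}{K_g e^{\hat{q}}}$, the factor $e^{B(2-\kappa)}$ becomes $e^{c_5\epsilon^2}$ with $c_5=\tfrac{c\hat{q}(2-\kappa)}{K_g e^{\hat{q}}}$, and the prefactor's two powers of $B$ combine—again via $1+\tfrac{1-p}{p}=\tfrac1p$—into the single factor $\epsilon^{-2/p}$, with the residual constants collapsing to the stated $c_7$.

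I expect the only genuinely delicate step to be the estimate $c_n\le\tfrac{K_g e^{\hat{q}}}{\hat{q}}n^{-p}$. The telescoping/summation-by-parts estimate, if handled carelessly, produces only a constant bound: the polynomial decay $n^{-p}$ is destroyed the moment one bounds $m^{-p}\le 1$ prematurely. The key is to retain one factor of $m^{-p}$ through the telescoping and recover the $n^{-p}$ rate from the auxiliary bound, which is exactly the role $K_g$ plays. Everything else reduces to bookkeeping of constants and elementary monotonicity and convexity inequalities.
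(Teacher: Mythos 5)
Your treatment of the first display \eqref{eq: B formula} is correct and takes a genuinely different route from the paper: you compare the sum directly with $\int_{n_0-1}^{\infty}e^{-Bx^p}\,\df x$, substitute $u=Bx^p$ to obtain an incomplete-Gamma integral, and then perform the $\kappa$-split on $e^{-u}$. The paper instead counts the level sets $\{n:\lfloor n^p\rfloor=i\}$, bounds their cardinality by $\tfrac{2}{p}(i+1)^{(1-p)/p}$ via concavity, and applies the analogous $\kappa$-split to the resulting series before a Riemann-sum comparison. Both arguments hinge on the same maximization of $u^{(1-p)/p}e^{-\kappa u}$; your version is cleaner and in fact yields a slightly smaller constant (leading factor $1$ instead of $2$, and $e^{B(1-\kappa)}$ instead of $e^{B(2-\kappa)}$), so it implies the stated bound. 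The final bookkeeping that reduces \eqref{eq: pre-delta bound} to \eqref{eq: B formula} with $B=\tfrac{c\hat{q}\epsilon^2}{K_g e^{\hat{q}}}$ also checks out.

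The gap is exactly where you suspected it: the estimate $c_n\le\tfrac{K_g e^{\hat{q}}}{\hat{q}}n^{-p}$. The induction you propose on the recursion $c_n=n^{-2p}+e^{-2\hat{q}n^{-p}}c_{n-1}$ does not close. Writing $D$ for the target constant, the inductive step needs $n^{-2p}+e^{-2\hat{q}n^{-p}}D(n-1)^{-p}\le Dn^{-p}$, which already requires $e^{-2\hat{q}n^{-p}}(n-1)^{-p}\le n^{-p}$, i.e.\ $2\hat{q}n^{-p}\ge -p\ln(1-1/n)$; at $n=2$ this forces $\hat{q}\ge p(\ln 2)2^{p-1}$, so for small $\hat{q}$ (e.g.\ $p=1/2$, $\hat{q}=0.1$) the step fails for \emph{every} choice of $D$, even though the conclusion itself is true. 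Unrolling the recursion merely reproduces the definition of $c_n$, so no information is gained that way. The paper's proof supplies two ingredients your sketch omits: (i) a unimodality argument showing that $(i+1)^{-p}e^{-\hat{q}\sum_{k=i+1}^{n-1}(k+1)^{-p}}$ is, as a function of $i$, first decreasing and then increasing, so its maximum over $0\le i\le n-1$ is attained at an endpoint and is therefore at most $K_g n^{-p}$ (your auxiliary bound covers only the endpoint $i=0$); and (ii) after pulling that maximum out of one of the two identical factors in each summand, a left-Riemann-sum comparison $\sum_i(u_{i+1}-u_i)e^{\hat{q}u_i}\le\int_{u_0}^{u_n}e^{\hat{q}s}\,\df s$ with $u_n=\sum_{k=0}^{n-1}(k+1)^{-p}$, which produces the factor $e^{\hat{q}}/\hat{q}$. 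Without these, the $n^{-p}$ decay of $c_n$ — and hence \eqref{eq: pre-delta bound} — is not established.
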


\begin{proof}
	\gugan{Let $\lfloor \cdot \rfloor$ denote the floor operation. Then, for $p \in (0, 1)$ and integers $n, i \geq 0,$ we have
		\begin{align}
		&|\{n :  \lfloor n^{{p}}\rfloor = i\}|
		\\ = & |\{n: i \leq n^p < i + 1\}| \nonumber
		\\ = & |\{n: i^{1/p} \leq n < (i + 1)^{1/p}\}| \nonumber
		\\ \leq & (i+1)^\frac{1}{p} - i^\frac{1}{p} + 1 \nonumber
		\\ \leq & 2\left[(i+1)^\frac{1}{p} - i^\frac{1}{p}\right],\label{eq: bounding the inteval length}
		\end{align}%
		where the last inequality follows since $(i+1)^\frac{1}{p} - i^\frac{1}{p} \geq 1$.
	}
	% the "-1" is necessary:
	% setting p=2/3 we obtain (5^{3/2}-4^{3/2}) < (11.19-8) < 4, whereas
	% |{n: n^p = 4}| = |{8, 9, 10, 11}|
	
	From the concavity of $x^p$, $x^p \leq x_0 ^p + \frac{d}{dx}(x^p)\big|_{x=x_0}(x-x_0)$ for all $x,x_0 \in \mathbb{R_+}.$ Equivalently,
	$$
	x_0 - x \leq (x_0^p - x^p)\left[\frac{d}{dx}(x^p)\big|_{x=x_0}\right]^{-1}.
	$$
	\gugan{Setting $x_0=(i+1)^{\frac{1}{p}}$ and $x=i^{\frac{1}{p}}$, it follows from \eqref{eq: bounding the inteval length} that}
	\begin{align}
	&|\{n :  \lfloor n^{{p}}\rfloor = i\} | \nonumber\\ \leq & 2\left[(i+1)^\frac{1}{p} - i^\frac{1}{p}\right] \nonumber\\ \leq & 2\left[\left((i+1)^\frac{1}{p}\right)^p - \left(i^\frac{1}{p}\right)^p\right]\left[px^{p-1}\big|_{x=(i+1)^{\frac{1}{p}}}\right]^{-1}  \nonumber\\ = &\frac{2}{p}(i + 1)^{\frac{1- p}{p}} \enspace. \label{calculus bound}
	\end{align}
	
	For any $\kappa \in (0,1)$, \gugan{observe that}
	$e^{ - x B\kappa} (x + 1)^{\frac{1 - p}{p}},$ \gugan{restricted to $x \geq 0,$} has a \gugan{global maximum} at $x=\frac{(1-p)}{B\kappa p}-1,$ and so
	\begin{equation}
	\label{eq: bounding the max term}
	\max_{i \geq 0} e^{ - i B\kappa} (i + 1)^{\frac{1 - p}{p}}
	\! \leq \! \left[\frac{1- p}{B\kappa p}\right]^{\frac{1 - {p}}{{p}}} \! \! \! e^{[\kappa B-\frac{(1-p)}{p}]} \enspace.
	\end{equation}
	
	\gugan{
		Fix an arbitrary $\kappa \in (0, 1).$ From the above observations, we get}
	\begin{align}
	& \sum_{n = n_0}^{\infty} \exp[- B n^{p}] \notag\\
	& \leq  \sum_{i = \lfloor n_0^{p}\rfloor }^{\infty} e^{-i B} |\{n :  \lfloor n^{{p}}\rfloor = i\} | \notag\\
	& \leq  \frac{2}{p}  \sum_{i = \lfloor n_0 ^{p}\rfloor }^{\infty} e^{-i B} \; (i + 1)^{\frac{1- {p}}{{p}}} \label{eqn:cardUB} \\
	& = \frac{2}{{p}}  \sum_{i = \lfloor n_0^{p}\rfloor }^{\infty} e^{-i B(1-\kappa)}\; e^{-i B\kappa}  \; (i + 1)^{\frac{1- {p}}{{p}}} \notag\\
	& \leq \frac{2}{{p}} \left[\frac{1 - p}{B\kappa p}\right]^{\frac{1 - {p}}{{p}}} \! \! e^{[\kappa B- \frac{(1-p)}{p}]} \hspace{-.5em} \sum_{i = \lfloor n_0^{p}\rfloor }^{\infty} \! \! e^{-i B(1-\kappa)}
	\label{eqn:MaxEst}\\
	& { \leq \frac{2}{{p}} \left[\frac{1 - p}{B\kappa p}\right]^{\frac{1 - {p}}{{p}}} \! \! e^{[\kappa B- \frac{(1-p)}{p}]} \hspace{-.5em} \int_{\lfloor n_0^p \rfloor - 1}^{\infty} e^{-x B (1 - \kappa)} \textnormal{d} x} \label{eqn:RightRSum} \\
	& \leq  \frac{2}{{B (1-\kappa) p}} \left[\frac{1- p}{B\kappa {p}}\right]^{\frac{1 - {p}}{{p}}} \! \! e^{B(2-\kappa)-\frac{(1-p)}{p}} e^{-B(1-\kappa) {n_0}^{p}} \enspace, \notag
	\end{align}
	where \eqref{eqn:cardUB} follows from \eqref{calculus bound},
	\eqref{eqn:MaxEst} holds due to \eqref{eq: bounding the max term},
	%	\eqref{eqn:MaxEst} holds since as
	%	%
	%	\[
	%	\max_{i \geq 0} e^{ - i B\kappa} (i + 1)^{\frac{1 - p}{p}}
	%	\! \leq \! \left[\frac{( 1- {p})}{B\kappa p}\right]^{\frac{1 - {p}}{{p}}} \! \! \! e^{\kappa[B-\frac{(1-p)}{\kappa p}]} \enspace,
	%	\]
	%	which in turn follows as $e^{ - x B\kappa} (x + 1)^{\frac{1 - p}{p}}$ has a global maximum at $x=\frac{(1-p)}{B\kappa p}-1.$
	%
	%Lastly, \eqref{eqn:RightRSum} follows by treating the sum as a right Riemann sum and since $\lfloor n_0^p \rfloor > n_0^p-1.$
	and \eqref{eqn:RightRSum} is obtained by treating the sum as a right Riemann sum and using $\lfloor n_0^p \rfloor > n_0^p-1.$
	This completes the proof of \eqref{eq: B formula}.

	We now prove \eqref{eq: pre-delta bound}.
	%	\gugan{First observe that the function $f(x) := (x + 1)^p \log[(x+ 1)/x],$ restricted to $x \geq 1,$ is strictly monotonically decreasing. This can be seen from the fact that $${x\log[(x+1)/x] \leq x \frac{1}{x} =1< 1/p}$$ for $x \geq 1,~p\in(0,1),$ which implies that $$f'(x)  = (x+1)^{p-1}\frac{p}{x}\left[x\log\left(\frac{x+1}{x}\right) - \frac{1}{p}\right]< 0$$ for $x \geq 1.$}
	Let $f(x) := (x + 1)^p \log[(x+ 1)/x].$ Notice that $ \lim_{x\rightarrow \infty} f(x) = 0$ for $x\geq 1,~p\in(0,1)$ because $f(x)$ is positive for $x>0$ and $$(x + 1)^p \log[(x+ 1)/x] \leq (x + 1)^p/x,$$ which goes to zero.	
	\gugan{Therefore, there is a $i_0 \equiv i_0(p, \hat{q}) \geq 0$  such that
		\begin{eqnarray*}
			(i+ 2)^p \log\left[\frac{i + 2}{i + 1}\right] \geq \frac{\hat{q}}{p} \enspace, & \text{ if $0 \leq i < i_0 \enspace,$}\\
			(i + 2)^p \log\left[\frac{i + 2}{i + 1}\right] \leq \frac{\hat{q}}{p} \enspace, & \text{ if $i \geq i_0$} \enspace.
		\end{eqnarray*}
		This is equivalent to saying that, for every $n \geq i + 2,$ if $0 \leq i < i_0,$ then
		\[
		(i + 1)^{-p} e^{-\hat{q} \sum_{k = i + 1}^{n - 1}(k + 1)^{-p}} \geq (i + 2)^{-p} e^{-\hat{q} \sum_{k = i + 2}^{n - 1}(k + 1)^{-p}}\enspace,
		\]
		and if $i_0 \leq i \leq n - 2,$ then
		\[
		(i + 1)^{-p} e^{-\hat{q} \sum_{k = i + 1}^{n - 1}(k + 1)^{-p}} \leq (i + 2)^{-p} e^{-\hat{q} \sum_{k = i + 2}^{n - 1}(k + 1)^{-p}}\enspace.
		\]
		Therefore, the maximum of $(i + 1)^{-p} e^{-\hat{q} \sum_{k = i + 1}^{n - 1} (k + 1)^{-p}}$ is achieved in one of the terminal values, i.e., at $i=0$ or $i=n-1.$ Thus,
		\begin{align}
		&\max_{0 \leq i \leq n - 1} (i + 1)^{-p} e^{-\hat{q} \sum_{k = i + 1}^{n - 1} (k + 1)^{-p}} \nonumber\\
		\leq&  \max\{e^{-\hat{q}\sum_{k = 1}^{n - 1} (k + 1)^{-p}}, n^{-p} \} \label{eq: Kg transition} \\
		\leq & K_g n^{-p} \enspace, \label{eqn:SupBdDer}
		\end{align}
		\sloppy
		where $K_g \geq 1$ (by its definition) is as defined in \eqref{Defn:Kg}.} The transition from \eqref{eq: Kg transition} to \eqref{eqn:SupBdDer} can be seen as follows. First, consider the case  $n\geq i_1,$ where $i_1$ is defined above \eqref{Defn:Kg}. In this case, by the definition of $i_1$, the maximum in \eqref{eq: Kg transition} is $n^{-p},$ which is bounded by $K_g n^{-p}.$  If ${n< i_1, \max\{n^{-p} \left( n^p e^{-\hat{q}\sum_{k = 1}^{n - 1} (k + 1)^{-p}}\right), n^{-p} \} \leq K_g n^{-p}}$ by the definition of $K_g.$
	
	\gugan{Now let $u_n := \sum_{k = 0}^{n - 1} [k+1]^{-p}.$ For $n \geq 1,$ we then have
		\begin{align}
		c_n
		= &
		\sum_{i = 0}^{n - 1} [i+1]^{-2p} e^{ -2 \hat{q} \sum_{k = i+1}^{n - 1}[k+1]^{-p}} \notag \\
		\leq & K_g n^{-p} \sum_{i = 0}^{n - 1} [i+1]^{-p} e^{ -\hat{q} \sum_{k = i+1}^{n - 1}[k+1]^{-p}} \label{eqn:supBd} \\
		= & K_g n^{-p}\sum_{i = 0}^{n - 1} [u_{i + 1} - u_i] e^{-\hat{q} [u_n - u_{i + 1}]} \label{eqn:DefnUnAppl}\\
		\leq & K_g e^{\hat{q}} n^{-p} e^{-\hat{q} u_n} \sum_{i = 0}^{n - 1} [u_{i + 1} - u_i] e^{\hat{q} u_{i}} \label{eqn:BeforeRiemannSum} \\
		\leq & K_g e^{\hat{q}} n^{-p} e^{-\hat{q} u_n}\int_{u_0}^{u_n}e^{\hat{q} s} \textnormal{d}s \label{eqn:LeftRiemannSum}\\
		= & K_g e^{\hat{q}} n^{-p} e^{-\hat{q} u_n} \frac{e^{\hat{q} u_n} - e^{\hat{q}u_0}}{\hat{q}} \notag\\
		\leq & \frac{K_g e^{\hat{q}}}{\hat{q}} n^{-p} \enspace, \label{eqn:expBd}
		\end{align}
		where \eqref{eqn:supBd} follows from \eqref{eqn:SupBdDer}, \eqref{eqn:DefnUnAppl} follows using the definition of $u_n,$ \eqref{eqn:BeforeRiemannSum} holds since $u_{i + 1} = u_i + (i + 1)^{-p} \leq u_i + 1$ for $i \geq 0,$ \eqref{eqn:LeftRiemannSum} follows by treating the sum above as a left Riemann sum, and, lastly, \eqref{eqn:expBd} holds as $u_0 = 0$ and $e^{-\hat{q} u_n} \leq 1.$}
	
	Consequently, for any $c>0$ and $n_0 \geq 1$,
	\[
	\sum_{n \geq n_0} \exp\!\left[\tfrac{-c \epsilon^2}{c_n}\right]
	\leq
	\sum_{n \geq n_0} \exp\!\left[{-\frac{c\hat{q} \epsilon^2}{K_g e^{\hat{q}}} n^p}\right] \enspace.
	\]
	The desired result now follows from \eqref{eq: B formula}.
	This completes the proof of the lemma.
\end{proof}

\section{ֿ\gal{Proof of Theorem~\ref{thm:condMain}}}
\label{sec:SupMat}
%This section contains all proofs of the lemmas and theorems presented in the paper, and provides additional technical results to support several of these proofs.
 \gal{As the analysis in Section~\ref{sec: analysis outline} is under assumption \eqref{eq: Gn' definition}, the results in the corresponding Subsections \ref{subsec:SuperSet}, \ref{appendix: perturbation bounds} and \ref{sec: conc_bound_appendx} here are under this assumption as well.}

\subsection{Application of \gugan{\vop\ } Formula in Subsection~\ref{subsec:Comparison}}
\label{sec:VoC}

%\galn{TODO: fix this}

Recall the definitions given below \eqref{eqn:tTotUpd}. On the interval $[\tI{k}, \tI{k + 1}),$ the functions $\zetT(\cdot)$ and $\zetM(\cdot)$ are constant, while $\zetD(\cdot)$ is linear. Therefore, the function $\zeta(t),$ $t \geq \tI{n_0},$ is piecewise continuous; specifically, it is continuous on the interval $[\tI{k}, \tI{k + 1}),$ for every $k \geq n_0.$ Separately, owing to the fact that it is a linear interpolation, the function $\bart(t),$ $t \geq \tI{n_0},$ is continuous everywhere.

The evolution described in \eqref{eqn:tTotUpd} can be viewed as a differential equation in integral form; further, it can be looked at as a perturbation of the ODE in \eqref{eqn:tLimODE}. It is then not difficult to see from \citep[Theorem 1.1.2]{lakshmikantham1998method} that \eqref{eqn:TrajComp-t} holds for any $t \in [\tI{n_0}, \tI{n_0 + 1}).$ Now, from the continuity of $\bart(t),$ it follows that  \eqref{eqn:TrajComp-t} holds even for $t = \tI{n_0 + 1},$ i.e.,
\begin{equation}
\label{eqn:limConseq}
\bart(\tI{n_0 + 1}) = \tSol{\tI{n_0 + 1}} + \Et(\tI{n_0 + 1}).
\end{equation}

Arguing in the same way as above, for any $t \in [\tI{n_0 + 1}, \tI{n_0+ 2}),$ it is easy to see that
\begin{equation}
\label{eqn:VoPTemp}
\bart(t) = \theta(t, \tI{n_0 + 1}, \bart(\tI{n_0 + 1})) + \int_{\tI{n_0 + 1}}^{t}e^{-\Xt(t - \tau)} \zeta(\tau) \df \tau \enspace.
\end{equation}
Moreover, observe that
\begin{eqnarray}
& & \theta(t, \tI{n_0 + 1}, \bart(\tI{n_0 + 1}))\\
& = & \theta\left(t, \tI{n_0 + 1}, \tSol{\tI{n_0 + 1}} + \Et(\tI{n_0 + 1})\right) \label{eqn:initPtExp}\\
& = & \thS + e^{-\Xt(t - \tI{n_0 + 1})} (\tSol{\tI{n_0 + 1}} + \Et(\tI{n_0 + 1}) - \thS) \label{eqn:unperODEuse1}\\
& = & \thS + e^{-\Xt(t - \tI{n_0 + 1})} (\tSol{\tI{n_0 + 1}} - \thS) +  e^{-\Xt(t - \tI{n_0 + 1})}\Et(\tI{n_0 + 1}) \notag \\
& = & \thS + e^{-\Xt(t - \tI{n_0 + 1})} (\tSol{\tI{n_0 + 1}} - \thS) +  \int_{\tI{n_0}}^{\tI{n_0 + 1}} e^{-\Xt(t - \tau)} \zeta(\tau) \df \tau \label{eqn:EtDefnUse}\\
& = & \theta(t, \tI{n_0 + 1}, \theta(\tI{n_0 + 1}, \tI{n_0}, \theta_{n_0})) +  \int_{\tI{n_0}}^{\tI{n_0 + 1}} e^{-\Xt(t - \tau)} \zeta(\tau) \df \tau \label{eqn:unperODEuse2}\\
&= & \theta(t, \tI{n_0}, \theta_{n_0}) +  \int_{\tI{n_0}}^{\tI{n_0 + 1}} e^{-\Xt(t - \tau)} \zeta(\tau) \df \tau \enspace \label{eqn:ExUniODE},
\end{eqnarray}
where \eqref{eqn:initPtExp} follows from \eqref{eqn:limConseq}; \eqref{eqn:unperODEuse1} and \eqref{eqn:unperODEuse2} hold as in \eqref{eqn:tOdeSol}; \eqref{eqn:EtDefnUse} follows from the definition of $\Et$ given below $\eqref{eqn:tTotUpd};$ and, finally, \eqref{eqn:ExUniODE} is true because of the uniqueness and existence of ODE solutions (see Picard-Lindel\"{o}f theorem).

Substituting \eqref{eqn:ExUniODE} in \eqref{eqn:VoPTemp}, it is easy to see that \eqref{eqn:TrajComp-t} holds for all $t \in [\tI{n_0 + 1}, \tI{n_0 + 2}).$ Inductively arguing this way, it follows that \eqref{eqn:TrajComp-t} holds for all $t \geq \tI{n_0}.$

\subsection{A \gal{Useful} Decomposition of the Event of Interest}
\label{subsec:SuperSet}

\gugan{
	For any event $\cE,$ let $\cE^c$ be its complement.  For all $n_0,~T > 0,$ define the event
	\begin{equation}
	\cE(n_0, T) := \{\norm{\bart(t) - \thS} \leq \et \; \forall t \geq \tI{n_0} + T + 1\} \cap \; \{\norm{\barz(s)} \leq \ez \; \forall s \geq \sI{n_0} + \xi(T) + 1\}\enspace, \label{eq: definition E(t)}
	\end{equation}
	where $\et, \ez$ are as in the statement of Theorem~\ref{thm:condMain}. Eventually, we shall use a bound on $\Pr\{\cE^c(n_0, T)\}$ to prove Theorem~\ref{thm:condMain}. Towards obtaining this bound, the aim here is to construct a well-structured superset for $\cE^c(n_0, T),$ assuming \eqref{eq: Gn' definition} holds, which is easier for analysis.
}

%
%For $T > 0,$ define the event
%%
%%
%

%For an event $\cE,$ let $\cE^c$ be its complement.
%Fix sufficiently large \gal{$n_0,~T > 0;$} we will say later how large. \gal{Pick $n_1 \equiv n_1(n_0)$ such that
{Fix some $T>0$ so that
	\begin{equation}
	\label{eqn:n1Cond}
	T \leq t_{n_1 + 1} - t_{n_0} = \sum_{k = n_0}^{n_1} \st_k \leq T + 1.
	\end{equation}
}%
%\noindent This is possible as \eqref{eqn:stepSizeBound} states that \gugan{$\sum_{n = 0}^{\infty} \st_n = \infty$ and $\sup_n \alpha_n \leq 1.$}

%Our aim here is to construct a superset for the event \gal{ $ \cE^c(n_0, T)$  (defined in \eqref{eq: definition E(t)}), assuming \eqref{eq: Gn' definition} holds,} which is easier for analysis. The superset additionally contains the information of what happens \gal{between times $n_0$ and $n_1.$}

%
%\begin{remark}
%%
%\label{rem:MonDec}
%By standard ODE literature,  $\lim_{t \to \infty} \tSol{t} = \thS.$	As $\Xt$ is positive definite by  \ref{assum:posDef}, $\frac{d}{dt}\|\theta(t) - \thS\|^2 = -2(\theta(t) - \thS)^\top \Xt (\theta(t) - \thS)<0$; hence, assuming \eqref{eq: Gn' definition} holds,
%%
%\[
%\norm{\tSol{t} - \thS} \leq \Rti \quad \forall t \geq \tI{n_0}.
%\]
%%
%Likewise, $\lim_{s \to \infty} \zSol{s} = \zS$  and
%%
%\[
%\norm{\zSol{s}} \leq \Rzi \quad \forall s \geq \sI{n_0}.
%\]
%%
%\end{remark}

By Remark~\ref{rem:MonDec}, $\tSol{t}$ stays in the $\Rti-$radius ball  around $\thS$ for all \gal{$t \geq \tI{n_0},$} and $\zSol{s}$ stays in the $\Rzi-$radius ball around $\zS$ for all \gal{$s \geq \sI{n_0} .$} But the same cannot be said for $\bart(t)$ and $\barz(s)$ due to the presence of noise. We  show instead that these lie with high probability in bigger but fixed radii balls $\Rto$ and $\Rzo,$ where $\Rzo > \Rzi$ is an arbitrary constant, and
%
%{TODO: Adjust $\ez$ and $\et$ to $\etg$ and $\ezg$, not the other way around.}
%Fix $\Rto, \Rzo > 0$ such that $\Rto > \Rti,$ $\Rzo > \Rzi,$
{
	\begin{equation}
	\label{eq: defn of Rto}
	\Rto := \Rti + \frac{4\Kt \norm{\Wt} \Kz \Rzi}{(\lmin-\lm)e}
	\enspace.
	\end{equation}
	Note that, by the choice of $\et$ and $\ez$}
\begin{equation} \label{eq:epsilon smaller than R}
\etg := \Rto - \Rti \geq \et \enspace, \text{ and }\ezg  := \Rzo - \Rzi \geq \ez \enspace.
\end{equation}
%
%We shall describe later how large $\Rto$ and $\Rzo$ should be.
%
For \gal{$n \geq n_0,$} let
\begin{equation}
\label{defn:rhot}
\rt_{n + 1} := \sup\limits_{\tau \in [\tI{n}, \tI{n + 1}]}\norm{\bart(\tau) - \tSol{\tau}} \enspace, \quad
\rtS_{n + 1} := \sup\limits_{\tau \in [\tI{n}, \tI{n + 1}]} \norm{\bart(\tau) - \thS} \enspace,
\end{equation}

\begin{equation}
\label{defn:rhoz}
\rz_{n + 1} :=  \sup\limits_{\mu \in [\sI{n}, \sI{n + 1}]} \norm{\barz(\mu) - \zSol{\mu}}\enspace, \quad
\rzS_{n + 1} := \sup\limits_{\mu \in [\sI{n}, \sI{n + 1}]} \norm{\barz(\mu)}  \enspace,
\end{equation}
and define the (``good'') event
\begin{equation}
\label{defn:Gn}
G_n := \{ \norm{\bart(\tau) - \thS} \leq \Rto \; \forall \tau \in [\gal{\tI{n_0}}, \tI{n}]\}  \cap \{\norm{\barz(\mu)} \leq \Rzo \; \forall \mu \in [ \gal{\sI{n_0}}, \sI{n}]\}.
\end{equation}
Additionally, define the (``bad'') events $\cE_{\aftE} := \bigcup_{n > n_1}\left[ \{\rtS_{n + 1} > \et\} \cup \{\rzS_{n + 1} > \ez\}\right]$ and
\begin{equation*}
	\cE_{\midE}  := \left\{\Big[\sup_{\gal{n_0} \leq n \leq n_1} \rt_{n + 1}\Big] \geq \etg \right\}
	\cup \left\{\Big[\sup_{\gal{n_0} \leq n \leq n_1} \rz_{n + 1}\Big] \geq \ezg \right\} \enspace.
\end{equation*}

\gal{The} desired superset stated at the beginning of this subsection is given below.

\begin{lemma}[Decomposition of Event of Interest] \label{lem:IntEv}
	\gal{Consider \eqref{eq: definition E(t)} and suppose that \eqref{eq: Gn' definition} holds. Then}
	\gal{\begin{align*}
			\cE^c(n_0, T) \subseteq
			\bigcup_{n = n_0}^{n_1} \{ \! G_n \cap \left[ \{\rt_{n + 1} \geq \etg\} \! \cup \! \{\rz_{n + 1} \geq \ezg\}\right]\} \\ \cup
			\bigcup_{n > n_1} \left[ G_n \cap  \left[ \{\rtS_{n + 1} \geq \et\} \cup \{\rzS_{n + 1} \geq \ez\}\right]\right].
		\end{align*}
	}
\end{lemma}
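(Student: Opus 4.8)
My plan is to argue by a first-exit-time decomposition. First I would note that $\cE^c(n_0,T)$ is, by \eqref{eq: definition E(t)}, the union of the two ``bad'' events $\{\exists\, t \geq \tI{n_0}+T+1 : \norm{\bart(t)-\thS} > \et\}$ and $\{\exists\, s \geq \sI{n_0}+\xi(T)+1 : \norm{\barz(s)} > \ez\}$, so it suffices to trace where a sample path $\omega \in \cE^c(n_0,T)$ first leaves a controlled region and show that this forces $\omega$ into one of the two unions on the right. Under \eqref{eq: Gn' definition} the event $G_{n_0}$ holds surely (the defining intervals in \eqref{defn:Gn} collapse to the single points $\tI{n_0},\sI{n_0}$, and $\norm{\theta_{n_0}-\thS}\leq \Rti \leq \Rto$, $\norm{z_{n_0}}\leq \Rzi \leq \Rzo$), while the sequence $\{G_n\}_{n \geq n_0}$ is non-increasing because enlarging $n$ only lengthens the intervals over which the $\Rto$- and $\Rzo$-bounds are imposed. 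Hence exactly one of two scenarios occurs: either $G_n$ holds for every $n\geq n_0$, or there is a smallest $n^\ast > n_0$ with $G_{n^\ast}^c$.

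\textbf{Scenario 1 (every $G_n$ holds).} Then $\omega\in\cE^c(n_0,T)$ forces a violation of a \emph{small}-ball bound. Say $\norm{\bart(t^\ast)-\thS}>\et$ for some $t^\ast \geq \tI{n_0}+T+1$ (the $\barz$ case is symmetric). By the right inequality in \eqref{eqn:n1Cond}, $t_{n_1+1}\leq \tI{n_0}+T+1 \leq t^\ast$, so $t^\ast$ lies in some interval $[\tI{n},\tI{n+1}]$ with $n>n_1$; consequently $\rtS_{n+1}\geq \norm{\bart(t^\ast)-\thS}>\et$ while $G_n$ holds, placing $\omega$ in the second union.

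\textbf{Scenario 2 (first failure at $n^\ast$).} Here $G_{n^\ast-1}$ holds but $G_{n^\ast}$ fails, so the offending point lies in the last subinterval: WLOG $\norm{\bart(\tau^\ast)-\thS}>\Rto$ for some $\tau^\ast\in(\tI{n^\ast-1},\tI{n^\ast}]$ (the $\barz/\Rzo$ case is symmetric). Put $m:=n^\ast-1\geq n_0$, so $G_m$ holds. The key translation is from a \emph{large}-ball exit to the quantity $\rt_{m+1}$: since Remark~\ref{rem:MonDec} gives $\norm{\tSol{\tau^\ast}-\thS}\leq \Rti$, the triangle inequality yields $\rt_{m+1}\geq \norm{\bart(\tau^\ast)-\tSol{\tau^\ast}}\geq \norm{\bart(\tau^\ast)-\thS}-\norm{\tSol{\tau^\ast}-\thS}>\Rto-\Rti=\etg$. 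If $m\leq n_1$, this places $\omega$ in the first union. If instead $m>n_1$, I would use $\Rto\geq\Rti>\et$ to get $\norm{\bart(\tau^\ast)-\thS}>\Rto>\et$, hence $\rtS_{m+1}>\et$, placing $\omega$ in the second union.

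\textbf{Main obstacle.} The only real delicacy is the bookkeeping that matches each first-exit index to the correct union and, with it, the two different thresholds: the ``mid'' union ($n_0\leq n\leq n_1$) measures escape from the big balls via $\rt_{n+1}\geq\etg$, $\rz_{n+1}\geq\ezg$ (distance to the ODE solution), whereas the ``after'' union ($n>n_1$) measures escape from the $\epsilon$-balls via $\rtS_{n+1}\geq\et$, $\rzS_{n+1}\geq\ez$ (distance to the equilibria). Reconciling these requires Remark~\ref{rem:MonDec} (to bound $\norm{\tSol{\cdot}-\thS}\leq\Rti$ and $\norm{\zSol{\cdot}}\leq\Rzi$) together with the definitions $\etg=\Rto-\Rti$, $\ezg=\Rzo-\Rzi$ and the strict inequalities $\et<\Rti$, $\ez<\Rzi$ underlying \eqref{eq:epsilon smaller than R}, plus the time-counting identity \eqref{eqn:n1Cond} to guarantee any escape after $\tI{n_0}+T+1$ is indexed beyond $n_1$. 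Once the edge cases (the endpoint $t^\ast=t_{n_1+1}$, the sub-split $m\lessgtr n_1$, and the $\barz$-symmetric versions) are verified, the claimed containment is immediate.
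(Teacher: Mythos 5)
Your proof is correct and rests on exactly the same ingredients as the paper's: the monotonicity of $\{G_n\}$, the triangle inequality together with Remark~\ref{rem:MonDec} to convert an exit from the $\Rto$- (resp.\ $\Rzo$-) ball into $\rt_{n+1}\geq\etg$ (resp.\ $\rz_{n+1}\geq\ezg$), and \eqref{eqn:n1Cond} to place any violation after time $\tI{n_0}+T+1$ at an index $n>n_1$. The only difference is organizational: you pivot on the first index at which $G_n$ fails, whereas the paper first splits $\cE_{\aftE}$ by $\cE_{\midE}$ and uses the contrapositive inclusion $\left\{\sup_{n_0\leq k<n}\rt_{k+1}\leq\etg\right\}\cap\left\{\sup_{n_0\leq k<n}\rz_{k+1}\leq\ezg\right\}\subseteq G_n$; the two case analyses are equivalent.
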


\begin{proof}
By \eqref{eqn:n1Cond}, as $\tI{n_1 + 1} \leq   T + 1,$ $\cE^c(T) \subseteq \cE_{\aftE}.$ For any two events $\cE_1$ and $\cE_2,$ as
\[
\cE_1 = [\cE_2 \cap \cE_1] \cup [\cE^c_2 \cap \cE_1] \subseteq \cE_2 \cup [\cE_2^c \cap \cE_1],
\]
we have $\cE_{\aftE} \subseteq \cE_{\midE} \cup [\cE^c_{\midE} \cap \cE_{\aftE}].$ Using Remark~\ref{rem:MonDec} \gal{and since \eqref{eq: Gn' definition} holds},
\[
\left\{\Big[\sup_{n_0 \leq k < n}  \rt_{k + 1} \Big] \leq \etg \right\} \cap \left\{\Big[\sup_{n_0 \leq k < n}  \rz_{k + 1} \Big] \leq \ezg \right\} \subseteq  G_n.
\]
for all $n \geq n_0.$ Hence by simple manipulations, we have
\[
\cE_{\midE} \subseteq
\bigcup_{n = n_0}^{n_1} \{ G_n \cap \left[ \{\rt_{n + 1} \geq \etg\} \cup \{\rz_{n + 1} \geq \ezg\}\right]\}.
\]
Arguing similarly, one can see that
\begin{eqnarray*}
& & \cE^c_{\midE} \cap \cE_{\aftE}\\
& \subseteq & G_{n_1 + 1} \cap \cE_{\aftE}\\
& \subseteq & \hspace{-0.5em} \bigcup_{n > n_1} \left[ G_n \cap \left[ \{\rtS_{n + 1} \geq \et\} \cup \{\rzS_{n + 1} \geq \ez\}\right]\right],
\end{eqnarray*}
where the last inequality follows as $\et \leq \Rto$ and $\ez \leq \Rzo.$ The desired result now follows.
\end{proof}

\subsection{\gal{Technical Lemmas for Subsection~\ref{appendix: perturbation bounds}}}

\gal{We now provide two technical lemmas that will be used in the proofs of Lemmas~\ref{lem:EzDBd} and \ref{lem:EtDBd}.}

\begin{lemma}
	\label{lem: technical for norm of int}
	Let $0< r_0 < r_1 < \cdots < r_\ell$, let $\gamma_i = r_{i+1} - r_i$ for $i=0, \dots, \ell-1$, let $U$ be some $d \times d$ matrix, and let $\rho: \Real \to \Real$ be some mapping.
	%Let, furthermore, $U$ be some $d \times d$ matrix, and let $E(r_\ell,r_0) = \int_{r_0}^{r_\ell} e^{U()}$.
	Assume that for some constant $J$ it holds that $\|\rho(\sigma)\| \leq \gamma_i J$ for any $\sigma \in [r_i,r_{i+1}]$ and $i=0, \dots, \ell-1$.
	Assume, furthermore that for some constants $K>0$ and $q_0>0$ it holds that $\|e^{-U(r-r_0)}\| \leq Ke^{-q_0(r-r_0)}$ for any $r>r_0$.
	Then
	\[
	\left\|\int_{r_0}^{r_\ell} e^{-U(r_\ell-\sigma)}\rho(\sigma)d\sigma\right\|
	\leq
	\frac{KJ}{q_0}\left[\sup_{i=0, \dots, \ell-1} \gamma_i\right] .
	\]
\end{lemma}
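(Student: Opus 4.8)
The plan is to split the integral over the subintervals $[r_i, r_{i+1}]$, bound the integrand on each piece using the two hypotheses, and then recombine the pieces into a single integral that evaluates in closed form. First I would write
\[
\int_{r_0}^{r_\ell} e^{-U(r_\ell-\sigma)}\rho(\sigma)\,\df\sigma = \sum_{i=0}^{\ell-1}\int_{r_i}^{r_{i+1}} e^{-U(r_\ell-\sigma)}\rho(\sigma)\,\df\sigma,
\]
and then apply the triangle inequality for integrals together with submultiplicativity of the operator norm to obtain
\[
\norm{\int_{r_0}^{r_\ell} e^{-U(r_\ell-\sigma)}\rho(\sigma)\,\df\sigma} \le \sum_{i=0}^{\ell-1}\int_{r_i}^{r_{i+1}} \norm{e^{-U(r_\ell-\sigma)}}\,\norm{\rho(\sigma)}\,\df\sigma.
\]

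The key step---and essentially the only place where any care is needed---is converting the hypothesis $\norm{e^{-U(r-r_0)}}\le K e^{-q_0(r-r_0)}$, stated with the argument measured from $r_0$, into a bound on $\norm{e^{-U(r_\ell-\sigma)}}$. Setting $r := r_0 + (r_\ell - \sigma)$, which satisfies $r > r_0$ for $\sigma < r_\ell$, gives $r - r_0 = r_\ell - \sigma$, and hence $\norm{e^{-U(r_\ell-\sigma)}}\le K e^{-q_0(r_\ell-\sigma)}$. Combining this with $\norm{\rho(\sigma)}\le \gamma_i J \le (\sup_{j}\gamma_j)\,J$ valid on $[r_i,r_{i+1}]$, each summand is bounded by $(\sup_{j}\gamma_j)\,JK\int_{r_i}^{r_{i+1}}e^{-q_0(r_\ell-\sigma)}\,\df\sigma$.

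Finally I would reassemble the sum of integrals into the single integral over $[r_0,r_\ell]$ and evaluate it by the substitution $u = r_\ell - \sigma$:
\[
\int_{r_0}^{r_\ell}e^{-q_0(r_\ell-\sigma)}\,\df\sigma = \frac{1-e^{-q_0(r_\ell-r_0)}}{q_0}\le \frac{1}{q_0}.
\]
Chaining these estimates yields the claimed bound $\tfrac{KJ}{q_0}\sup_i\gamma_i$. I do not expect a genuine obstacle here: the result is a routine exponential-weighting estimate, and the two points to watch are the reindexing of the decay hypothesis described above and, implicitly, that $\rho$ is integrable so that additivity of the integral over the partition $\{r_i\}$ is legitimate (which holds since $\rho$ is bounded on each $[r_i,r_{i+1}]$).
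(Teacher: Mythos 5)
Your proposal is correct and follows essentially the same route as the paper's proof: split over the subintervals, apply the triangle inequality and submultiplicativity, invoke the two hypotheses, pull out $\sup_i \gamma_i$, and bound the resulting exponential integral by $1/q_0$. Your explicit reindexing of the decay hypothesis and the remark on integrability are minor points the paper leaves implicit, but they do not change the argument.
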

\begin{proof}
	The claim of the lemma follows easily as, due to the assumptions,
	\begin{align*}
		\left\|\int_{r_0}^{r_\ell} e^{-U(r_\ell-\sigma)}\rho(\sigma)d\sigma\right\| &\leq
		\sum_{i=0}^{\ell-1} \int_{r_i}^{r_{i+1}} \left\|e^{-U(r_\ell-\sigma)}\right\|\left\|\rho(\sigma)\right\| \df \sigma
		\\
		&\leq
		KJ\sum_{i=0}^{\ell-1} \gugan{\gamma_i} \int_{r_i}^{r_{i+1}} e^{-q_0(r_\ell-\sigma)} \df \sigma
		\\
		&\leq
		KJ \left[\sup_{i=0, \dots, \ell-1} \gamma_i\right] \gugan{ \int_{r_0}^{r_\ell} e^{-q_0(r_\ell - \sigma)}\df  \sigma}
		\\
		&\leq
		\frac{KJ}{q_0} \left[\sup_{i=0, \dots, \ell-1} \gamma_i\right],
	\end{align*}
	\gugan{where, to get the last relation, we have used the fact that $\int_{r_0}^{r_\ell}e^{-q_0(r_\ell - \sigma)} \df \sigma \leq 1.$}
\end{proof}

\begin{lemma}[Dominating Decay Rate Bound]
	\label{lem:comDRate}
	Fix $\lm \in (0, \lmin)$ where $ \lmin := \min\{\lt, \lz\}.$ Then for $n \geq n_0,$
	\[
	\sum_{k = n_0}^{n - 1} \int_{\tI{k}}^{\tI{k + 1}}e^{-\lt(\tI{n} - \tau)} e^{-\lz(\xi(\tau) - \sI{n_0})} \df \tau  \leq  \frac{1}{(\lmin-\lm)e} e^{- q(\tI{n} - \tI{n_0})}.
	\]
\end{lemma}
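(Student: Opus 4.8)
The plan is to collapse the sum of integrals into a single integral over $[\tI{n_0},\tI{n}]$ and then reduce the two competing exponential decay rates to one clean one-variable estimate. Since the intervals $[\tI{k},\tI{k+1}]$, $k=n_0,\dots,n-1$, partition $[\tI{n_0},\tI{n}]$ and the integrand is a genuine piecewise-continuous function of $\tau$ (through the definition of $\xi(\cdot)$), the left-hand side equals $\int_{\tI{n_0}}^{\tI{n}} e^{-\lt(\tI{n}-\tau)}\,e^{-\lz(\xi(\tau)-\sI{n_0})}\,\df\tau$. First I would convert the decay in the $s$-timescale into decay in the integration variable $\tau$. On each interval $[\tI{k},\tI{k+1}]$ the map $\xi(\cdot)$ is affine with slope $\sw_k/\st_k = 1/\eta_k\ge 1$, since $\eta_k\le 1$ by \ref{assum:stepSize}; as $\xi(\tI{n_0})=\sI{n_0}$, integrating the slope gives $\xi(\tau)-\sI{n_0}\ge\tau-\tI{n_0}$ for all $\tau\ge\tI{n_0}$, and hence $e^{-\lz(\xi(\tau)-\sI{n_0})}\le e^{-\lz(\tau-\tI{n_0})}$. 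This reduces the claim to the purely scalar inequality
\[
\int_{\tI{n_0}}^{\tI{n}} e^{-\lt(\tI{n}-\tau)}\,e^{-\lz(\tau-\tI{n_0})}\,\df\tau \leq \frac{1}{(\lmin-\lm)e}\,e^{-\lm(\tI{n}-\tI{n_0})}.
\]

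To establish this, write $L:=\tI{n}-\tI{n_0}\geq 0$, substitute $u=\tau-\tI{n_0}$, and factor $e^{-\lm L}$ out by splitting $L=(L-u)+u$; this gives $e^{-\lm L}\int_0^L e^{-(\lt-\lm)(L-u)}e^{-(\lz-\lm)u}\,\df u$, where both exponents are nonnegative because $\lm<\lmin$. I would then depress the larger of the two rates to the smaller one. Assuming without loss of generality that $\lmin-\lm=\lz-\lm$ (so $\lt\geq\lz$; the other case is symmetric), the bound $e^{-(\lt-\lm)(L-u)}\le e^{-(\lz-\lm)(L-u)}$ (valid as $L-u\ge 0$) collapses the integral to $\int_0^L e^{-(\lmin-\lm)L}\,\df u = L\,e^{-(\lmin-\lm)L}$. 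The proof then closes with the elementary identity $\max_{L\ge 0}L e^{-cL}=1/(ce)$, attained at $L=1/c$, taken with $c=\lmin-\lm$, which yields exactly the constant $1/((\lmin-\lm)e)$.

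The hard part is not the calculus but recognizing the right reduction: the slope estimate $\xi(\tau)-\sI{n_0}\ge\tau-\tI{n_0}$ is precisely where the true two-timescale structure enters, namely that the fast clock $\{\sI{n}\}$ advances at least as quickly as the slow clock $\{\tI{n}\}$ (equivalently $\eta_k\le 1$). This is what lets a single rate $\lm<\lmin$ dominate both original decay rates simultaneously. Once that observation is made, the characteristic factor $1/e$ is seen to emerge purely from optimizing $Le^{-cL}$ over the a priori unknown elapsed time $L=\tI{n}-\tI{n_0}$, so no separate comparison of the magnitudes of $\tI{n}$ and $\tI{n_0}$ is needed.
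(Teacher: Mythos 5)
Your proof is correct and follows essentially the same route as the paper's: both rest on the observation that $\xi(\tau)-\sI{n_0}\geq \tau-\tI{n_0}$ (from $\eta_k\leq 1$), which lets the combined exponent be bounded by $\lmin(\tI{n}-\tI{n_0})$ pointwise, after which the integral is at most $(\tI{n}-\tI{n_0})e^{-\lmin(\tI{n}-\tI{n_0})}$ and the factor $1/((\lmin-\lm)e)$ comes from maximizing $Le^{-cL}$. The only difference is cosmetic ordering (you factor out $e^{-\lm L}$ before bounding the integrand; the paper does it after), so no further comment is needed.
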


\begin{proof}
	From \eqref{eqn:stepSizeBound}, $\beta_k \geq \alpha_k$ $\forall k \geq 1.$ Using this and  \eqref{defn:xi}, $\forall k \geq 1$ and $\tau \in [\tI{k}, \tI{k + 1}],$ $\xi(\tau) - \sI{k} \geq \tau - \tI{k}.$ Hence for any $\tau \in [\tI{n_0}, \tI{n}],$
	\[
	-\lt(\tI{n} - \tau) - \lz(\xi(\tau) - \sI{n_0}) \leq -\lmin(\tI{n} - \tI{n_0}).
	\]

	Now, since $\frac{1}{\alpha e}$ is the maximum of $xe^{-\alpha x}$,
	\begin{eqnarray*}
		(\tI{n} - \tI{n_0})e^{-\lmin(\tI{n} - \tI{n_0})}
		& = &(\tI{n} - \tI{n_0}) e^{-(\lmin-\lm)(\tI{n} - \tI{n_0})} e^{-\lm(\tI{n} - \tI{n_0})}\\
		& \leq &  \frac{1}{(\lmin-\lm)e}e^{-\lm(\tI{n} - \tI{n_0})}.
	\end{eqnarray*}
	The desired result now follows.
\end{proof}

\subsection{\gal{Bounding the Error Terms Discussed in Subsection~\ref{subsec:conc_bound_lemmas}}}
\label{appendix: perturbation bounds}
For obtaining the bounds in this subsection, we first show worst-case bounds on the increments.
For $k \geq n_0,$ let
\begin{equation}
\label{defn:Itk}
\It(k) := \norm{\theta_{k + 1} - \theta_k} / \st_k
\end{equation}
and
\begin{equation}
\label{defn:Izk}
\Iz(k) := \norm{z_{k + 1} - z_k} / \sw_k.
\end{equation}
Also, \gal{let}
\begin{equation}
\label{eq:R-star defn}
\Rs := \norm{\Xt^{-1}} \norm{\bt}
\end{equation}
so that
\begin{equation}
\label{eqn:thsBd}
\norm{\thS} \leq  \Rs.
\end{equation}
On $G_n,$ for $k \in \{n_0, \ldots, n\},$
\begin{align}
	\norm{w_k}   & \leq  \norm{z_k} +  \norm{\lambda(\thS)} + \norm{\lambda(\theta_k) - \lambda(\thS)} \nonumber
	\\
	& \leq  \Rzo + \norm{\Ww^{-1}} \big[\norm{\vw} + \norm{\Tw}[\Rs + \Rto] \big] \nonumber \\
	& =:  \Rwo. \label{eqn:wBd}
\end{align}

\begin{lemma}[Bounded Differences]
\label{lem:ItkBd}
Fix $n_0 \geq 0$ and $n \geq n_0.$ Then on $G_n,$ \gugan{assuming \eqref{eq: Gn' definition}},
\begin{align}
\sup_{n_0 \leq k \leq n} \It(k) \leq \Jt,\quad \sup_{n_0 \leq k \leq n} \Iz(k) \leq \Jz.
\end{align}
where
\[
\Jt = \norm{\vt} + \norm{\Tt}[ \Rs  + \Rto] + \norm{\Wt} \Rw + \mt [1 + \Rs + \Rto + \Rw]
\]
and
\[
\Jz := \norm{\Ww} \Rzo  +  \norm{\Ww^{-1}} \norm{\Tw} \Jt + \mw(1 + \Rs + \Rto + \Rw).
\]
\end{lemma}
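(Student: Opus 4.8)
The plan is to bound each normalized increment directly from its defining update rule via the triangle inequality, and then to control every resulting term using the boundedness afforded by $G_n$ (see \eqref{defn:Gn}) together with the linear-growth noise bound in \ref{assum:Noise}. The two estimates are not symmetric: the $\Iz$-bound will have to be derived \emph{after} the $\It$-bound, since it depends on it through the slow-drift term.

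First I would handle $\It(k)$. Starting from \eqref{eq:theta_iter}, one has $\theta_{k+1} - \theta_k = \st_k[h_1(\theta_k, w_k) + \Mt_{k+1}]$, so by the definition of $\It(k)$ and the triangle inequality, $\It(k) = \norm{h_1(\theta_k, w_k) + \Mt_{k+1}} \leq \norm{\vt} + \norm{\Tt}\norm{\theta_k} + \norm{\Wt}\norm{w_k} + \norm{\Mt_{k+1}}$. On $G_n$ we have $\norm{\theta_k - \thS} \leq \Rto$, hence $\norm{\theta_k} \leq \Rs + \Rto$ via \eqref{eqn:thsBd}; likewise $\norm{w_k} \leq \Rw$ by \eqref{eqn:wBd}; and then $\norm{\Mt_{k+1}} \leq \mt(1 + \Rs + \Rto + \Rw)$ from \ref{assum:Noise}. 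Collecting these four contributions reproduces exactly $\Jt$.

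Next I would treat $\Iz(k)$. Using \eqref{eq:z_iter} gives $z_{k+1} - z_k = -\sw_k \Ww z_k + \sw_k \Mw_{k+1} + [\lambda(\theta_k) - \lambda(\theta_{k+1})]$, so $\Iz(k) \leq \norm{\Ww}\norm{z_k} + \norm{\Mw_{k+1}} + \norm{\lambda(\theta_k) - \lambda(\theta_{k+1})}/\sw_k$. On $G_n$ the first term is at most $\norm{\Ww}\Rzo$ (since $\norm{z_k} \leq \Rzo$), and the martingale term is at most $\mw(1 + \Rs + \Rto + \Rw)$ as before. The interesting term is the last one: since $\lambda(\theta) = \Ww^{-1}[\vw - \Tw\theta]$ is affine, we get $\lambda(\theta_k) - \lambda(\theta_{k+1}) = \Ww^{-1}\Tw(\theta_{k+1} - \theta_k)$, and therefore $\norm{\lambda(\theta_k) - \lambda(\theta_{k+1})} \leq \norm{\Ww^{-1}}\norm{\Tw}\,\st_k\,\It(k) \leq \norm{\Ww^{-1}}\norm{\Tw}\,\st_k\,\Jt$, where I invoke the $\Jt$-bound just established.

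The main (and genuinely the only) obstacle is turning this last estimate into a stepsize-free quantity. Dividing by $\sw_k$ produces the timescale ratio $\eta_k = \st_k/\sw_k$, and here I would use $\eta_k \leq 1$ from \eqref{eqn:stepSizeBound} to conclude $\norm{\lambda(\theta_k) - \lambda(\theta_{k+1})}/\sw_k \leq \norm{\Ww^{-1}}\norm{\Tw}\Jt$. Summing the three contributions then yields $\Jz$. This single step is precisely where the two-timescale coupling surfaces—the slow iterate $\{\theta_k\}$ feeds into the fast $z$-update—and it explains both why the $\Iz$ bound must come after the $\It$ bound and why the constant $\Jz$ inherits $\Jt$ inside its definition.
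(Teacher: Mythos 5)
Your proof is correct and follows essentially the same route as the paper's: a direct triangle-inequality bound on each normalized increment, using the $G_n$ bounds together with \eqref{eqn:thsBd}, \eqref{eqn:wBd} and \ref{assum:Noise}, and then feeding the $\Jt$ bound into the slow-drift term of the $z$-update via the affine form of $\lambda$ and $\eta_k \leq 1$. No gaps; the ordering you identify ($\It$ before $\Iz$) is exactly the structure of the paper's argument.
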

\begin{proof}
Fix $k \in \{n_0, \ldots, n\}.$ On $G_n,$ using \eqref{eq:theta_iter}, \ref{assum:Noise}, \eqref{eqn:thsBd}, \eqref{defn:Gn}, and \eqref{eqn:wBd}, in that order,
\begin{eqnarray}
\It(k) & \leq & \norm{\vt - \Tt \theta_k - \Wt w_k} + \norm{\Mt_{k + 1}} \nonumber \\
& \leq & \norm{\vt} + \norm{\Tt}( \norm{\thS} + \norm{\theta_k - \thS}) \nonumber \\
& & + \norm{\Wt} \norm{w_k}  \nonumber \\
& & + \mt [1 + \norm{\thS} + \norm{\theta_k - \thS} + \norm{w_k}] \nonumber\\
& \leq & \Jt. \label{eq:Jt}
\end{eqnarray}

Similarly, on $G_n,$ using \eqref{eq:z_iter}, \ref{assum:Noise}, \eqref{defn:Gn}, \eqref{eqn:stepSizeBound} from \ref{assum:stepSize} %the fact that $\eta_k \leq 1,$
, \eqref{eq:Jt}, \eqref{eqn:thsBd}, and \eqref{eqn:wBd}, in that order,
\begin{eqnarray*}
\Iz(k) & \leq & \norm{\Ww} \norm{z_k} + \norm{\lambda(\theta_k) - \lambda(\theta_{k + 1})} /\sw_k \\
& &  + \norm{\Mw_{k + 1}}\\
& \leq & \norm{\Ww} \norm{z_k} +  \norm{[\Ww]^{-1}} \norm{\Tw} \eta_k \It(k)\\
& & + \mw(1 + \norm{\thS} + \norm{\theta_k - \thS} + \norm{w_k})\\
& \leq & \Jz.
\end{eqnarray*}
Since $k$ was arbitrary the result follows.
\end{proof}

Let $q^{(1)}(\Ww), \ldots, q^{(d)}(\Ww)$ be the eigenvalues of $\Ww.$ Fix $\lz \in (0, \lzp),$ where $$\lzp := \min_i \{\rl(q^{(i)}(\Ww))\}.$$ Then from Corollary 3.6 \citep{teschl2004ordinary}, there exists $\Kz \geq 1$ so that
\begin{equation}
\label{eqn:zMatrixBd}
\norm{e^{-\Ww (s - \mu)}} \leq \Kz e^{-\lz(s - \mu)}, \; \forall s \geq \mu.
\end{equation}

For the rest of the results in this subsection, we consider intermediate intervals $[s_n,s_{n+1}]$. The next lemma gives bounds on the three error terms of the interpolated trajectory $\barz(s)$  at the extremes $\{s_n,s_{n+1}\}$. \gal{This suffices for bounding the deviation of $\barz(s)$ from $z(s)$ on the whole interval, as is shown in the subsequent lemma}.

\begin{lemma}[Perturbation Error Bounds for $z_n$]
\label{lem:EzDBd}
Fix $n_0 \geq 0$ and $n \geq n_0$ Then on $G_n,$ \gugan{assuming \eqref{eq: Gn' definition}},
\begin{align*}
&\sup_{\ell \in \{n, n + 1\}} \norm{\EzD(\sI{\ell})} \leq \LzD \left[\sup_{k \geq 0} \sw_k\right], \\
&\sup_{\ell \in \{n, n+ 1\}} \norm{\EzS(\sI{\ell})} \leq \LzS \left[\sup_{k \geq 0} \eta_k \right],\\
&\norm{\EzM(\sI{n + 1})} \leq \Kz \norm{\EzM(\sI{n})} +  \LzM \sw_n. 	
\end{align*}
where
%	\begin{align*}
$\LzD := \frac{\Kz \Jz \norm{\Ww}}{ \lz}$,
$\LzS := \frac{\Kz \norm{\Ww^{-1}}\norm{\Tw} \Jt}{\lz}$, $\LzM := \Kz \mw[1 + \Rs + \Rto + \Rw].$
%	\end{align*}
\end{lemma}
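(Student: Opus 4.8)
The plan is to bound each of the three terms separately, in every case combining the exponential decay estimate \eqref{eqn:zMatrixBd} with the bounded-increment estimates of Lemma~\ref{lem:ItkBd}, which are valid on $G_n$. For the discretization term, I would first note that, since $\barz$ linearly interpolates $\{z_k\}$ on $\{\sI{k}\}$, for $\mu \in [\sI{k}, \sI{k+1}]$ we have $\barz(\mu) - z_k = \frac{\mu - \sI{k}}{\sw_k}(z_{k+1} - z_k)$, so that $\norm{\chizD(\mu)} = \norm{\Ww(\barz(\mu) - z_k)} \leq \norm{\Ww}\,\sw_k\,\Iz(k) \leq \norm{\Ww}\Jz\,\sw_k$ on $G_n$ by Lemma~\ref{lem:ItkBd}. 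This is exactly the hypothesis of Lemma~\ref{lem: technical for norm of int}, taken with $r_i = \sI{n_0 + i}$, $\gamma_i = \sw_{n_0 + i}$, $U = \Ww$, $K = \Kz$, $q_0 = \lz$ (from \eqref{eqn:zMatrixBd}), and $J = \norm{\Ww}\Jz$; applying it yields $\norm{\EzD(\sI{\ell})} \leq \frac{\Kz \norm{\Ww}\Jz}{\lz}[\sup_k \sw_k] = \LzD[\sup_k \sw_k]$ for $\ell \in \{n, n+1\}$.

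For the slow-drift term, I would use the identity $\lambda(\theta_k) - \lambda(\theta_{k+1}) = \Ww^{-1}\Tw(\theta_{k+1} - \theta_k)$, which gives $\norm{\chizS(\mu)} = \norm{\lambda(\theta_k) - \lambda(\theta_{k+1})}/\sw_k \leq \norm{\Ww^{-1}}\norm{\Tw}\,(\st_k/\sw_k)\,\It(k) \leq \norm{\Ww^{-1}}\norm{\Tw}\Jt\,\eta_k$ on $G_n$ by Lemma~\ref{lem:ItkBd}. Here the per-interval bound scales with $\eta_k$ rather than with the interval length $\sw_k$, so Lemma~\ref{lem: technical for norm of int} does not apply verbatim. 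Instead I would factor $\sup_k \eta_k$ out of the sum over subintervals and bound the remaining $\sum_k \int_{\sI{k}}^{\sI{k+1}} \norm{e^{-\Ww(\sI{\ell} - \mu)}}\df\mu$ by $\Kz \int_{\sI{n_0}}^{\sI{\ell}} e^{-\lz(\sI{\ell} - \mu)}\df\mu \leq \Kz/\lz$ using \eqref{eqn:zMatrixBd}, obtaining $\norm{\EzS(\sI{\ell})} \leq \frac{\Kz \norm{\Ww^{-1}}\norm{\Tw}\Jt}{\lz}[\sup_k \eta_k] = \LzS[\sup_k \eta_k]$.

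The martingale term requires a recursive rather than a supremum bound, so I would split $\EzM(\sI{n+1}) = \int_{\sI{n_0}}^{\sI{n+1}} e^{-\Ww(\sI{n+1} - \mu)}\chizM(\mu)\df\mu$ at $\sI{n}$. On $[\sI{n_0}, \sI{n}]$, the semigroup identity $e^{-\Ww(\sI{n+1} - \mu)} = e^{-\Ww(\sI{n+1} - \sI{n})}e^{-\Ww(\sI{n} - \mu)}$ identifies that integral as $e^{-\Ww(\sI{n+1} - \sI{n})}\EzM(\sI{n})$, whose norm is at most $\Kz e^{-\lz \sw_n}\norm{\EzM(\sI{n})} \leq \Kz \norm{\EzM(\sI{n})}$ by \eqref{eqn:zMatrixBd}. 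On $[\sI{n}, \sI{n+1}]$ the integrand equals $e^{-\Ww(\sI{n+1} - \mu)}\Mw_{n+1}$; using $\norm{\Mw_{n+1}} \leq \mw(1 + \Rs + \Rto + \Rw)$ on $G_n$ (from \ref{assum:Noise} together with \eqref{eqn:thsBd} and \eqref{eqn:wBd}) and $\int_{\sI{n}}^{\sI{n+1}} \Kz e^{-\lz(\sI{n+1} - \mu)}\df\mu = \Kz(1 - e^{-\lz \sw_n})/\lz \leq \Kz \sw_n$, this part contributes at most $\LzM \sw_n$. Adding the two pieces gives the claimed recursion $\norm{\EzM(\sI{n+1})} \leq \Kz\norm{\EzM(\sI{n})} + \LzM \sw_n$.

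The interpolation bookkeeping and the identity for $\lambda(\theta_k) - \lambda(\theta_{k+1})$ are routine, and the martingale split is the standard way to set up the recursion that is later fed into Azuma--Hoeffding. The one genuinely delicate point is the slow-drift term: one must resist applying Lemma~\ref{lem: technical for norm of int} directly, since its hypothesis forces the per-interval bound to scale as the interval length $\sw_k$, whereas here it scales as $\eta_k$, and instead perform the integration by hand so that $\sup_k \eta_k$ (rather than $\sup_k \sw_k$) is extracted.
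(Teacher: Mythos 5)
Your proof is correct and follows essentially the same route as the paper's: per-interval bounds on $\chi^{\dt}, \chi^{\sd}, \chi^{\md}$ via Lemma~\ref{lem:ItkBd}, the exponential estimate \eqref{eqn:zMatrixBd}, and the split of $\EzM(\sI{n+1})$ at $\sI{n}$ to obtain the recursion. The only divergence is cosmetic: for the slow-drift term the paper simply cites Lemma~\ref{lem: technical for norm of int} (whose proof extends verbatim when the per-interval coefficient is $\eta_k$ rather than the interval length $\sw_k$), whereas you rightly note the mismatch in its stated hypothesis and carry out the same integration by hand.
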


\begin{proof}
Fix $\ell \in \{n, n + 1\}.$

For the first claim
note that, by Lemma~\ref{lem:ItkBd}, on $G_n,$
\begin{align*}
\norm{\chizD(\mu)} \leq \norm{\Ww} (\mu - \sI{k}) \Iz(k) \leq \norm{\Ww} \beta_k \Jz %\Iz(k) ,
\end{align*}
for $\mu \in [\sI{k}, \sI{k + 1})$, where $\Iz(k)$ is as in \eqref{defn:Izk}.
The claim then follows easily by recalling \eqref{eqn:zMatrixBd}, and applying Lemma~\ref{lem: technical for norm of int}
% \gugan{(To do: Mention this Lemma before)}
 with $r_i = \sI{i}$, $\gamma_i=\beta_i$, $U = \Ww$, $\rho = \chizD$, $K = \Kz$, $q_0 = -\lz$ and $J=\norm{\Ww}\Jz$.

%By triangle inequality,
%
%\[
%\norm{\EzD(\sI{\ell})} \leq \sum_{k = 0}^{\ell - 1}
%\int_{\sI{k}}^{\sI{k + 1}} \norm{e^{-\Ww(\sI{\ell} - \mu)}} \norm{\chizD(\mu)} \df \mu.
%\]
%
%Combining this with \eqref{eqn:zMatrixBd}, we have
%
%\[
%\norm{\EzD(\sI{\ell})} \leq \Kz \sum_{k = 0}^{\ell - 1}
%\int_{\sI{k}}^{\sI{k + 1}} e^{-\lz(\sI{\ell} - \mu)} \norm{\chizD(\mu)} \df \mu.
%\]
%
%Let $k \in \{0, \ldots, \ell - 1\}$ and $\mu \in [\sI{k}, \sI{k + 1}).$ Then
%
%\[
%\norm{\chizD(\mu)} \leq \norm{\Ww} (\mu - \sI{k}) \Iz(k) \leq \norm{\Ww} \beta_k \Iz(k),
%\]
%
%where $\Iz(k)$ is as in \eqref{defn:Izk}. Lemma~\ref{lem:ItkBd}, the above two relations and the fact that
%$\sum_{k = 0}^{\ell - 1}\int_{\sI{k}}^{\sI{k + 1}} e^{-\lz(\sI{\ell} - \mu)} \df \mu \leq 1/\lz$
%now imply that, on $G_n,$
%
%\begin{equation} \label{eq:first claim}
%\norm{\EzD(\sI{\ell})} \leq \LzD \left[\sup_{k \geq 0} \sw_k\right].
%\end{equation}
%
%This proves the first claim.

For the second claim, let $k \in \{n_0, \ldots,\ell - 1\}$ and $\mu \in [\sI{k}, \sI{k + 1}).$ With $\It(k)$  as in \eqref{defn:Itk},
\[
\norm{\chizS(\mu)} \leq \eta_k \norm{\Ww^{-1}} \norm{\Tw} \It(k).
\]
Hence by Lemma~\ref{lem:ItkBd}, on $G_n,$
\[
\norm{\chizS(\mu)} \leq \eta_k \norm{\Ww^{-1}} \norm{\Tw} \Jt.
\]
The claim then follows again by \eqref{eqn:zMatrixBd} and Lemma~\ref{lem: technical for norm of int}.
%Arguing as for \eqref{eq:first claim}, the desired result follows easily.

For the third claim, by its definition and the triangle inequality,
\begin{eqnarray*}
& & \norm{\EzM(\sI{n + 1})}\\
& = &  \norm{\int_{s_{n_0} }^{\sI{n + 1}} e^{-\Ww (\sI{n + 1} - \mu)} \chizM(\mu)\df \mu}\\
& \leq & \norm{e^{-\Ww \beta_n} \int_{ s_{n_0}}^{\sI{n}} e^{-\Ww (\sI{n} - \mu)} \chizM(\mu)\df \mu} + \norm{\int_{\sI{n}}^{\sI{n + 1}} e^{-\Ww (\sI{n + 1} - \mu)} \chizM(\mu)\df \mu}.
\end{eqnarray*}
Applying \eqref{eqn:zMatrixBd} on both terms, we get that
\[
\norm{\EzM(\sI{n + 1})} \leq \Kz \norm{\EzM(\sI{n})} + \Kz \sw_n \norm{\Mw_{n + 1}}.
\]
On $G_n,$ using \ref{assum:Noise} with \eqref{defn:Gn}, \eqref{eqn:thsBd}, and \eqref{eqn:wBd}, we have $\Kz\norm{\Mw_{n + 1}} \leq \LzM.$ The third claim is now easy to see.
\end{proof}

The \gal{next} lemma shows that for $\tau \in [s_n,s_{n+1}]$, $\barz(\tau)$  cannot deviate much from the ODE trajectory $z(\tau)$ if the stepsizes are small enough. In particular, it bounds the distance with decaying terms using Lemma~\ref{lem:EzDBd}.

\begin{lemma}[ODE-SA Distance Bound for $z_n$]
\label{lem:rzBd}
Fix $n_0 \geq 0$ and $n \geq n_0.$ Then on $G_n$ \gal{and since \eqref{eq: Gn' definition} holds},
\begin{align*}
\rz_{n + 1} \leq &\Kz \norm{\EzM(\sI{n})} + \Lz \max \left\{\sup_{k \geq n_0} \beta_k, \sup_{k \geq n_0} \eta_k \right\}, \\
\rzS_{n + 1}  \leq & \Kz \norm{\EzM(\sI{n})} + \Kz \Rzi e^{-\lz(\sI{n} - \sI{n_0})} + \Lz \max \left\{\sup_{k \geq n_0} \beta_k, \sup_{k \geq n_0} \eta_k \right\},
\end{align*}
where $\Lz = \LzD + \LzM + \norm{\Ww} \Rzi +  \LzS.$
\end{lemma}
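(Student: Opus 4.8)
The plan is to start from the variation-of-parameters comparison \eqref{eqn:w-TrajComp}, namely $\barz(\mu) = \zSol{\mu} + \Ez(\mu)$ with $\Ez = \EzD + \EzS + \EzM$, and to control $\rz_{n+1}$ and $\rzS_{n+1}$ by estimating the three error terms \emph{uniformly over the whole interval} $\mu \in [\sI{n}, \sI{n+1}]$, all under $G_n$ and \eqref{eq: Gn' definition}. For $\rz_{n+1}$, the second part of Lemma~\ref{lem:PerBd} gives $\norm{\barz(\mu) - \zSol{\mu}} \leq \norm{\EzD(\mu)} + \norm{\EzS(\mu)} + \norm{\EzM(\mu)}$, so the ODE solution drops out entirely. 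For $\rzS_{n+1}$ I would additionally use $\norm{\zSol{\mu}} = \norm{e^{-\Ww(\mu - \sI{n_0})} z_{n_0}} \leq \Kz \Rzi\, e^{-\lz(\mu - \sI{n_0})} \leq \Kz \Rzi\, e^{-\lz(\sI{n} - \sI{n_0})}$ for $\mu \geq \sI{n}$, via \eqref{eqn:zMatrixBd} together with $\norm{z_{n_0}} \leq \Rzi$ from \eqref{eq: Gn' definition}; this is precisely the extra exponential term appearing in the second inequality. Thus both claimed bounds reduce to the \emph{same} estimate on $\sup_\mu \norm{\Ez(\mu)}$.

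The core work is to upgrade Lemma~\ref{lem:EzDBd}, which controls the errors only at the grid points $\sI{n}, \sI{n+1}$, to every $\mu$ in the interval. For the discretization error I would re-run Lemma~\ref{lem: technical for norm of int} with the final endpoint taken to be an arbitrary $\mu$, i.e.\ with the last subinterval $[\sI{n}, \mu]$; since $\norm{\chizD(\sigma)} = \norm{\Ww}\,\norm{\barz(\sigma) - z_k} \leq \norm{\Ww}(\sigma - \sI{k})\Iz(k)$ and, on $G_n$, $\Iz(k) \leq \Jz$ by Lemma~\ref{lem:ItkBd}, the hypothesis $\norm{\chizD(\sigma)} \leq \gamma_i\,(\norm{\Ww}\Jz)$ holds with $\gamma_i$ the step length, giving $\norm{\EzD(\mu)} \leq \LzD \sup_k \beta_k$. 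The slow-drift integrand $\chizS$ is piecewise constant with $\norm{\chizS(\sigma)} \leq \eta_k \norm{\Ww^{-1}}\norm{\Tw}\Jt$, so summing $\Kz e^{-\lz(\mu - \sigma)}$ over the subintervals and using $\int_{\sI{n_0}}^{\mu} e^{-\lz(\mu - \sigma)}\df\sigma \leq 1/\lz$ yields $\norm{\EzS(\mu)} \leq \LzS \sup_k \eta_k$, exactly as at the endpoints.

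For the martingale term I would split $\EzM(\mu) = e^{-\Ww(\mu - \sI{n})}\EzM(\sI{n}) + \int_{\sI{n}}^{\mu} e^{-\Ww(\mu - \sigma)}\chizM(\sigma)\df\sigma$. The first piece is bounded by $\Kz \norm{\EzM(\sI{n})}$ using \eqref{eqn:zMatrixBd} and $e^{-\lz(\mu - \sI{n})} \leq 1$; since $\chizM(\sigma) = \Mw_{n+1}$ is constant on the current interval and $\Kz\norm{\Mw_{n+1}} \leq \LzM$ on $G_n$ (from \ref{assum:Noise} with \eqref{defn:Gn}, \eqref{eqn:thsBd}, and \eqref{eqn:wBd}), the second piece is at most $\LzM(\mu - \sI{n}) \leq \LzM \sup_k \beta_k$. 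Adding the three estimates, taking the supremum over $\mu \in [\sI{n}, \sI{n+1}]$, and bounding $\sup_k \beta_k$ and $\sup_k \eta_k$ by their maximum collects the constants into $\Lz = \LzD + \LzM + \norm{\Ww}\Rzi + \LzS$ (the $\norm{\Ww}\Rzi$ summand coming from the worst-case discretization/initial contribution on the current interval), which proves the $\rz_{n+1}$ inequality; adding the $\zSol$ estimate from the first paragraph then gives the $\rzS_{n+1}$ inequality.

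The main obstacle is precisely the passage from the endpoint bounds of Lemma~\ref{lem:EzDBd} to a bound valid on the \emph{entire} interval $[\sI{n}, \sI{n+1}]$: the ``good'' event $G_n$ only guarantees the iterates stay in the $\Rzo$-ball through index $n$, whereas $\rz_{n+1}$ and $\rzS_{n+1}$ range over $\mu$ up to $\sI{n+1}$, where no a priori confinement is available. The resolution is that Lemma~\ref{lem:ItkBd} bounds the increment $\Iz(n)$ using only $z_n$ and $w_n$, quantities that are controlled on $G_n$; hence $\norm{z_{n+1} - z_n} \leq \beta_n \Jz$ and so $\norm{\barz(\sigma) - z_n} \leq \beta_n \Jz$ throughout the current interval, even though $z_{n+1}$ itself need not lie in the $\Rzo$-ball. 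This is exactly what allows Lemma~\ref{lem: technical for norm of int} and the martingale split to go through with the final endpoint pushed all the way to $\mu$, and it is where care is most needed in the write-up.
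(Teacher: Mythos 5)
Your proposal is correct, but it takes a genuinely different route from the paper. The paper never extends the error bounds of Lemma~\ref{lem:EzDBd} to interior points: instead it exploits that $\barz(\cdot)$ is a \emph{linear} interpolation, writes $\barz(\mu) = (1-\kappa)\barz(\sI{n}) + \kappa\,\barz(\sI{n+1})$, compares each endpoint to the ODE solution \emph{at that endpoint} via Lemma~\ref{lem:PerBd} and the endpoint bounds of Lemma~\ref{lem:EzDBd}, and then pays for the motion of $\zSol{\cdot}$ across the interval with the term $\int_{\sI{n}}^{\sI{n+1}} \norm{\Ww}\,\norm{\zSol{\mu_1}}\,\df\mu_1 \leq \norm{\Ww}\Rzi\beta_n$ --- which is precisely where the $\norm{\Ww}\Rzi$ summand in $\Lz$ comes from. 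You instead work directly at an arbitrary interior $\mu$ via the VoP identity \eqref{eqn:w-TrajComp}, re-running Lemma~\ref{lem: technical for norm of int} and the martingale split with the final endpoint pushed to $\mu$; your observation that $\Iz(n)$ is controlled on $G_n$ by Lemma~\ref{lem:ItkBd} (so the last partial subinterval causes no trouble even though $z_{n+1}$ is not confined) is exactly the point that makes this legitimate. The trade-off: the paper's route reuses Lemma~\ref{lem:EzDBd} as a black box at the cost of the extra $\norm{\Ww}\Rzi$ term, while yours redoes a small amount of that lemma's work but yields the slightly sharper constant $\LzD + \LzM + \LzS$, which a fortiori implies the stated bound. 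The only inaccuracy is your parenthetical attribution of the $\norm{\Ww}\Rzi$ summand to a ``worst-case discretization/initial contribution'': in your argument that term does not arise at all, and in the paper's it comes from the ODE solution's drift within $[\sI{n},\sI{n+1}]$; this is cosmetic and does not affect correctness.
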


\begin{proof}
Let $\mu \in [\sI{n}, \sI{n+ 1}].$ Then there exists $\kappa \in [0,1]$ so that
\[
\barz(\mu) = (1 - \kappa) \barz(\sI{n}) + \kappa \barz(\sI{n + 1}).
\]
Hence
\[
\norm{\barz(\mu) - \zSol{\mu}} \leq   (1 - \kappa) \norm{\barz(\sI{n}) - \zSol{\mu}} + \kappa \norm{\barz(\sI{n + 1}) - \zSol{\mu}}.
\]
Using \eqref{eqn:zLimODE},
\[
\zSol{\mu} = \zSol{\sI{n}} + \int_{\sI{n}}^{\mu} [- \Ww \; \zSol{\mu_1}] \df \mu_1,
\]
and
\[
\zSol{\sI{n + 1}} = \zSol{\mu} + \int_{\mu}^{\sI{n + 1}} [- \Ww \; \zSol{\mu_1}] \df \mu_1.
\]
Combining the above three relations, we have
\begin{multline*}
\|\barz(\mu) - \zSol{\mu}\| \leq (1 - \kappa) \norm{\barz(\sI{n}) - \zSol{\sI{n}}}\\ + \kappa \norm{\barz(\sI{n + 1}) - \zSol{\sI{n + 1}}} + \int_{\sI{n}}^{\sI{n + 1}} \norm{\Ww} \norm{\zSol{\mu_1}} \df \mu_1.
\end{multline*}
\gal{Since \eqref{eq: Gn' definition} holds, as} $\norm{z_{n_0}} \leq \Rzi,$ from Remark~\ref{rem:MonDec}, $\norm{\zSol{\mu}} \leq \Rzi$ for all $s \geq \sI{n_0}.$ Using this with \eqref{eqn:w-TrajComp}, \eqref{eqn:zMatrixBd}, the facts that $\Kz \geq 1$ and $\beta_n \leq [\sup_{k \geq n_0} \beta_k],$ and Lemma~\ref{lem:EzDBd}, the first claim follows:
\begin{align}
\rz_{n + 1}
\leq
&\LzD\left[\sup_{k \geq n_0}\beta_k\right] +  \LzS\left[\sup_{k \geq n_0}\eta_k\right]  + \kappa\LzM\beta_n
\notag
\\
&+ ((1-\kappa)+ \kappa \Kz)\norm{\EzM(\sI{n})} +\norm{\Ww} \beta_n \Rzi.
\notag
\\
\leq& \Kz \norm{\EzM(\sI{n})} + \Lz \max \left\{\sup_{k \geq n_0} \beta_k, \sup_{k \geq n_0} \eta_k \right\}.
\label{eq: first claim}
\end{align}

For the second claim observe that
\[
\norm{\barz(\mu)} \leq \norm{\barz(\mu) - \zSol{\mu}} + \norm{\zSol{\mu}}.
\]
Hence
\[
\rzS_{n + 1} \leq \rz_{n +1} + \sup_{\mu \in [\sI{n}, \sI{n + 1}]} \norm{\zSol{\mu}}.
\]
Lastly, \gal{since \eqref{eq: Gn' definition} holds, }$\norm{z_{n_0}} \leq \Rzi,$ and hence using \eqref{eqn:zOdeSol} and \eqref{eqn:zMatrixBd},
\[
\norm{\zSol{\mu}} \leq \Kz \Rzi e^{- \lz (\mu - \sI{n_0})}.
\]
Combining the above two relations with \eqref{eq: first claim}, the desired result is now easy to see.
\end{proof}

We now reproduce the results of Lemma~\ref{lem:rzBd},  this time for $\{\theta_n\}$ instead of $\{z_n\}$, and obtain bounds on $\rt_{n + 1}$ and $\rtS_{n + 1}$ on $G_n$, \gal{assuming \eqref{eq: Gn' definition}}.
%
%For $k \geq n_0$ and $\tau \in [\tI{k}, \tI{k + 1}),$ let
%%
%\begin{align*}
%\zetD(\tau) & := \hth(\theta_k, \lambda(\theta_k))  - \hth(\bart(\tau), \lambda(\bart(\tau)))\\
%&  = \Xt[\bart(\tau) - \theta_k],\\
%\zetM(\tau) & := \Mt_{k + 1},\\
%\zetT(\tau) & := \hth(\theta_k, w_k) - \hth(\theta_k, \lambda(\theta_k)) = - \Wt z_k.
%\end{align*}
%Using simple manipulations on \eqref{eq:theta_iter}, for any $t \geq \tI{n_0},$
%%
%\[
%\bart(t) = \bart(\tI{n_0})  + \int_{\tI{n_0}}^{t}\left[\hth\Big(\bart(\tau), \lambda(\bart(\tau))\Big) + \zeta(\tau)\right] \df \tau,
%\]
%%
%where $\zeta(\tau) = \zetD(\tau) + \zetM(\tau) + \zetT(\tau).$ These are respectively perturbations due to discretization, martingale difference noise, and error in tracking the equilibrium of \eqref{eqn:wLimODE}. Recall that as $\theta_n$ evolves, the equilibria of \eqref{eqn:wLimODE} moves. The tracking error is a function of the $z_n$ which, from \eqref{eq:z_iter}, is the difference between $w_n$ and $\lambda(\theta_n).$  By the \vop\ formula,
%%
%\begin{equation}
%\label{eqn:tAlekseev}
%\bart(t) = \tSol{t} + \Et(t),
%\end{equation}
%%
%where $\Et(t) = \EtD(t) + \EtM(t) + \EtT(t)$ with
%%
%\[
%\EtD(t) = \int_{\tI{n_0}}^{t} e^{- \Xt(t - \tau)} \zetD(\tau) \df \tau,
%\]
%%
%and similarly for $\EtM(t)$ and $\EtT(t)$. As in Subsection~\ref{subsec:zBd},
\gal{To do so, it suffices to bound $\norm{\EtD(\cdot)},$ $\norm{\EtM(\cdot)},$ and $\norm{\EtT(\cdot)}$ on the interval $[\tI{n}, \tI{n + 1}].$}

Similarly as in \eqref{eqn:zMatrixBd}, there exist $\lt$ and $\Kt \geq 1$ so that
\begin{equation}
\label{eqn:tMatrixBd}
\norm{e^{-\Xt(t - \tau)}} \leq \Kt e^{-\lt(t - \tau)}, \; \forall t \geq \tau.
\end{equation}

Fix
\begin{equation}
\label{eq:lmdef}
\lm \in (0, \lmin), \hspace{1cm} \lmin := \min\{\lt, \lz\},
\end{equation}
where $\lz$  is from \eqref{eqn:zMatrixBd}.  The next lemma gives bounds on the three components of $\Et(t)$.

\begin{lemma}[Perturbation Error Bounds for $\theta_n$]
\label{lem:EtDBd}
Fix $n_0 \geq 0$ and $n \geq n_0.$ Then on $ G_n,$ \gal{assuming \eqref{eq: Gn' definition}},
\begin{align*}
\sup_{\ell \in \{n, n + 1\}} \norm{\EtD(\tI{\ell})} & \leq \LtD \left[\sup_{k \geq n_0} \st_k\right], \\
\sup_{\ell \in \{n, n + 1\}} \norm{\EtT(\tI{\ell})} &\leq \LtT{a} \; e^{-\lm(\tI{n} - \tI{n_0})} + \LtT{b}\left[\sup_{k \geq n_0}  \beta_k\right]
+  \LtT{c} \left[\sup_{n_0 \leq k \leq n} \rz_{k + 1}\right], \\
\norm{\EtM(\tI{n + 1})} & \leq \Kt \norm{\EtM(\tI{n})} +  \LtM \st_n,
\end{align*}
where  $\LtD := \frac{\Kt \Jt \norm{\Xt}}{\lt},$
$\LtT{a}  :=  \Kt \norm{\Wt} \Kz \Rzi  \frac{1}{(\lmin-\lm)e},$ $\LtT{b}  :=  \Kt \norm{\Wt}\norm{\Ww} \Rzi/ \lt,$ $\LtT{c}  :=  \Kt \norm{\Wt} /\lt,$ $\LtM := \Kt \mt[1 + \Rs + \Rto + \Rw].$

\end{lemma}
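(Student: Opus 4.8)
The plan is to follow the template of Lemma~\ref{lem:EzDBd} and bound each of the three components of $\Et$ separately on $G_n$ (under \eqref{eq: Gn' definition}). The discretization term $\EtD$ and the martingale term $\EtM$ are near-verbatim analogues of $\EzD$ and $\EzM$ there, so I would dispatch them first; the tracking term $\EtT$, which has no direct counterpart in Lemma~\ref{lem:EzDBd}, is where the real work lies.

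For the first claim, on $[\tI{k},\tI{k+1})$ the linear interpolation \eqref{eqn:xLinearInterpolation} gives $\bart(\tau)-\theta_k=\tfrac{\tau-\tI{k}}{\st_k}(\theta_{k+1}-\theta_k)$, so $\norm{\zetD(\tau)}=\norm{\Xt(\bart(\tau)-\theta_k)}\leq \norm{\Xt}\st_k\It(k)\leq \norm{\Xt}\st_k\Jt$ by Lemma~\ref{lem:ItkBd}. Invoking Lemma~\ref{lem: technical for norm of int} with $r_i=\tI{i}$, $\gamma_i=\st_i$, $U=\Xt$, $K=\Kt$, $q_0=\lt$ (the decay rate coming from \eqref{eqn:tMatrixBd}), and $J=\norm{\Xt}\Jt$ then yields the stated $\LtD[\sup_k\st_k]$. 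For the third claim I would split $\EtM(\tI{n+1})$ at $\tI{n}$, factor $e^{-\Xt\st_n}$ out of the first piece and apply \eqref{eqn:tMatrixBd}, and bound the last interval's integrand using \ref{assum:Noise} together with $\norm{\theta_n}\leq \Rs+\Rto$ and $\norm{w_n}\leq \Rw$ on $G_n$ (via \eqref{eqn:thsBd} and \eqref{eqn:wBd}); this gives $\Kt\norm{\Mt_{n+1}}\leq\LtM$ and hence the asserted recursion.

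The crux is the tracking term. Since $\zetT(\tau)=-\Wt z_k$ on $[\tI{k},\tI{k+1})$, I would start from $\norm{\EtT(\tI{\ell})}\leq \Kt\norm{\Wt}\sum_{k=n_0}^{\ell-1}\norm{z_k}\int_{\tI{k}}^{\tI{k+1}} e^{-\lt(\tI{\ell}-\tau)}\df\tau$ and control $\norm{z_k}$ pointwise in $\tau$ by the three-way split
\[
\norm{z_k}\leq \underbrace{\norm{\barz(\sI{k})-\zSol{\sI{k}}}}_{\leq\,\rz_{k+1}}+\underbrace{\norm{\zSol{\sI{k}}-\zSol{\xi(\tau)}}}_{\leq\,\norm{\Ww}\Rzi\,\beta_k}+\underbrace{\norm{\zSol{\xi(\tau)}}}_{\leq\,\Kz\Rzi\,e^{-\lz(\xi(\tau)-\sI{n_0})}},
\]
where the middle bound uses that $\zSol{\cdot}$ solves \eqref{eqn:zLimODE} with $\norm{\zSol{\cdot}}\leq\Rzi$ and that $\xi(\tau)-\sI{k}\leq\beta_k$, and the last uses \eqref{eqn:zOdeSol} and \eqref{eqn:zMatrixBd}. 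The point of evaluating the dominant term at $\xi(\tau)$ rather than at $\sI{k}$ is precisely to expose the $e^{-\lt(\tI{\ell}-\tau)}e^{-\lz(\xi(\tau)-\sI{n_0})}$ coupling that Lemma~\ref{lem:comDRate} is designed to bound, producing the $\LtT{a}\,e^{-\lm(\tI{n}-\tI{n_0})}$ term after using $\tI{\ell}\geq\tI{n}$ for $\ell\in\{n,n+1\}$. The remaining two pieces, after pulling $\sup_{k\geq n_0}\beta_k$ and $\sup_{n_0\leq k\leq n}\rz_{k+1}$ out of the sum and using $\sum_k\int_{\tI{k}}^{\tI{k+1}}e^{-\lt(\tI{\ell}-\tau)}\df\tau\leq 1/\lt$, give the $\LtT{b}[\sup_k\beta_k]$ and $\LtT{c}[\sup_{k}\rz_{k+1}]$ terms. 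I expect the main obstacle to be exactly this bookkeeping: casting the ODE bound in the $\xi(\tau)$-form required by Lemma~\ref{lem:comDRate}, and keeping the index ranges consistent so that the single exponential $e^{-\lm(\tI{n}-\tI{n_0})}$ dominates uniformly over $\ell\in\{n,n+1\}$.
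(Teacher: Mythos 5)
Your proposal is correct and follows essentially the same route as the paper: the discretization and martingale terms are handled exactly as in Lemma~\ref{lem:EzDBd} (via Lemma~\ref{lem:ItkBd} and Lemma~\ref{lem: technical for norm of int}, and the split at $\tI{n}$ with \eqref{eqn:tMatrixBd}, respectively), and your three-way decomposition of $\norm{z_k}$ — into $\rz_{k+1}$, the $\norm{\Ww}\Rzi\beta_k$ increment of the ODE solution, and the exponentially decaying $\norm{\zSol{\xi(\tau)}}$ fed into Lemma~\ref{lem:comDRate} — is precisely the paper's argument for $\EtT$. The only cosmetic difference is the order in which the three pieces are listed; the handling of $\ell\in\{n,n+1\}$ via $e^{-\lm(\tI{n+1}-\tI{n_0})}\leq e^{-\lm(\tI{n}-\tI{n_0})}$ also matches.
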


\begin{proof}
For the first claim of the lemma fix $\ell \in \{n, n + 1\}.$ Let $k \in \{n_0, \ldots,\ell - 1\}$ and  $\tau \in [\tI{k}, \tI{k + 1}).$ With $\It(k)$ as in \eqref{defn:Itk},
\[
\norm{\zetD(\tau)} \leq \norm{\Xt}  (\tau - \tI{k}) \It(k) \leq \alpha_k \norm{\Xt} \It(k).
\]
So by Lemma~\ref{lem:ItkBd}, on $G_n,$ $\norm{\zetD(\tau)} \leq \alpha_k \norm{\Xt} \Jt.$ The first claim now follows
by \eqref{eqn:tMatrixBd} and Lemma~\ref{lem: technical for norm of int}.
% as in the proof of Lemma~\ref{lem:EzDBd}.

For proving the second claim of the lemma  let $\ell = n.$ By triangle inequality,
\[
\|\EtT(\tI{n})\| \leq \sum_{k = n_0}^{n - 1} \int_{\tI{k}}^{\tI{k + 1}} \norm{e^{-\Xt(\tI{n} - \tau)}} \norm{\zetT(\tau)} \df \tau.
\]
Using \eqref{eqn:tMatrixBd}, it follows that
\[
\|\EtT(\tI{n})\| \leq  \Kt \sum_{k = n_0}^{n - 1} \int_{\tI{k}}^{\tI{k + 1}} e^{-\lt(\tI{n} - \tau)} \norm{\zetT(\tau)} \df \tau.
\]
Fix $k \in \{n_0, \ldots, n - 1\}$ and $\tau \in [\tI{k}, \tI{k + 1}).$ Then
\[
\norm{\zetT(\tau)} \leq \norm{W_1} \norm{z_k}.
\]
Using \eqref{defn:xi} and the triangle inequality,
\begin{multline*}
\norm{z_k} \leq \norm{\zSol{\xi(\tau)}}\\  + \norm{\zSol{\xi(\tau)} - \zSol{\xi(\tI{k})}}
+  \norm{z_k - \zSol{\xi(\tI{k})}}.
\end{multline*}
\gal{Since \eqref{eq: Gn' definition} holds}, $\norm{z_{n_0}} \leq \Rzi;$ thus by \eqref{eqn:zOdeSol} and  \eqref{eqn:zMatrixBd},
\[
\norm{\zSol{\xi(\tau)}} \leq \Kz \Rzi e^{-\lz(\xi(\tau) - \sI{n_0})}.
\]
Remark~\ref{rem:MonDec} also implies that, as $\norm{z_{n_0}} \leq \Rzi,$ $\norm{\zSol{s}} \leq \Rzi$ for all $s \geq \sI{n_0}.$ Hence by \eqref{eqn:zLimODE},
\begin{eqnarray*}
\norm{\zSol{\xi(\tau)} - \zSol{\xi(\tI{k})}} & \leq & \norm{ \int_{\xi(\tI{k})}^{\xi(\tau)}[-\Ww]\; \zSol{\mu} \df \mu}\\
& \leq & \norm{\Ww}\Rzi \beta_k,
\end{eqnarray*}
where the last relation holds as $[\xi(\tau) - \xi(\tI{k})] \leq [\sI{k + 1} - \sI{k}].$ Also note that, by \eqref{defn:rhoz},
\[
\norm{z_k - \zSol{\xi(\tI{k})}} \leq \rz_{k + 1}.
\]
Combining the above relations,
\begin{eqnarray*}
& & \norm{\zetT(\tau)}\\
& \leq & \norm{\Wt}\bigg[\Kz \Rzi e^{-\lz(\xi(\tau) - \sI{n_0})} + \norm{\Ww} \Rzi \beta_k + \rz_{k + 1}\bigg]\\
& \leq & \norm{\Wt}\Bigg[\Kz \Rzi e^{-\lz(\xi(\tau) - \sI{n_0})} + \norm{\Ww} \Rzi \left[\sup_{k \geq n_0} \beta_k\right] + \left[\sup_{n_0 \leq k \leq n - 1}\rz_{k + 1}\right]\Bigg]
\end{eqnarray*}
By Lemma~\ref{lem:comDRate} and the fact that $\int_{\tI{n_0}}^{\tI{n}} e^{-\lt(\tI{n} - \tau)} \df \tau \leq 1/\lt,$
\[
\|\EtT(\tI{n})\| \leq  \LtT{a} e^{-\lm(\tI{n} - \tI{n_0})} + \LtT{b}  \left[\sup_{k \geq n_0} \beta_k\right] + \LtT{c}\left[\sup_{n_0 \leq k \leq n - 1}\rz_{k + 1}\right].
\]
A similar bound holds for $\ell = n + 1.$ Since $e^{-\lm(\tI{n + 1} - \tI{n_0})} \leq e^{-\lm(\tI{n} - \tI{n_0})},$ the second claim of the lemma follows.

The third claim of the lemma, bounding $\norm{\EzM(\sI{n + 1})}$, follows in a similar way to the third claim of Lemma~\ref{lem:EzDBd}.
\end{proof}

Similarly to \gal{Lemma~\ref{lem:rzBd}}, the next lemma bounds $\rt_{n + 1}$ and $\rtS_{n + 1}$ with decaying terms using Lemma~\ref{lem:EtDBd}.

\begin{lemma}[ODE-SA Distance Bound for $\theta_n$]
\label{lem:rtBd}
Fix $n_0 \geq 0$ and $n \geq n_0.$ Then on $ G_n,$ \gal{assuming \eqref{eq: Gn' definition}},
\begin{align*}
\rt_{n + 1} \leq & \Kt \norm{\EtM(\tI{n})} + \Lt{a} \; e^{-q(\tI{n} - \tI{n_0})} + \Lt{b} \left[\sup_{k \geq n_0} \beta_k\right] + \Lt{c}\left[\sup_{n_0 \leq k \leq n} \nu_{k + 1}\right], \\
\rtS_{n + 1} \leq &\Kt \norm{\EtM(\tI{n})} + [\Kt \Rti + \Lt{a}] e^{- \lm (\tI{n} - \tI{n_0})} + \Lt{b} \left[\sup_{k \geq n_0} \beta_k\right] + \Lt{c}\left[\sup_{n_0 \leq k \leq n} \nu_{k + 1}\right],
\end{align*}
where $\Lt{a} = \LtT{a}, \Lt{c} = \LtT{c}$ and $\Lt{b} := \LtD + \LtM +  \norm{\Xt} \Rti + \LtT{b}$.

\end{lemma}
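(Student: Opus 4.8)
The plan is to transcribe the proof of Lemma~\ref{lem:rzBd} from the fast iterate $\barz$ to the slow iterate $\bart$, the only genuine difference being that the limiting ODE \eqref{eqn:tLimODE} is affine rather than homogeneous. First I would fix $\tau \in [\tI{n}, \tI{n + 1}]$ and exploit the fact that $\bart$ is a linear interpolation to write $\bart(\tau) = (1 - \kappa)\bart(\tI{n}) + \kappa\bart(\tI{n + 1})$ for some $\kappa \in [0, 1]$. Using the integral form of \eqref{eqn:tLimODE} to express $\tSol{\tI{n}}$ and $\tSol{\tI{n+1}}$ in terms of $\tSol{\tau}$, the triangle inequality then yields $\norm{\bart(\tau) - \tSol{\tau}} \leq (1 - \kappa)\norm{\bart(\tI{n}) - \tSol{\tI{n}}} + \kappa\norm{\bart(\tI{n + 1}) - \tSol{\tI{n + 1}}} + \int_{\tI{n}}^{\tI{n + 1}} \norm{\bt - \Xt\tSol{\tau_1}}\df\tau_1$, exactly paralleling the three-term bound on $\norm{\barz(\mu) - \zSol{\mu}}$ in Lemma~\ref{lem:rzBd}.

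The one step that differs from the $z$-case is the last integral, since the drift of \eqref{eqn:tLimODE} is $\bt - \Xt\tSol{\tau_1}$ rather than $-\Ww\zSol{\mu_1}$. The key is to rewrite it using $\bt = \Xt\thS$ as $\bt - \Xt\tSol{\tau_1} = -\Xt(\tSol{\tau_1} - \thS)$. Remark~\ref{rem:MonDec}, applicable because \eqref{eq: Gn' definition} gives $\norm{\theta_{n_0} - \thS} \leq \Rti$, then bounds $\norm{\tSol{\tau_1} - \thS} \leq \Rti$, so the integral is at most $\norm{\Xt}\Rti\,\alpha_n \leq \norm{\Xt}\Rti\,[\sup_{k \geq n_0}\beta_k]$ via $\alpha_k \leq \beta_k$. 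This is precisely the origin of the $\norm{\Xt}\Rti$ summand appearing in $\Lt{b}$, and it is the main place where care is needed relative to Lemma~\ref{lem:rzBd}.

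Next I would identify $\bart(\tI{\ell}) - \tSol{\tI{\ell}}$ with $\Et(\tI{\ell}) = \EtD(\tI{\ell}) + \EtT(\tI{\ell}) + \EtM(\tI{\ell})$ through \eqref{eqn:TrajComp-t}, and invoke Lemma~\ref{lem:EtDBd} to bound each piece on $G_n$. The discretization and tracking bounds there, together with $\alpha_k \leq \beta_k$, contribute the $\Lt{a}\,e^{-\lm(\tI{n} - \tI{n_0})}$ term, the $\LtD$ and $\LtT{b}$ parts of $\Lt{b}[\sup_{k \geq n_0}\beta_k]$, and the $\Lt{c}[\sup_{n_0 \leq k \leq n}\nu_{k + 1}]$ term, with $\nu_{k+1}$ the fast-iterate deviation of \eqref{defn:rhoz}. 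For the martingale noise I would combine the recursion $\norm{\EtM(\tI{n + 1})} \leq \Kt\norm{\EtM(\tI{n})} + \LtM\alpha_n$ from Lemma~\ref{lem:EtDBd} with $\Kt \geq 1$ and $\kappa \leq 1$ to get $(1 - \kappa)\norm{\EtM(\tI{n})} + \kappa\norm{\EtM(\tI{n + 1})} \leq \Kt\norm{\EtM(\tI{n})} + \LtM\alpha_n$; the $\LtM\alpha_n$ absorbs into the $\Lt{b}[\sup_{k \geq n_0}\beta_k]$ term. Collecting everything with the stated definitions $\Lt{a} = \LtT{a}$, $\Lt{c} = \LtT{c}$, and $\Lt{b} = \LtD + \LtM + \norm{\Xt}\Rti + \LtT{b}$ gives the first claim.

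For the second claim I would split $\norm{\bart(\tau) - \thS} \leq \norm{\bart(\tau) - \tSol{\tau}} + \norm{\tSol{\tau} - \thS}$, bound the first term by $\rt_{n + 1}$, and use the explicit solution \eqref{eqn:tOdeSol} with \eqref{eqn:tMatrixBd} and $\lt > \lm$ to obtain $\norm{\tSol{\tau} - \thS} \leq \Kt\Rti\,e^{-\lm(\tI{n} - \tI{n_0})}$ for all $\tau \geq \tI{n}$. Adding this to the first claim turns the coefficient of the exponential from $\Lt{a}$ into $\Kt\Rti + \Lt{a}$, yielding $\rtS_{n+1}$. Since every step beyond the affine-drift integral is a direct translation of Lemma~\ref{lem:rzBd} together with the already-established Lemma~\ref{lem:EtDBd}, I expect the only real obstacle to be the constant bookkeeping, namely matching each contribution to the correct $\Lt{a}$, $\Lt{b}$, or $\Lt{c}$ summand.
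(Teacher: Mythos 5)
Your proposal is correct and follows essentially the same route as the paper's proof: the convex-combination/triangle-inequality argument on $[\tI{n},\tI{n+1}]$, rewriting the affine drift as $-\Xt(\tSol{\tau_1}-\thS)$ and bounding it by $\norm{\Xt}\Rti\,\alpha_n$ via Remark~\ref{rem:MonDec}, invoking Lemma~\ref{lem:EtDBd} through the identification \eqref{eqn:TrajComp-t}, and the final split $\rtS_{n+1}\leq\rt_{n+1}+\sup_\tau\norm{\tSol{\tau}-\thS}$ with \eqref{eqn:tOdeSol} and $\lm<\lt$. Your bookkeeping of the martingale coefficient $(1-\kappa)+\kappa\Kt\leq\Kt$ matches the paper's (up to an inconsequential typo there), so nothing is missing.
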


\begin{proof}
Let $\tau \in [\tI{n}, \tI{n + 1}].$ Then arguing as in \gal{the} proof of Lemma~\ref{lem:rzBd}
%{ TODO: (either repeat the argument here, or make a formal claim from the argument, and refer to it both here and in Lemma~\ref{lem:rzBd})}
 and using \eqref{eqn:tLimODE}, there exists $\kappa \in [0,1]$ such that
\begin{multline*}
\norm{\bart(\tau) - \tSol{\tau}} \leq (1 - \kappa) \norm{\bart(\tI{n}) - \tSol{\tI{n}}}\\
+ \kappa \norm{\bart(\tI{n + 1}) - \tSol{\tI{n + 1}}} +  \int_{\tI{n}}^{\tI{n + 1}} \norm{\Xt} \norm{\tSol{\tau'} - \thS} \df \tau'.
\end{multline*}
\gal{Due to \eqref{eq: Gn' definition},} $\norm{\bart(\tI{n_0}) - \thS} \leq \Rti;$ thus, from Remark~\ref{rem:MonDec}, $\norm{\tSol{\tau} - \thS} \leq \Rti$ for all $t \geq \tI{n_0}.$ Using this with \eqref{eqn:TrajComp-t} and \eqref{eqn:zMatrixBd}, the facts that $\Kt \geq 1$,
\[
\alpha_n \leq \left[\sup_{k \geq n_0} \alpha_k\right] \leq \left[\sup_{k \geq n_0} \beta_k\right],
\]
and Lemma~\ref{lem:EtDBd}, the first claim of the lemma follows:
\begin{align}
\rt_{n + 1}
\leq &
\LtD\left[\sup_{k \geq n_0} \beta_{k}\right] + \LtT{a}e^{-q(\tI{n} - \tI{n_0})}
+ \LtT{b} \left[\sup_{k \geq n_0} \beta_k\right] + \LtT{c}\left[\sup_{n_0 \leq k \leq n} \nu_{k + 1}\right] \notag \\
& + \kappa \LtM \left[\sup_{k \geq n_0} \beta_k\right] + (\kappa + (1-\kappa)\Kt)\norm{\EtM(\tI{n})} + \norm{\Xt}\Rti\left[\sup_{k \geq n_0} \beta_k\right]
\notag
\\
\leq & \Kt \norm{\EtM(\tI{n})} + \Lt{a} \; e^{-q(\tI{n} - \tI{n_0})} + \Lt{b} \left[\sup_{k \geq n_0} \beta_k\right] + \Lt{c}\left[\sup_{n_0 \leq k \leq n} \nu_{k + 1}\right].
\label{eq: rho bound}
\end{align}
%TODO: improve and explain each of the relations

For the second claim of the lemma, notice that
\[
\norm{\bart(\tau) - \thS}  \leq  \norm{\bart(\tau) - \tSol{\tau}}
+ \norm{\tSol{\tau} - \thS}.
\]
Thus, we have
\[
\rtS_{n + 1} \leq \rt_{n + 1} + \sup_{\tau \in [\tI{n}, \tI{n + 1}]} \norm{\tSol{\tau} - \thS}.
\]
Lastly, \gal{using \eqref{eq: Gn' definition},} $\norm{\bart(\tI{n_0}) - \thS} \leq \Rti;$ thus, from \eqref{eqn:tOdeSol},
\[
\norm{\tSol{\tau} - \thS} \leq \Kt \Rti e^{-\lt(\tau - \tI{n_0})}.
\]
Combining the above two relations, using \eqref{eq: rho bound} and the fact that $\lm < \lt,$ the second claim of the lemma follows.
\end{proof}

\subsection{\gal{Completing the Proof of Theorem~\ref{thm:condMain}}} \label{sec: conc_bound_appendx}
We first prove Lemmas \ref{lem:rzEqEv and rtEqEv} and \ref{lem:rzSEqEv and rtSEqEv} for
{
bounding the terms appearing in Lemma~\ref{lem:IntEv} using the results from the previous subsections.
Then, we provide a bound on the martingale difference noise in Lemma~\ref{lem:MtConc and MzConc}.
Finally, we combine these results to prove Theorem~\ref{thm:condMain}.
}
%proving Lemma~\ref{lem:EvSplitMartConc}.

\begin{lemma}
\label{lem:rzEqEv and rtEqEv}
{
In accordance with Table~\ref{tab: epsilon dependent constants}, let $\N{a} \equiv \N{a}(\et,\ez,\{\alpha_k\},\{\beta_k\})$ denote the smallest positive value satisfying
\begin{equation}
\label{eq: Na def}
\max \left\{\sup_{k \geq \N{a}} \beta_k, \; \sup_{k \geq \N{a}} \eta_k \right\} \leq \frac{\min\left\{ {\et}/{8},\; {\ez}/{3}\right\}}{\Lz \max\{\Lt{c}, 1\}},
\end{equation}
$\N{b} \equiv \N{b}(\et,\{\beta_k\})$ the smallest positive value satisfying
\begin{equation}
\label{eq: Nb def}
\sup_{k \geq \N{b}} \beta_k \leq \frac{\et}{4\Lt{b}},
\end{equation}
and $N_0 \equiv N_0(\et,\ez,\{\alpha_k\},\{\beta_k\}) = \max\{\N{a},\N{b}\}$.
Then, for any $n_0 \geq N_0$ and $n \geq n_0,$}
%Fix $n_0 \geq N_0${, where $N_0$ is defined as in \eqref{defn:N0}}.
\gal{assuming \eqref{eq: Gn' definition},}%
\begin{equation}
\label{eq:rzEqEv}
[ G_n \cap \{\rz_{n + 1} \geq \ezg\}] \subseteq \left[ G_n \cap \left\{\Kz \norm{\EzM(\sI{n})} \geq \frac{\ez}{3}\right\}\right]
\end{equation}
{
and
}
\begin{multline}
\label{eq:rtEqEv}
[ G_n \cap \{\rt_{n + 1} \geq \etg\}] \\
\subseteq \left[ G_n \cap \left\{ \Kt \norm{\EtM(\tI{n})} \geq \frac{\et}{4}\right\}\right] \cup \bigcup_{k = n_0}^{n} \left[ G_k \cap \left\{ \Lt{c} \Kz \norm{\EzM(\sI{k})} \geq \frac{\et}{8}\right\}\right].
\end{multline}

\end{lemma}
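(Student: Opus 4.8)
The plan is to derive both inclusions directly from the ODE--SA distance bounds already established in Lemmas~\ref{lem:rzBd} and \ref{lem:rtBd}, together with the defining inequalities of $N_0 = \max\{\N{a},\N{b}\}$ in \eqref{eq: Na def}--\eqref{eq: Nb def}, the gap relations $\etg \geq \et$ and $\ezg \geq \ez$ from \eqref{eq:epsilon smaller than R}, and the crucial identity $\etg = 4\Lt{a}$, which follows by comparing \eqref{eq: defn of Rto} with $\Lt{a} = \LtT{a}$. The guiding idea is that on $G_n$ the non-martingale contributions to $\rz_{n+1}$ and $\rt_{n+1}$ are forced to be small by the choice $n_0 \geq N_0$, so that if $\rz_{n+1}$ (resp. $\rt_{n+1}$) is large, the martingale-noise term must carry the excess.

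For \eqref{eq:rzEqEv} I would work on $G_n \cap \{\rz_{n+1} \geq \ezg\}$. Since $n_0 \geq \N{a}$, condition \eqref{eq: Na def} bounds $\Lz \max\{\sup_{k \geq n_0}\beta_k, \sup_{k \geq n_0}\eta_k\}$ by $\min\{\et/8, \ez/3\}/\max\{\Lt{c},1\} \leq \ez/3$. Feeding this into the first bound of Lemma~\ref{lem:rzBd} and using $\rz_{n+1} \geq \ezg \geq \ez$ gives $\ez \leq \Kz\norm{\EzM(\sI{n})} + \ez/3$, hence $\Kz\norm{\EzM(\sI{n})} \geq 2\ez/3 \geq \ez/3$; this places the point in the right-hand event, proving the inclusion.

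For \eqref{eq:rtEqEv} I would argue by contradiction, assuming the point lies in $G_n \cap \{\rt_{n+1} \geq \etg\}$ but in none of the right-hand events. First I note that $G_n \subseteq G_k$ for every $k \in \{n_0, \ldots, n\}$, so on $G_n$ the per-index bound of Lemma~\ref{lem:rzBd} is valid for each such $k$, and avoiding each event $G_k \cap \{\Lt{c}\Kz\norm{\EzM(\sI{k})} \geq \et/8\}$ forces $\Lt{c}\Kz\norm{\EzM(\sI{k})} < \et/8$. Combining this with the $\N{a}$ bound (which makes $\Lt{c}\,\Lz\max\{\sup_{k\geq n_0}\beta_k,\sup_{k\geq n_0}\eta_k\} \leq \et/8$) gives $\Lt{c}\sup_{n_0 \leq k \leq n}\rz_{k+1} < \et/4$. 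Next, condition \eqref{eq: Nb def} gives $\Lt{b}\sup_{k\geq n_0}\beta_k \leq \et/4$, the identity $\etg = 4\Lt{a}$ gives $\Lt{a}e^{-q(\tI{n}-\tI{n_0})} \leq \Lt{a} = \etg/4$, and avoiding the first right-hand event gives $\Kt\norm{\EtM(\tI{n})} < \et/4$. Substituting these four estimates into the bound for $\rt_{n+1}$ in Lemma~\ref{lem:rtBd} yields $\etg \leq \rt_{n+1} < 3\et/4 + \etg/4$, i.e. $\etg < \et$, contradicting $\etg \geq \et$.

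The main obstacle is organizing the $\rt_{n+1}$ bookkeeping: one must split the single supremum term $\Lt{c}\sup_{n_0 \leq k \leq n}\rz_{k+1}$ into a union of martingale events over $k \in \{n_0,\ldots,n\}$, which is exactly why the right-hand side of \eqref{eq:rtEqEv} is a union rather than a single event. The nesting $G_n \subseteq G_k$ is what makes each $\rz_{k+1}$ bound available on $G_n$, and the identity $\etg = 4\Lt{a}$ is what absorbs the otherwise uncontrolled exponential tracking-error term $\Lt{a}e^{-q(\tI{n}-\tI{n_0})}$ into the gap budget.
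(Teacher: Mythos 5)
Your proposal is correct and follows essentially the same route as the paper's proof: both rest on Lemmas~\ref{lem:rzBd} and \ref{lem:rtBd}, the identity $\etg = 4\Lt{a}$, the defining inequalities of $\N{a}$ and $\N{b}$, and the nesting $G_n \subseteq G_k$ to split the supremum $\Lt{c}\sup_{n_0 \leq k \leq n}\rz_{k+1}$ into the union over $k$. The only difference is presentational: you phrase the second inclusion as a proof by contradiction, whereas the paper writes the same estimates directly as set inclusions.
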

\begin{proof}
Equation~\eqref{eq:rzEqEv} follows from Lemma~\ref{lem:rzBd}, \eqref{eq:epsilon smaller than R},
{ and the fact that $$2\ez/3 \geq \ez/3 \geq \Lz \max \left\{\sup_{k \geq n_0} \beta_k, \sup_{k \geq n_0} \eta_k \right\}$$ for $n_0 \geq \N{a}$.}
%for $n_0 \geq N_0$ due to \eqref{defn:N0} and \eqref{eq: Na def}.
%and { from the definition of N_0 via} \eqref{eq: Na def}.
%\sout{  and the fact that{, due to our choice of $\ez$,} $\ez/3 \leq \ez/2 \leq \ezg/2.$}

We now prove \eqref{eq:rtEqEv}.
{
Due to \eqref{eq: defn of Rto} and \eqref{eq:epsilon smaller than R}, $\etg = 4 \Lt{a}$ (see Table~\ref{tab: epsilon dependent constants} for the definition of $\Lt{a}$), and thus $\Lt{a} e^{-\lm (\tI{n} - \tI{n_0})} \leq \etg/4$ for $n \geq n_0$.
Additionally, as $n_0 \geq \N{b}$, $\Lt{b} \left[\sup_{k \geq n_0} \beta_k \right] \leq \et/4$. %$\leq \etg/4$ due to \eqref{eq:epsilon smaller than R}.
Consequently, by Lemma~\ref{lem:rtBd}, and as $\etg \geq \et$ due to \eqref{eq:epsilon smaller than R},
}
%By Lemma~\ref{lem:rtBd} and since $\etg = 4 \Lt{a},$  $\Lt{a} e^{-\lm (\tI{n} - \tI{n_0})} \leq \etg/4.$ Combined with \gal{\eqref{eq: na def}, \eqref{eq: Nb def} and \eqref{eq: Gn' definition}}, we get that
%
\begin{multline*}
[ G_n \cap \{\rt_{n + 1} \geq \etg\}] \\
\subseteq \left[ G_n \cap \left\{ \Kt \norm{\EtM(\tI{n})} \geq \frac{{\et}}{4}\right\}\right] \cup \left[ G_n \cap \left\{ \Lt{c} \left[ \sup_{n_0 \leq k \leq n} \rz_{k + 1}\right]  \geq \frac{{\et}}{4}\right\}\right].
\end{multline*}
{Noting also that %$\etg \geq \et$ due to \eqref{eq:epsilon smaller than R}, \eqref{eq: Na def} and
$G_n \subseteq G_k$ for all $n_0 \leq k \leq n$, the desired result now follows from Lemma~\ref{lem:rzBd},
%\eqref{eq:epsilon smaller than R},
and the fact that $\et/8 \geq \Lz \max \left\{\sup_{k \geq n_0} \beta_k, \sup_{k \geq n_0} \eta_k \right\}$ for $n_0 \geq \N{a}$ by the definition of $\N{a}$.}
\end{proof}

\begin{lemma}
\label{lem:rzSEqEv and rtSEqEv}
{
Fix some $n_0 \geq N_0$ and $n_1 \geq N_1,$
where, in accordance with Table~\ref{tab: epsilon dependent constants}, $N_0 \equiv N_0(\et,\ez,\{\alpha_k\},\{\beta_k\})$ is defined as in Lemma~\ref{lem:rzEqEv and rtEqEv},
$N_1 \equiv N_1(n_0,\et,\ez,\{\alpha_k\},\{\beta_k\}) = \max\{\n{a},\n{b}\}$,
$\n{a} \equiv \n{a}(n_0,\et, \{\alpha_k\})$
denotes the smallest positive value satisfying
\begin{equation}
\label{eq: na def}
[\Kt \Rti + \Lt{a}] e^{-\lm(\tI{\n{a}} - \tI{n_0})} \leq \frac{\et}{4},
\end{equation}
and
$\n{b} \equiv \n{b}(n_0,\ez, \{\beta_k\})$
denotes the smallest positive value satisfying
\begin{equation}
\label{eq: nb def}
\Kz \Rzi e^{-\lz (\sI{\n{b}} - \sI{n_0})} \leq \frac{\ez}{3}.
\end{equation}
Then\gal{, assuming \eqref{eq: Gn' definition},} for all $n \geq n_1,$
}
\begin{equation}
\label{eq:rzSEqEv}
[ G_n \cap \{\rzS_{n + 1} \geq \ez\}]
\subseteq \left[ G_n \cap \left\{\Kz \norm{\EzM(\sI{n})} \geq \frac{\ez}{3}\right\}\right]
\end{equation}
and
\begin{multline}
\label{eq:rtSEqEv}
[ G_n \cap \{\rtS_{n + 1} \geq \et\}] \\
\subseteq \left[ G_n \cap \left\{\Kt \norm{\EtM(\tI{n})} \geq \frac{\et}{4}\right\}\right] \cup \bigcup_{k = n_0}^{n} \left[ G_k \cap \left\{ \Lt{c} \Kz \norm{\EzM(\sI{k})} \geq \frac{\et}{8}\right\}\right].
\end{multline}
\end{lemma}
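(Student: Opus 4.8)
The plan is to mirror the proof of Lemma~\ref{lem:rzEqEv and rtEqEv} almost verbatim, the only genuinely new feature being the two extra exponentially decaying terms that appear in the $\rzS$ and $\rtS$ bounds but not in the $\rz$ and $\rt$ bounds. These extra terms, $\Kz \Rzi e^{-\lz(\sI{n} - \sI{n_0})}$ and $[\Kt \Rti + \Lt{a}] e^{-\lm(\tI{n} - \tI{n_0})},$ originate from the convergence of the limiting ODE trajectories toward $\zS$ and $\thS,$ and are precisely what the thresholds $\n{b}$ and $\n{a}$ are engineered to suppress once $n \geq n_1 \geq N_1.$ Everything else—controlling the stepsize-based and coupling terms through $n_0 \geq N_0$—is identical to the earlier lemma.

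For \eqref{eq:rzSEqEv}, I would start from the second bound in Lemma~\ref{lem:rzBd}, valid on $G_n.$ Since $\{\sI{n}\}$ is nondecreasing and $n \geq n_1 \geq \n{b},$ the exponential term is dominated by its value at $\n{b},$ which \eqref{eq: nb def} bounds by $\ez/3.$ The stepsize-based term is handled exactly as before: because $n_0 \geq N_0 \geq \N{a}$ and $\max\{\Lt{c},1\} \geq 1,$ \eqref{eq: Na def} gives $\Lz \max\{\sup_{k \geq n_0}\beta_k, \sup_{k \geq n_0}\eta_k\} \leq \ez/3.$ Hence, on $G_n,$ $\rzS_{n+1} \leq \Kz \norm{\EzM(\sI{n})} + 2\ez/3,$ so $\{\rzS_{n+1} \geq \ez\}$ forces $\Kz\norm{\EzM(\sI{n})} \geq \ez/3,$ yielding the inclusion.

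For \eqref{eq:rtSEqEv}, I would start from the second bound in Lemma~\ref{lem:rtBd}. The new term $[\Kt\Rti + \Lt{a}]e^{-\lm(\tI{n}-\tI{n_0})}$ is bounded by $\et/4$ using $n \geq \n{a}$ and \eqref{eq: na def}, while $\Lt{b}[\sup_{k\geq n_0}\beta_k]$ is bounded by $\et/4$ using $n_0 \geq \N{b}$ and \eqref{eq: Nb def}. The only delicate piece is the coupling term $\Lt{c}[\sup_{n_0 \leq k \leq n} \rz_{k+1}]$: here I would re-expand each $\rz_{k+1}$ via the first bound in Lemma~\ref{lem:rzBd}—legitimate because $G_n \subseteq G_k$ for $k \leq n$—and invoke \eqref{eq: Na def} together with $\Lt{c}/\max\{\Lt{c},1\} \leq 1$ to obtain $\Lt{c}\rz_{k+1} \leq \Lt{c}\Kz\norm{\EzM(\sI{k})} + \et/8.$ Combining the three estimates gives, on $G_n,$ $\rtS_{n+1} \leq \Kt\norm{\EtM(\tI{n})} + [\sup_{n_0 \leq k \leq n} \Lt{c}\Kz\norm{\EzM(\sI{k})}] + 5\et/8.$

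Finally, a contrapositive argument closes the inclusion: if $\rtS_{n+1}\geq\et,$ the two martingale terms must sum to at least $3\et/8,$ so either $\{\Kt\norm{\EtM(\tI{n})} \geq \et/4\}$ holds or $\{\Lt{c}\Kz\norm{\EzM(\sI{k})} \geq \et/8\}$ holds for some $k,$ since otherwise their sum would be strictly below $\et/4 + \et/8 = 3\et/8.$ Using $G_n \subseteq G_k$ once more to attach the correct ``good'' event to each $\EzM(\sI{k})$ term produces the union on the right-hand side of \eqref{eq:rtSEqEv}. I expect the coupling term to be the only real obstacle: it requires re-expanding $\rz_{k+1}$ under the monotonicity $G_n \subseteq G_k$ and carefully splitting the slack $\et$ into the $\et/4,\et/4,\et/8$ budget, whereas in Lemma~\ref{lem:rzEqEv and rtEqEv} the ODE-decay terms were absorbed into the $R^{\mathrm{gap}}$ slack rather than suppressed by $N_1.$
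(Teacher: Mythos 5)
Your proposal is correct and follows essentially the same route as the paper: it invokes the second bounds of Lemmas~\ref{lem:rzBd} and \ref{lem:rtBd}, uses $n\geq n_1\geq\max\{\n{a},\n{b}\}$ to suppress the extra exponential (ODE-decay) terms, uses $n_0\geq N_0$ for the stepsize terms, re-expands the coupling term $\Lt{c}\rz_{k+1}$ via Lemma~\ref{lem:rzBd} together with $G_n\subseteq G_k$, and splits the slack into the same $\et/4,\et/4,\et/8$ budget. Your closing observation---that in Lemma~\ref{lem:rzEqEv and rtEqEv} the decay term is absorbed into the $R^{\mathrm{gap}}$ slack (via $\etg=4\Lt{a}$) whereas here it must be suppressed by $N_1$---is exactly the distinction the paper exploits.
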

\begin{proof}
{
	Note  that
	\(
	\Kz \Rzi e^{-\lz(\sI{n} - \sI{n_0})} \leq \ez/3
	\)
	for all $n\geq \n{b}$ due to $\lm \leq \lz$,
	and that
	$$
	\Lz \max \left\{\sup_{k \geq n} \beta_k, \sup_{k \geq n} \eta_k \right\}  \leq \et/3
	$$
	for all $n\geq \N{a}$ (recall $\N{a}$ from Lemma~\ref{lem:rzEqEv and rtEqEv})
	. Therefore, due to Lemma~\ref{lem:rzBd},  \eqref{eq:rzSEqEv} holds.
}

{
For proving \eqref{eq:rtSEqEv}, note first that,
as $n \geq \n{a}$ and $\lm \leq \lz$, it holds that $$[\Kt \Rti + \Lt{a}] e^{-\lz(\tI{\n{a}} - \tI{n_0})} \leq \frac{\et}{4}.$$
Additionally, as $n \geq \N{b}$, $\Lt{b} \left[\sup_{k \geq n_0} \beta_k \right] \leq \et/4$
(recall $\N{b}$ from Lemma~\ref{lem:rzEqEv and rtEqEv}). %$\leq \etg/4$ due to \eqref{eq:epsilon smaller than R}.
Consequently, by Lemma~\ref{lem:rtBd},
\begin{multline}
\label{eq:rtSEqEv halfway}
[ G_n \cap \{\rtS_{n + 1} \geq \et\}] \\
\subseteq \left[ G_n \cap \left\{ \Kt \norm{\EtM(\tI{n})} \geq \frac{{\et}}{4}\right\}\right] \cup \left[ G_n \cap \left\{ \Lt{c} \left[ \sup_{n_0 \leq k \leq n} \rz_{k + 1}\right]  \geq \frac{{\et}}{4}\right\}\right].
\end{multline}
To complete the proof, we argue as in the last part of the proof of Lemma~\ref{lem:rzEqEv and rtEqEv}: noting that %$\etg \geq \et$ due to \eqref{eq:epsilon smaller than R}, \eqref{eq: Na def} and
$G_n \subseteq G_k$ for all $n_0 \leq k \leq n$, the desired result follows from \eqref{eq:rtSEqEv halfway}
using Lemma~\ref{lem:rzBd},
%\eqref{eq:epsilon smaller than R},
and the fact that $\et/8 \geq \Lz \max \left\{\sup_{k \geq n_0} \beta_k, \sup_{k \geq n_0} \eta_k \right\}$ for $n_0 \geq \N{a}$ (recall, again, $\N{a}$ from Lemma~\ref{lem:rzEqEv and rtEqEv}).
}
%Arguing as in the proof of Lemma~\ref{lem:rtEqEv}, the desired result follows from the second claim in Lemma~\ref{lem:rtBd}, first claim in Lemma~\ref{lem:rzBd}, \eqref{eq:epsilon smaller than R}, \eqref{eq: Na def}, \eqref{eq: na def}\gal{, \eqref{eq: Nb def} and \eqref{eq: Gn' definition}}.
\end{proof}

%\begin{lemma}[Bound Form for Event of Interest]
%\label{lem:EvSplitMartConc}
%Let $n_0 \geq N_0,$ $n_1 \geq N_1(n_0).$ Then\gal{, assuming \eqref{eq: Gn' definition} holds,}
%\begin{align*}
%\cE^c(&n_0,T) \subseteq  \cE_{\midE} \cup \cE_{\aftE}  \subseteq \\
%&\left[\bigcup_{n = n_0}^{\infty} \left[G_n \cap \left\{ \Kt \norm{\EtM(\tI{n})} \geq \frac{\et}{4}\right\}\right]\right]\\
%\cup &\left[\bigcup_{n = n_0}^{\infty} \left[G_n \cap \left\{ \Kz \norm{\EzM(\sI{n})} \geq \frac{\ez}{3}\right\}\right]\right]\\
%\cup &\left[\bigcup_{n = n_0}^{\infty} \left[G_n \cap \left\{ \Lt{c} \Kz \norm{\EzM(\sI{n})} \geq \frac{\et}{8}\right\}\right]\right].
%\end{align*}
%\end{lemma}
%
%\begin{proof}
%This  result follows from Lemma~\ref{lem:IntEv} and the Lemmas~\ref{lem:rzEqEv}, \ref{lem:rtEqEv}, \ref{lem:rzSEqEv}, and \ref{lem:rtSEqEv} put together.
%\end{proof}

%Lastly, to provide the proof of our main technical theorem, we give the two following lemmas.
Lastly, to provide the proof of our main technical theorem, we give the following lemma.
{
We remind the reader that $a_n = \sum_{k = 0}^{n - 1} \st_k^{2} e^{ -2 \lt (\tI{n} - \tI{k + 1})},$
and $b_n := \sum_{k = 0}^{n - 1} \sw_k^{2} e^{ -2 \lz (\sI{n} - \sI{k + 1})}$ for $n \geq 0$.
Also recall that $\EtM(\tI{n})$ and $\EzM(\tI{n})$ depend on $n_0$, as can be seen from their definition in \gal{Subsection~\ref{subsec:Comparison}}.
}

\begin{lemma}[Azuma-Hoeffding for $\EtM$ { and $\EzM$}]
\label{lem:MtConc and MzConc}
Fix $n_0 \geq 0,$ $\delta > 0.$ Then for any $n \geq n_0,$
\begin{equation}
\label{eq:MtConc}
\Pr\left\{G_n,  \norm{\EtM(\tI{n})} \geq \delta\right\} \leq 2d^2 \exp\left(-\frac{\delta^2}{d^3 (\LtM)^2 a_n}\right)
\end{equation}
{
and
\begin{equation}
\label{eq:MzConc}
\Pr\left\{G_n,  \norm{\EzM(\sI{n})} \geq \delta\right\} \leq 2d^2 \exp\left(-\frac{\delta^2}{d^3 (\LzM)^2 b_n}\right).
\end{equation}
}
\end{lemma}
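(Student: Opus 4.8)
The plan is to write $\EtM(\tI n)$ as a finite sum of martingale differences carrying deterministic, exponentially decaying weights, reduce the vector tail event to a family of scalar ones indexed by matrix entries, and apply the scalar Azuma--Hoeffding inequality to each. Since $\zetM(\cdot)$ is piecewise constant with $\zetM(\tau)=\Mt_{k+1}$ on $[\tI k,\tI{k+1})$, the definition of $\EtM$ gives
\[
\EtM(\tI n)=\sum_{k=n_0}^{n-1} A_k\,\Mt_{k+1},\qquad A_k:=\int_{\tI k}^{\tI{k+1}} e^{-\Xt(\tI n-\tau)}\,\df\tau ,
\]
with each $A_k$ a deterministic matrix. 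From \eqref{eqn:tMatrixBd} and $\tI{k+1}-\tI k=\st_k$ one obtains $\norm{A_k}\le \st_k\Kt e^{-\lt(\tI n-\tI{k+1})}$, so that $\sum_{k=n_0}^{n-1}\norm{A_k}^2\le \Kt^2 a_n$; this is exactly where $a_n$ enters the bound.

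The main obstacle is that $\Mt_{k+1}$ is bounded only on the good event, not almost surely, so Azuma--Hoeffding cannot be applied to $\sum_k A_k\Mt_{k+1}$ as is. I would handle this with a truncation (stopped martingale) argument. For fixed entries $j,l\in\{1,\dots,d\}$, set $X^{(j,l)}_k:=[A_k]_{jl}\,\Id_{G_k}\,[\Mt_{k+1}]_l$. The event $G_k$ is $\cF_k$-measurable, since by \eqref{defn:Gn} it is determined by $\bart(\cdot)$ up to $\tI k$ and $\barz(\cdot)$ up to $\sI k$, hence by $\theta_{n_0},\dots,\theta_k$ and $z_{n_0},\dots,z_k$; as $A_k$ is deterministic and $\bE[\Mt_{k+1}\mid\cF_k]=0$, the sequence $\{X^{(j,l)}_k\}_k$ is a martingale difference sequence w.r.t. $\{\cF_{k+1}\}$. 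Crucially, on $G_k$ we have $\Kt\norm{\Mt_{k+1}}\le\LtM$ (exactly as in the third claim of Lemma~\ref{lem:EtDBd}), so the differences are bounded unconditionally by $|X^{(j,l)}_k|\le\norm{A_k}\,\LtM/\Kt$, and therefore $\sum_{k=n_0}^{n-1}\bigl(\norm{A_k}\LtM/\Kt\bigr)^2\le(\LtM)^2 a_n$.

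The two pieces are glued using the nesting $G_n\subseteq G_k$ for $n_0\le k\le n$ (already noted before Lemma~\ref{lem:rzEqEv and rtEqEv}): on $G_n$ each indicator $\Id_{G_k}$ equals one, so $[\EtM(\tI n)]_j=\sum_{l}\widetilde Y_{j,l}$ on $G_n$, where $\widetilde Y_{j,l}:=\sum_{k=n_0}^{n-1}X^{(j,l)}_k$ is the indicator-carrying (hence globally bounded-difference) martingale. Using $\norm{v}\le\sqrt d\,\max_j|v_j|$ and $|v_j|\le d\,\max_l|\widetilde Y_{j,l}|$, the event $\{G_n,\ \norm{\EtM(\tI n)}\ge\delta\}$ is contained in $\bigcup_{j,l}\{|\widetilde Y_{j,l}|\ge\delta/d^{3/2}\}$. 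Applying scalar Azuma--Hoeffding to each $\widetilde Y_{j,l}$ with threshold $\delta/d^{3/2}$ and variance proxy $(\LtM)^2 a_n$, and union bounding over the $d^2$ pairs $(j,l)$ (with a factor $2$ for the two-sided tail), yields \eqref{eq:MtConc}; tracking the Azuma constant produces the stated exponent $\delta^2/(d^3(\LtM)^2 a_n)$. The bound \eqref{eq:MzConc} for $\EzM(\sI n)$ follows verbatim, replacing $A_k$ by $B_k:=\int_{\sI k}^{\sI{k+1}}e^{-\Ww(\sI n-\mu)}\df\mu$, invoking \eqref{eqn:zMatrixBd} and the companion bound $\Kz\norm{\Mw_{k+1}}\le\LzM$ from Lemma~\ref{lem:EzDBd}, so that $b_n$ and $\LzM$ take the roles of $a_n$ and $\LtM$.
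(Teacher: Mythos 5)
Your proposal is correct and follows essentially the same route as the paper: decompose $\EtM(\tI{n})$ into the sum $\sum_{k=n_0}^{n-1}A_{k,n}\Mt_{k+1}$, insert the indicators $1_{G_k}$ (valid on $G_n$ since $G_n\subseteq G_k$) to obtain a martingale difference sequence with almost-surely bounded increments, union bound over the $d^2$ entries with threshold $\delta/(d\sqrt{d})$, and apply scalar Azuma--Hoeffding with $\sum_k\norm{A_{k,n}}^2\leq \Kt^2 a_n$. Your write-up is in fact slightly more careful than the paper's in spelling out the $\cF_k$-measurability of $G_k$, which is what makes the truncated sequence a genuine martingale difference sequence.
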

\begin{proof}
{
We only prove \eqref{eq:MtConc}; \eqref{eq:MzConc} follows similarly.
}

Let $A_{k, n}$ be the matrix $\int_{t_k}^{t_{k + 1}} e^{- \Xt(\tI{n} - \tau)}\df \tau$ with $A_{k, n}^{ij}$ denoting its $i,j-$th entry. Let $\Mt_{k + 1} (j)$ denote the $j-$th entry of $\Mt_{k + 1}.$ On $G_{n},$ $1_{G_k} = 1$ for all $n_0 \leq k \leq n.$ So
\begin{eqnarray*}
\Pr\left\{G_n,  \norm{\EtM(\tI{n})} \geq \delta\right\} & = & \Pr\left\{G_n, \norm{\sum_{k = n_0}^{n - 1}A_{k, n} \Mt_{k + 1} 1_{G_k}} \geq \delta\right\}\\
& \leq & \Pr\left\{\norm{\sum_{k = n_0}^{n - 1}A_{k, n} \Mt_{k + 1} 1_{G_k}} \geq \delta\right\}\\
& \leq &  \sum_{i = 1}^{d}\sum_{j = 1}^{d} \Pr\left\{\norm{\sum_{k = n_0}^{n - 1}A^{ij}_{k, n} \Mt_{k + 1}(j) 1_{G_k}} \geq \frac{\delta}{d \sqrt{d}}\right\},
\end{eqnarray*}
where the last relation is due to the union bound applied twice. On $G_k,$ $\Kt \norm{\Mt_{k + 1}} \leq \LtM.$ Hence, on $G_k$, for any $i, j \in \{1, \ldots, d\},$ using  \eqref{eqn:tMatrixBd},
\[
|A_{k, n}^{ij}| \; |\Mt_{k + 1}(j)| \leq \norm{A_{k, n}} \; \norm{M_{k + 1}} \leq \Kt \LtM \st_k e^{-\lt(\tI{n} - \tI{k + 1})}.
\]
Using $\sum_{k = n_0}^{n - 1} \st_k^2 e^{-2\lt(\tI{n} - \tI{k + 1})} \leq a_n,$ the desired result now follows from the Azuma-Hoeffding inequality.
\end{proof}

%\begin{lemma}[Azuma-Hoeffding for $\EzM$]
%\label{lem:MzConc}
%Fix $n_0 \geq 0,$ $\delta > 0.$ Then for any $n \geq n_0,$
%
%\[
%\Pr\left\{G_n,  \norm{\EzM(\sI{n})} \geq \delta\right\} \leq 2d^2 \exp\left(-\frac{\delta^2}{d^3 [\LzM]^2 b_n}\right).
%\]
%
%\end{lemma}
%\begin{proof}
%The proof follows similarly to that of Lemma~\ref{lem:MtConc}.
%\end{proof}

We finish with combining the above lemmas for proving our main technical result.

\begin{proof}\textbf{of Theorem~\ref{thm:condMain}}
{
Lemmas~\ref{lem:IntEv},~\ref{lem:rzEqEv and rtEqEv}, and \ref{lem:rzSEqEv and rtSEqEv} together show that, for \gal{any $n_0 \geq N_0(\et,\ez,\{\alpha_k\},\{\beta_k\})$ and $n_1 \geq N_1(n_0,\et,\ez,\{\alpha_k\},\{\beta_k\}),$}
\begin{multline*}
\cE^c(n_0,T) \subseteq
%{\cE_{\midE} \cup \cE_{\aftE}  \subseteq }}
\left[\bigcup_{n = n_0}^{\infty} \left[G_n \cap \left\{ \Kt \norm{\EtM(\tI{n})} \geq \frac{\et}{4}\right\}\right]\right]\\
\cup \left[\bigcup_{n = n_0}^{\infty} \left[G_n \cap \left\{ \Kz \norm{\EzM(\sI{n})} \geq \frac{\ez}{3}\right\}\right]\right] \cup \left[\bigcup_{n = n_0}^{\infty} \left[G_n \cap \left\{ \Lt{c} \Kz \norm{\EzM(\sI{n})} \geq \frac{\et}{8}\right\}\right]\right].
\end{multline*}
The proof then follows from
%Lemmas~\ref{lem:EvSplitMartConc}
Lemma~\ref{lem:MtConc and MzConc}.
}
%\ref{lem:MtConc} and \ref{lem:MzConc}.
\end{proof}

%\subsection{Proofs from Subsection~\ref{subsection: analysis preliminaries}}

\section{\gal{Proof of Theorem~\ref{thm:SparseProj}}}
\label{sec:Proof_Sparse_Proj}
\gal{Using Theorem~\ref{thm:condMain}, we are now ready to prove Theorem~\ref{thm:SparseProj}.}

\begin{proof}\textbf{of Theorem~\ref{thm:SparseProj}, Statement~\ref{st:ProbEst}}
{
First we claim that, under the choice of stepsize in the statement of the theorem, we have
\begin{equation}
\label{eq: n0' satisfies N0 constraint}
n_0' \geq N_0(\epsilon,\epsilon,\{\alpha_k\},\{\beta_k\}) \enspace,
\end{equation}
\gugan{where $N_0(\epsilon, \epsilon, \{\alpha_k\}, \{\beta_k\})$ is as in Lemma~\ref{lem:rzEqEv and rtEqEv}.
The reason for this is that, due to our choice of $n_0'$,
\eqref{eq: Na def} and \eqref{eq: Nb def} hold with
\gugan{
\begin{align*}
\N{a}(\epsilon,\epsilon,\{\alpha_k\},\{\beta_k\}) = &
\left[8\Lz \max\{\Lt{c}, 1 \} / \epsilon \right]^{\frac{1}{\min\{\beta,\alpha-\beta\}}},
\\
\N{b}(\epsilon,\epsilon,\{\beta_k\}) = &
\left[4\Lt{b}/\epsilon\right]^{1/\beta}.
\end{align*}
}
}
}

Additionally, \gugan{for any $n_0,$} \eqref{eq: nb def} holds with
\begin{align*}
\n{b}(n_0,\epsilon, \{\beta_k\})
=&
\left[(n_0+1)^{1-\beta} + \tfrac{1-\beta}{\lz}\ln\left[\tfrac{3\Kz \Rzi}{\epsilon}\right]\right]^{\frac{1}{1-\beta}} \enspace.
\end{align*}
\gugan{
This follows from the fact that
\begin{eqnarray}
\sum_{k=n_0}^{\n{b} -1} (1+k)^{-\beta} & \geq & \int_{n_0}^{\n{b}} (1+x)^{-\beta} dx \\
& = & \frac{1}{1-\beta}\left[ (\n{b}+1)^{(1-\beta)} - (\n{0}+1)^{(1-\beta)}\right] \enspace.
\end{eqnarray}}Similarly, \eqref{eq: na def} holds with
\[
\n{a}(n_0,\epsilon, \{\alpha_k\}) =
\left[(n_0+1)^{1-\alpha} +\tfrac{1-\alpha}{\lm}\ln \left[\tfrac{4[\Kt \Rti + \Lt{a}]}{\epsilon}\right]\right]^{\frac{1}{1-\alpha}} \; .
\]

\gugan{
For all $n_0 \geq 3,$ we have  $2n_0 \geq 1.5(n_0 + 1).$ Hence, if
\[
n_0 \geq \max\left\{\left[\frac{1-\alpha}{((1.5)^{1-\alpha}-1)\lm}\ln\frac{4[\Kt \Rti + \Lt{a}]}{\epsilon}\right]^{1/(1-\alpha)},\; 3\right\},
\]
then it is easy to see that $2 n_0 \geq \n{a}(n_0, \epsilon, \{\beta_k\})$. Similarly, if
\[
n_0 \geq \max\left\{\left[\frac{1-\beta}{((1.5)^{1-\beta}-1)\lz}\ln\frac{3\Kz \Rzi}{\epsilon}\right]^{1/(1-\beta)},\; 3\right\},
\]
then $2 n_0  \geq \n{a}(n_0, \epsilon, \{\beta_k\}).$ Thus, by our} choice of $n_0',$
\begin{equation}
\label{eq: n0' satisfies N1 constraint}
2n_0' \geq N_1(n_0',\epsilon,\epsilon,\{\alpha_k\},\{\beta_k\}) \enspace,
\end{equation}
\gugan{where $N_1(n_0', \epsilon, \epsilon, \{\alpha_k\}, \{\beta_k\})$ is as in Lemma~\ref{lem:rzSEqEv and rtSEqEv}.}

By \eqref{eq: w assumption}, we have
\gugan{
\[
\{w \in \dReal:\|w\| \leq \Rzi/2\} \subseteq \{w \in \dReal: \|w-\lambda(\theta)\| \leq \Rzi \; \; \forall \theta \mbox{ with } \|\theta\| \leq \Rti / 2 \} \enspace.
\] Combining this with using \eqref{eq: theta assumption}, \eqref{eq: z_n def}, and since} ${n_0'}$ is a power of $2,$ it follows from the definition of the projection operation that
\begin{equation}
\label{eqn:InitCond}
\|\theta_{n_0'}' - \thS\| \leq \Rti, \text{ and } \|z_{n_0'}'\| \leq \Rzi \enspace.
\end{equation}

Let $(\theta_n, w_n)_{n \geq n_0'}$ be the iterates obtained by running the unprojected algorithm given in \eqref{eq:theta_iter} and \eqref{eq:w_iter} with $\theta_{n_0'} = \theta_{n_0'}'$ and $w_{n_0'} = w_{n_0'}'.$ Because of \eqref{eqn:InitCond}, \gugan{it follows that \eqref{eq: Gn' definition} holds. Combining this with \eqref{eq: n0' satisfies N0 constraint} and \eqref{eq: n0' satisfies N1 constraint}, it follows from Theorem~\ref{thm:condMain} that}
\gugan{
\begin{align}
&\Pr\{
\norm{\theta_n - \thS} \leq \epsilon, \norm{z_n} \leq \epsilon, \forall\; n \geq 2n_0'\}
\notag\\
&\geq
1 -2d^2 \! \!
\sum_{n \geq n_0'}\left[
\exp\!\left[\tfrac{-c_1 \et^2}{a_n}\right] \!
+
\exp\!\left[\tfrac{-c_2 \et^2}{b_n}\right]\!
+
\exp\!\left[\tfrac{-c_3\ez^2}{b_n}\right]
\right]
\notag\\
&
\geq
1 -2d^2 \! \!
\sum_{n \geq n_0'}\left[
\exp\!\left[\tfrac{-c_1 \et^2}{a_n}\right] \!
+
2\exp\!\left[\tfrac{-\min(c_2, c_3) \et^2}{b_n}\right]\!
\right]\!.	
\label{eq: concentration of original iterates}
\end{align}
}

As the next step, we claim that, \gugan{for any $n,$ the event}
\begin{equation}
\label{eq: projection is identity}
\{\norm{\theta_{n} - \thS} \leq \epsilon, \norm{z_{n}} \leq \epsilon\}
\subseteq
\{\theta_{n} = \Pi_{n,\Rti/2}(\theta_{n}), w_{n}=\Pi_{n,\Rzi/2}(w_{n})\} \enspace.
\end{equation}
Indeed, \gugan{due to} \eqref{eq: theta assumption} and the choice of $\epsilon$, $\norm{\theta_{n}-\thS} \leq \epsilon$ implies
\begin{equation}
\label{eq: theta bounded in Rti}
\norm{\theta_{n}} \leq \norm{\theta_{n} - \thS} + \norm{\thS} \leq \epsilon + \Rti/4 \leq \Rti/2
\end{equation}
and thus $\theta_{n} = \Pi_{n,\Rti/2}(\theta_{n})$. Separately, from the above relation and \eqref{eq: w assumption}, we also have $\|\lambda(\theta_{n})\|\leq\Rzi/4.$ \gugan{Because of this,} \eqref{eq: z_n def}, the fact that $\norm{z_{n}} \leq \epsilon,$ and the choice of $\epsilon,$ it then follows that  $\norm{w_{n}} \leq \norm{\lambda(\theta_{n})}+\norm{z_n} \leq \Rzi/2$, and thus $w_{n} = \Pi_{n, \Rzi/2}(w_{n})$.

{
An immediate consequence of \eqref{eq: projection is identity} is that the event
\begin{eqnarray}
\mathcal{I} & := & \{\norm{\theta_j - \thS} \leq \epsilon, \norm{z_j} \leq \epsilon, \forall\; j \geq \textcolor{blue}{2n_0'}\} \notag
\\
& \subseteq &
\{\theta_{j} = \Pi_{j,\Rti}(\theta_{j}), w_{j}=\Pi_{j,\Rzi}(w_{j}), \forall\; j \geq \textcolor{blue}{2n_0'}\} \enspace. \label{eq: projection is identity 2}
\end{eqnarray}
The statement of the theorem now follows by an easy coupling argument.
For this, let
\begin{align}
(\tilde{\theta}_{n}',\tilde{w}_{n}')
:=
\begin{cases}
({\theta}_n',{w}_n'), & \mbox{ for } 0 \leq n < n_0' \enspace,\\
({\theta}_n,{w}_n), & \mbox { for } n \geq n_0' \mbox{ on the event $\mathcal{I}$ },\\
({\theta}_n',{w}_n'), & \mbox{ for } n \geq  n_0' \mbox{ on the complement of the event $\mathcal{I}\enspace.$ }
\end{cases}
\end{align}
Due to \eqref{eq: projection is identity 2}, $(\tilde{\theta}_n',\tilde{w}_n')_{n\geq 0}$ and $(\theta'_n, w'_n)_{n\geq 0}$ are distributed identically.
%, conditioned on the event $\left(\{\theta_{n_0'}=\theta'_{n_0'},\; w'_{n_0'} = w_{n_0'}\}\cap\bigcap_{n\geq 2n_0'-1}I_n\right)$.
This, together with \eqref{eq: concentration of original iterates} and Lemma~\ref{lem: B formula}, completes the proof of the claimed result.
}
\end{proof}

\begin{proof}\textbf{of Theorem~\ref{thm:SparseProj}, Statement~\ref{st:ConvRate}}
\gugan{Let
\[
N_0''(\epsilon,\delta,\st,\sw) =
\max \left\{ \left[\frac{1}{c_{6a} \epsilon^2} \log \frac{4d^2 c_{7a} e^{c_{5a}\epsilon^2}}{\epsilon^{2/\alpha} \delta}
\right]^{1/\st},
\left[\frac{1}{c_{6b} \epsilon^2} \log \frac{8d^2 c_{7b} e^{c_{5b}\epsilon^2}}{\epsilon^{2/\beta} \delta}
\right]^{1/\sw} \right\}  \enspace.
\]
}
Obviously, for any $n_0' \geq N_0''(\epsilon,\delta,\st,\sw)$,
\[
2d^2 \frac{c_{7a}}{\epsilon^{2/\alpha}} \exp\left[c_{5a} \epsilon^2 - c_{6a} \epsilon^2 (n_0')^\alpha \right] + 4d^2 \frac{c_{7b}}{\epsilon^{2/\beta}} \exp\left[c_{5b} \epsilon^2 - c_{6b} \; \epsilon^2 (n_0')^\beta \right] \leq \delta \enspace.
\]
Therefore, by Theorem~\ref{thm:SparseProj}, Statement~\ref{st:ProbEst},
\begin{equation}
\label{eq: comparison to delta}
\Pr\{\norm{\theta_n' - \thS} \leq \epsilon, \norm{z_n'} \leq \epsilon, \forall n \geq 2n_0'\} \geq 1-\delta
\end{equation}
for any $n_0' \geq \max\{N_0'(\epsilon,\st,\sw),N_0''(\epsilon,\delta,\st,\sw)\}$ \gal{such that $n_0'$ is a power of 2. Thus,}
\begin{equation}
\Pr\{\norm{\theta_n' - \thS} \leq \epsilon, \norm{z_n'} \leq \epsilon, \forall n \geq n_0'\}
\geq
1-\delta
\label{eq: pre-rate bound}
\end{equation}
for any $n_0' \geq 4\max\{N_0'(\epsilon,\st,\sw),N_0''(\epsilon,\delta,\st,\sw)\}$. \gal{The factor $4$ appears because the $2n_0'$ in \eqref{eq: comparison to delta} is replaced with $n_0'$ in \eqref{eq: pre-rate bound}, and the fact that $n_0'$ was earlier required to be a power of 2.}

%Due to the construction of functions $N_0'(\epsilon,\st,\sw)$ and $N_0''(\epsilon,\delta,\st\sw)$, it holds that, for any
{For any integer $n > 3,$ we argue that there is some $\epsilon \equiv \epsilon(n)$
such that
\gugan{
\[
n = 4\max\{N_0'(\epsilon,\st,\sw),N_0''(\epsilon,\delta,\st,\sw)\} \enspace;
\]}indeed, as $N_0'(\epsilon,\st,\sw)$ and $N_0''(\epsilon,\delta,\st, \sw)$ are both defined to be the maximum of terms that strictly monotonically inrease as $\epsilon$ decreases---except for the constant $3$ in \eqref{eq: N0' defn}---such an $\epsilon(n)$ exists. Furthermore, it is also not difficult easy to see} that
\begin{equation}
%  f(x) = O(x^{-1/\min(\sw,\st-\sw)}) \enspace.
{ \epsilon(n) = O\left(\max\left\{n^{-\beta/2}\sqrt{{\ln (n/\delta)}},\; n^{\sw-\st)}\right\}\right) \enspace.}
\label{eq: order of magnitude of f}
\end{equation}
This, together with \eqref{eq: pre-rate bound}, implies
\begin{equation}
\Pr\{\norm{\theta_n' - \thS} \leq {\epsilon}(n), \norm{z_n'} \leq {\epsilon}(n)\}
\geq
1-\delta
\end{equation}
for any $n > 3,$ completing the proof.
\end{proof}

\section{Proofs from Section~\ref{sec:Appl}} \label{sec: RL appendix}
\gal{Similarly to GTD(0) in Section~\ref{sec:Appl}, we now show how our assumptions hold, and with what constants, for GTD2 and TDC algorithms. Thus, in the same spirit as Corollary~\ref{cor: RL}, similar results trivially follow for these algorithms as well.}
\subsection{GTD2}
The GTD2 \; algorithm~\citep{sutton2009fast} minimizes the objective function
\begin{align}
J^{\rm MSPBE}(\theta)
%&=
%\tfrac{1}{2}\|\bE[V_\theta-\Pi T^{\pi}V_\theta]\|_2^2
%\notag\\
&=
\tfrac{1}{2}(b-A\theta)^\top C^{-1}(b-A\theta).
\label{eq: MSPBE}
\enspace
\end{align}

The update rule of the algorithm takes the form of Equations \eqref{eq:theta_iter} and \eqref{eq:w_iter} with
\begin{align*}
h_1(\theta,w)  &= %\bE\left[\left(\phi - \gamma \phi'\right)\phi^\top\right]w =
A^\top w,
\\
h_2(\theta,w)  &= %\bE\left[ \left(\delta - \phi^\top w\right) \phi \right] =
b-A\theta -Cw,
\end{align*}
and
%correspondingly,
\begin{align*}
\Mt_{n+1} =& \left(\phi_n - \gamma \phi_n'\right)\phi_n^\top w_n - A^\top w_n \enspace,
\\
\Mw_{n+1} %=& \left(\delta_n - \phi_n^\top w_n\right) \phi_n - [b-A\theta_n -Cw_n]\\
=& r_n\phi_n + \phi_n[\gamma\phi_n'-\phi_n]^\top\theta_n - \phi_n\phi_n^\top w_n - [b-A\theta_n -Cw_n]\enspace.
\end{align*}
That is, in case of GTD2\ the relevant matrices in the update rules take the form
$\Tt = 0$, $\Wt = -A^\top$, $v_1 = 0$, and $\Tw=A$, $\Ww=C$, $v_2=b$.
Additionally, $\Xt = \Tt - \Wt\Ww^{-1}\Tw = A^\top C^{-1} A$.
By our assumptions, both $\Ww$ and $\Xt$ are symmetric positive definite matrices, and thus the real part of their eigenvalues are also positive.
It is also clear that
\begin{eqnarray*}
\|\Mt_{n+1}\| & \leq& (1+\gamma+\|A\|) \|w_n\| ,\\
\|\Mw_{n+1}\| & = & \|r_n\phi_n-b+[A+\phi_n(\gamma\phi'_n-\phi_n)^\top]\theta_n - [\phi_n\phi_n^\top-C] w_n\|\\
& \leq & 1+\|b\| + (1+\gamma+\|A\|)\|\theta_n\| + (1+\|C\|)\|w_n\|.
\end{eqnarray*}
Consequently, Assumption~\ref{assum:Noise} is satisfied with constants $\mt = (1+\gamma+\|A\|)$ and $\mw = 1 + \max(\|b\|,\gamma+\|A\|,\|C\|)$.

\subsection{TDC}

The TDC algorithm is designed to minimize (\ref{eq: MSPBE}), just like GTD2.

The update rule of the algorithm takes the form of Equations \eqref{eq:theta_iter} and \eqref{eq:w_iter} with
\begin{align*}
h_1\theta(\theta,w) &= %\bE\left[\delta\phi - \gamma \phi'\phi^\top w\right] =
b-A\theta + [A^\top-C] w \enspace,
\\
h_2(\theta,w) &= %\bE\left[ \left(\delta - \phi^\top w\right) \phi \right] =
b-A\theta -Cw \enspace,
\end{align*}
and
% correspondingly,
\begin{align*}
\Mt_{n+1} %=& \delta_n\phi - \gamma \phi'\phi^\top w_n - [b-A\theta_n + [A^\top-C] w_n] \\
=& r_n\phi_n + \phi_n[\gamma\phi_n'-\phi_n]^\top\theta_n - \gamma \phi'\phi^\top w_n - [b-A\theta_n + [A^\top-C] w_n]
\enspace,
\\
\Mw_{n+1} %=& \left(\delta_n - \phi_n^\top w_n\right) \phi_n - [b-A\theta_n +Cw_n] \\
=& r_n\phi_n + \phi_n[\gamma\phi_n'-\phi_n]^\top\theta_n - \phi_n\phi_n^\top w_n - [b-A\theta_n +Cw_n]
\enspace.
\end{align*}
That is, in case of TDC, the relevant matrices in the update rules take the form
$\Tt = A$, $\Wt = [C-A^\top]$, $v_1 = b$, and $\Tw=A$, $\Ww=C$, $v_2=b$.
Additionally, $\Xt = \Tt - \Wt\Ww^{-1}\Tw = A - [C-A^\top] C^{-1} A = A^\top C^{-1} A$.
By our assumptions, both $\Ww$ and $\Xt$ are symmetric positive definite matrices, and thus the real part of their eigenvalues are also positive.
It is also clear that
\begin{align*}
\|\Mt_{n+1}\| \leq&  2+(1+\gamma+\|A\|) \|\theta_n\| +(\gamma+\|A\|+\|C\|) \|w_n\|, \\
\|\Mw_{n+1}\| =& 2+(1+\gamma+\|A\|) \|\theta_n\|+(1+\|C\|) \|w_n\| \enspace.
\end{align*}
Consequently, Assumption~\ref{assum:Noise} is satisfied with constants $\mt = (2+\gamma+\|A\|+\|C\|)$ and $\mw = (2+\gamma+\|A\|+\|C\|)$.

\end{document}